\documentclass{article}

\usepackage{microtype}
\usepackage{graphicx}
\usepackage{subfigure}
\usepackage{booktabs} 
\usepackage{makecell}
\usepackage{array}      
\usepackage{hyperref}
\newcommand{\mathbbm}[1]{\boldsymbol{#1}}
\usepackage{array} 

 \usepackage{multirow} 
 \usepackage{hhline}
\usepackage{booktabs}
\usepackage[table]{xcolor}
\usepackage{makecell}

\usepackage[accepted]{icml2026}
\usepackage{amsmath}
\usepackage{amssymb}
\usepackage{mathtools}
\usepackage{amsthm}
\usepackage{subfigure}
\usepackage{subcaption}
\usepackage[capitalize,noabbrev]{cleveref}
\usepackage[most]{tcolorbox}

\theoremstyle{plain}
\newtheorem{theorem}{Theorem}[section]

\newtheorem{lemma}[theorem]{Lemma}

\theoremstyle{definition}

\newtheorem{assumption}[theorem]{Assumption}
\theoremstyle{remark}

\tcolorboxenvironment{theorem}{
  colback=gray!15,
  colframe=gray!15,
  boxrule=0pt,
  arc=6pt,
  left=6pt,
  right=6pt,
  top=4pt,
  bottom=4pt
}
\tcolorboxenvironment{proposition}{
  colback=gray!15,
  colframe=gray!15,
  boxrule=0pt,
  arc=6pt,
  left=6pt,
  right=6pt,
  top=4pt,
  bottom=4pt
}
\tcolorboxenvironment{lemma}{
  colback=gray!15,
  colframe=gray!15,
  boxrule=0pt,
  arc=6pt,
  left=6pt,
  right=6pt,
  top=4pt,
  bottom=4pt
}
\tcolorboxenvironment{corollary}{
  colback=gray!15,
  colframe=gray!15,
  boxrule=0pt,
  arc=6pt,
  left=6pt,
  right=6pt,
  top=4pt,
  bottom=4pt
}
\tcolorboxenvironment{definition}{
  colback=gray!15,
  colframe=gray!15,
  boxrule=0pt,
  arc=6pt,
  left=6pt,
  right=6pt,
  top=4pt,
  bottom=4pt
}
\tcolorboxenvironment{assumption}{
  colback=gray!15,
  colframe=gray!15,
  boxrule=0pt,
  arc=6pt,
  left=6pt,
  right=6pt,
  top=4pt,
  bottom=4pt
}

\usepackage{amsfonts,amsmath,amsthm}
\usepackage{makecell}
\usepackage{xcolor}
\usepackage{cancel} 
\usepackage{enumitem}
\usepackage{framed}
\usepackage{xcolor}
\usepackage{bm}
\usepackage{amssymb}
\usepackage{caption}

\usepackage{comment}
\usepackage{amsfonts,amsmath}
\allowdisplaybreaks[4] 
\newcolumntype{C}[1]{>{\centering\arraybackslash}m{#1}}

\colorlet{shadecolor}{orange!15}

\definecolor{grey}{rgb}{0.5,0.5,0.5}

\usepackage[textsize=tiny]{todonotes}
\newcommand{\E}{\mathbb{E}}
\newcommand{\dif}{{\mathrm{d}}}
\renewcommand{\P}{{\mathbb{P}}}
\newcommand{\Q}{{\mathbb{Q}}}
\newcommand{\R}{{\mathbb{R}}}

\icmltitlerunning{\texttt{URGE}: Simple Approximation and Derivative Free Inference-Time Scaling}

\begin{document}

\twocolumn[
\icmltitle{Simple Approximation and Derivative Free Inference-Time Scaling for Diffusion Models via Sequential Monte Carlo on Path Measures}



\icmlsetsymbol{equal}{*}

\begin{icmlauthorlist}
\icmlauthor{Chenyang Wang}{equal,pku}
\icmlauthor{Weizhong Wang}{equal,fdu}
\icmlauthor{Yinuo Ren}{equal,comp}
\icmlauthor{Jose Blanchet}{comp,mse}
\icmlauthor{Yiping Lu}{iems}
\end{icmlauthorlist}
\icmlaffiliation{pku}{School of Mathematical Sciences, Peking University, Beijing, China}
\icmlaffiliation{fdu}{School of Mathematical Sciences, Fudan University, Shanghai, China}
\icmlaffiliation{iems}{Department of Industrial Engineering \& Management Sciences, Northwestern University, Evanston, IL, United States}
\icmlaffiliation{comp}{Institute for Computational \& Mathematical Engineering, Stanford University, Stanford, CA, United States}
\icmlaffiliation{mse}{Management Science \& Engineering, Stanford University, Stanford, CA, United States}

\icmlcorrespondingauthor{Yiping Lu}{yiping.lu@northwestern.edu}

\icmlkeywords{Machine Learning, ICML}

\vskip 0.3in
]



\printAffiliationsAndNotice{\icmlEqualContribution} 

\begin{abstract}
Diffusion-based generative models increasingly rely on inference-time guidance, adding a drift term or reweighting mixture of experts, to improve sample quality on task-specific objectives. However, most existing techniques require repeated score or gradient evaluations, introducing bias, high computational overhead, or both. We introduce \texttt{URGE}, approximation-free Resampling via Girsanov Estimation, a derivative-free inference-time scaling algorithm that performs pathwise importance reweighting via a Girsanov change of measure. Instead of computing gradient-based particle weights in previous work, \texttt{URGE} attaches a simple multiplicative weight to each simulated trajectory and periodically resamples. No score, no Hessian, and no PDE evaluation is required. We establish an equivalence between pathwise and particle-wise SMC: the Girsanov path weight admits a backward conditional expectation that recovers the previous particle-level weights, guaranteeing that both schemes produce the same approximation-free terminal law.
 Empirically, \texttt{URGE} outperforms existing inference-time guidance baselines on synthetic tests and diffusion-model benchmarks, achieving better generation quality, while being significantly simpler to implement and fully gradient-free.
\end{abstract}

\section{Introduction}
Modern generative models have emerged as a powerful paradigm for learning complex, high-dimensional data distributions. In particular, diffusion models \cite{ho2020denoising,sohl2015deep,song2019generative,song2020score} and flow-based methods \cite{zhang2018monge,lipman2022flow,albergo2022building,liu2022flow} provide a principled and scalable framework for generative modeling, achieving state-of-the-art performance across diverse applications, including video generation \cite{ho2022video}, protein design \cite{gruver2023protein}, and large-scale text generation \cite{li2022diffusion,nie2025large}. A unifying perspective underlying these approaches is their formulation in terms of stochastic differential equations (SDEs) \cite{song2020score,lipman2022flow,albergo2025stochastic,liu2022flow}. Concretely, generation can be viewed as simulating a carefully designed SDE
\[
\dif X_t = v(X_t,t)\dif t + V(t)\dif W_t, \quad X_0\sim \mathcal N(0,I_d)
\]
with an appropriate learned drift field $v:\R^d\times\R_{\geq0}\rightarrow \R^d$ and time-dependent diffusion $V:\R_{\geq0}\rightarrow \R$ coefficient transform a simple base distribution into the data distribution. \(W_t\) denotes a standard \(d\)-dimensional Brownian motion. Forward simulation of this SDE yields samples whose distribution approximates the target $p_{\rm data}$. 

While such models are typically trained to faithfully capture the data distribution  $p_{\rm data}$, deployment-time requirements often extend beyond unconditional sampling. Users seek to improve generation quality or enforce downstream objectives, \emph{e.g.}, physical validity or image-text alignment, without retraining the model. This motivates \emph{inference-time scaling}, which aims to steer generation at inference while reusing a pretrained model. Formally, task-oriented objectives can be expressed through a reward or preference function $\mathbf{r}(x)$ and inference-time scaling amounts to sampling from a target distribution that incorporates both data and task objectives: \(q(x) \propto p_{\rm data}(x) \mathbf{r}(x)\) \cite{uehara2025inference}. 


\begin{figure}
    \centering
    \includegraphics[width=\linewidth]{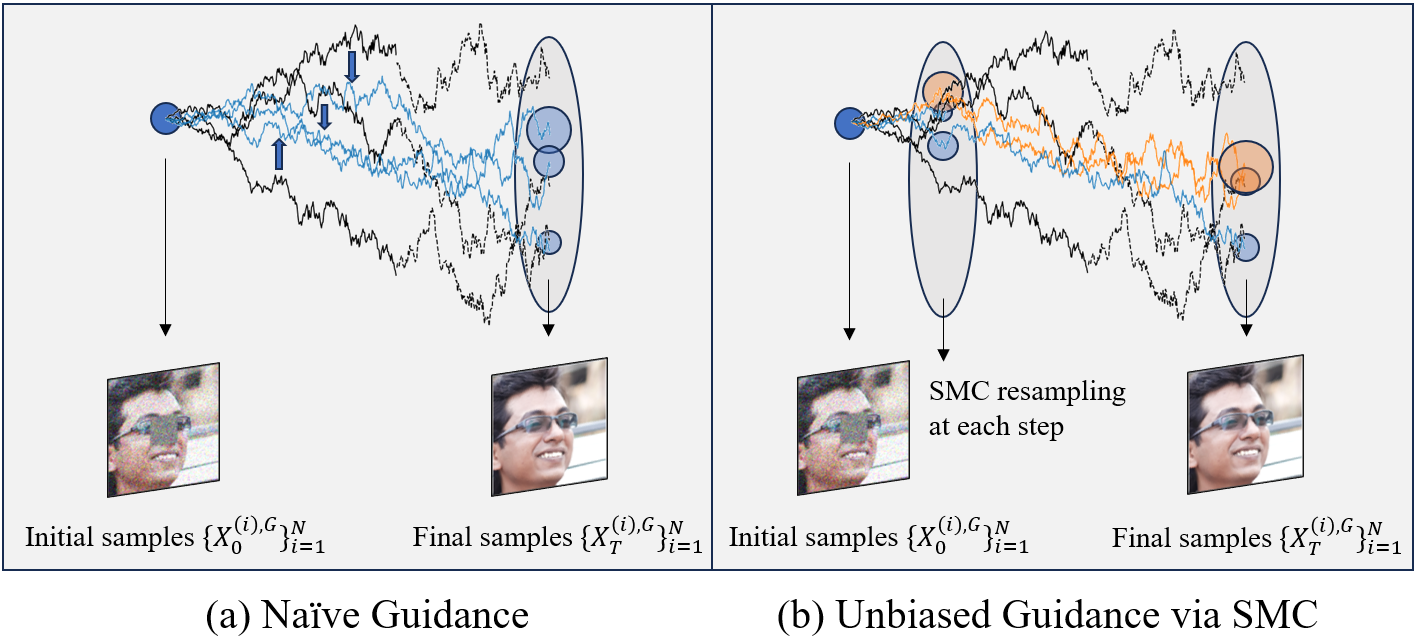}
    \caption{Step-wise resampling corrects suboptimal guided generation toward the reward-tilted distribution. (a) In Naive Guidance, blue trajectories follow the guided diffusion $\dif X_t^G = \bigl(v(X_t^G,t)+{\color{blue}V(t)^2\nabla_x G(X_t^G,t)}\bigr)\dif t+ V(t)\dif W_t,$ whereas black trajectories follow the base diffusion without \(G\). (b) In our algorithm, we reweight and resample the generation trajectory to obtain consistent posterior samples from the reward-tilted distribution.}
    \vspace{-0.1in}
\end{figure}

A common approach to incorporating such a reward during inference is guidance, which directly alters the generative stochastic dynamics to bias samples toward high-reward regions \cite{hong2024smoothed,wu2023conditional,yoon2024maximum,castillo2023adaptive,dhariwal2021diffusion,ho2022classifierfree,sadat2024rethinking,nichol2021glide,rombach2022high,Saharia2022Imagen,podell2023sdxl,gruver2023protein}. In the SDE formulation, guidance augments the drift of the sampling process with an additional gradient term derived from a guidance potential \(G:\R^d\rightarrow \R\), leading to the guided SDE
$$
d X_t^G = (v(X_t^G,t)+V(t)^2\nabla_x G(X_t^G,t))\dif t + V(t)\dif W_t,
$$
where the additional term $V(t)^2\nabla G$ steers generative trajectories toward regions favored by the downstream objective. 

Despite its empirical effectiveness, guidance leaves open a fundamental question: \emph{what distribution is the guided model actually sampling from?} From a mathematical standpoint, exact sampling from the tilted distribution \(q(x) \propto p_{\rm data}(x) \mathbf{r}(x)\) via a modified diffusion process requires the Doob's $h$-transform \(h(x,t)=\mathbb{E}\left[\mathbf{r}(X_T)\mid X_t=x\right]\) \cite{denker2024deft,tang2024stochastic,nguyen2025h,sabour2025test,li2024derivative,yang2024guidance,zhu2026training} of the original generative dynamics, which represents the expected future reward attainable from an intermediate state under the original dynamics. In this construction, the optimal guidance should be adjusted by \(\nabla_x \log h(x,t)\), therefore steering particles toward states favorable with respect to the expected final reward. However, constructing the correct $h$-function typically requires solving an associated backward Kolmogorov equation, which often requires extra training in high-dimensional generative models~\cite{domingo2024adjoint,havens2025adjoint,liu2025adjoint,albergo2024nets}. Consequently, practical guidance methods resort to inherently approximated strategies, applying local drift modifications while partially ignoring future trajectories. A natural question arises:
\vspace{-0.1in}
\begin{center}
\begin{tcolorbox}[colback=gray!15, colframe=gray!15, boxrule=0pt, arc=6pt, left=6pt, right=6pt, top=4pt, bottom=4pt]
\centering\textbf{Can we design an approximation-free inference-time scaling  procedure, even with suboptimal guidance?}
\end{tcolorbox}
\end{center}

To address this challenge, we introduce \texttt{\bfseries URGE} (\textbf{U}nbiased \textbf{R}esampling via \textbf{G}irsanov \textbf{E}stimation), an approximation-free resampling framework that eliminates the bias induced by suboptimal guidance. Recent works \cite{chen2025solving,ren2025driftlite,wu2023conditional,skreta2025feynman,lee2025debiasing,he2025rne,zhu2026power,anonymous2026surgeunbiased} have explored inference-time scaling by correcting defects in the generation probability \cite{fan2025physics} through reweighting generated samples, most commonly by integrating diffusion models with Sequential Monte Carlo (SMC) methods \cite{Martino2017_ESS, DelMoral2006_SMC, DelMoral2012_AdaptiveResampling,doucet2000sequential} methods to obtain approximation-free, approximation-free estimators of the target distribution by resampling from diverse trajectories from an underlying SDE. However, these approaches often rely on the infinitesimal generators of the SDEs and thus require access to higher-order derivatives of the reward function. Such requirements incur substantial computational overhead and may limit practical applicability to large-scale reward functions. \textbf{In contrast, rather than reweighting data after their being generated, \texttt{URGE} resamples the generation process itself by applying SMC directly to the generation trajectory (path measure) of the underlying diffusion.} 
This leads to a mixture-of-expert type inference-time scaling: simulate multiple trajectories under $\P^G$, reweight them using the path-space likelihood ratio, and resample. Even when the guidance $G$ is imperfect, this correction removes its bias, ensuring convergence of the empirical terminal distribution to the true posterior $q$.

Theoretically, we reveal a fundamental marginalized equivalence between resampling in path space (\texttt{URGE}) and particle space (\texttt{FK-Corrector}/\texttt{AFDPS}), demonstrating that lifting resampling from particle space to path space enables substantially more flexible weight construction and yields an easily implementable, fully derivative-free weighting scheme. 
Empirically, we validate the effectiveness of our inference-time scaling strategy on both synthetic datasets and real-world applications, showing that \texttt{URGE} achieves strong performance comparing with state-of-the-art baselines, while requiring no derivative information from the likelihood.

\paragraph{Contribution}
Building on this insight, our work makes the following contributions:
\vspace{-0.1in}
\begin{itemize}[leftmargin=*,itemsep=5pt]
\setlength{\itemsep}{0pt}
\setlength{\parsep}{0pt}
\setlength{\parskip}{0pt}
\item We propose a novel mixture-of-experts inference-time scaling method that integrates SMC with generative processes. Our method performs resampling over generation trajectories rather than over generated samples. Our resampling weights are computed via a Girsanov-type change of measure with respect to the filtration of each diffusion path that does not require reward derivatives, extending its applicability to neural reward models, such as those used in prompt to image generation.
\item We identify the equivalence that, conditional on the terminal values, the limit of the pathwise reweighting term recovers the infinitesimal reweighting of previous approaches (\texttt{FK-Corrector}/\texttt{AFDPS}), while \texttt{URGE} lifts the construction of resampling weights from the particle space to the path space and enables more flexible and accurate weight design and evaluation.
\item Empirically, we validate the effectiveness of our inference-time scaling strategy: \texttt{URGE} consistently outperforms state-of-the-art baselines on various tasks across synthetic and real-world tasks, while utilizing no derivative information of the likelihood.
\end{itemize}

\begin{table*}[h!]
\centering
\begin{tabular}{cc}
\toprule
\textbf{Method} & \textbf{Reweighting}\\
\midrule
\makecell[c]{\texttt{FK Steering}\footnotemark\\ \cite{singhal2025general}} &
$r(X_{t+\Delta t},t+\Delta t)-r(X_{t},t)$\\

\hline

\makecell[c]{\texttt{FK Corrector}\\ \cite{skreta2025feynman}\\ \texttt{AFDPS}\\ \cite{chen2025solving}}
& $\begin{aligned}
    \int_t^{t+\Delta t} \bigg(\partial_\tau r &-\frac{V^2(t)}{2}(\Delta_x r + \|\nabla_x r\|_2^2) + \nabla_x r ^\top( v +V^2(t) \nabla_x G )\\
    &+ V^2(t) \nabla_x \log p_t ^\top \nabla_x (G-r) + V^2(t) \Delta_x G \bigg)(X_\tau^G,\tau) \dif \tau
    \end{aligned}$\\ 
\hline
\makecell[c]{\textbf{\texttt{URGE} (Ours)}} & 
$\begin{aligned}
    \underbrace{\int_t^{t+\Delta t} -V(\tau)\nabla_x G(X_\tau^G,\tau)^\top \dif W_\tau 
    - \frac{1}{2}\int_t^{t+\Delta t} V^2(\tau)\|\nabla_x G(X_\tau^G,\tau)\|_2^2\dif \tau}_{\text{\color{gray}Path Information}} + \int_t^{t+\Delta t} \dif r(X_\tau^G,\tau)
\end{aligned}$ \\
\hline
\end{tabular}
\caption{Comparison of reweighting functions induced by different steering methods. Unlike prior approaches, \texttt{URGE} uniquely constructs reweighting functions with respect to the full generative process.}
\vspace{-0.2in}
\label{table:reweight}
\end{table*}
\section{Methodology: approximation-free Guidance via Girsanov Sequential Monte Carlo}

We consider the probability mass on the measurable space $(\Omega_T, \mathcal{F}_T)$, where \(\Omega_T = C_0([0,T],\R^d)\) and $\mathcal{F}_T$ denotes the $\sigma$-algebra generated by the generative process $X_{0:T}$, which we take as reference and obeys the following SDE:
\[
\dif X_t = v(X_t,t)\dif t + V(t)\dif W_t, \quad X_0 \sim \mathcal{N}(0,I_d),
\]
where \(v:\R^d \times [0,T] \to \R^d\) is a drift field, \(V:[0,T]\to(0,\infty)\) is a time-dependent diffusion schedule, and \(W_t\) is a standard \(d\)-dimensional Brownian motion under the probability measure \(\P\). We denote $p(x, t)$ as the marginal distribution at time $t$, \emph{i.e.}, the law of \(X_t\) under \(\P\), and \(p_{\rm data} = p_T\) as the final generated distribution. 

\subsection{Inference-Time Reward-Tilting}

At inference time, requirements often extend beyond unconditional sampling. To accommodate such requirements, one typically specifies a task-dependent reward or reweighting function \( r : \R^d \times [0,T] \to \R \) that encodes the desired inference objectives, satisfying \(r(x,T)\equiv \mathbf{r}(x)\). This induces a distribution path:
\begin{equation}
    q(x, t) \propto p(x, t) e^{r(x, t)},
    \label{eq:reward_tilted_path}
\end{equation}
with the target terminal distribution being the reward-tilted posterior $q(x,T) \propto p_{\rm data}(x) e^{\mathbf{r}(x)}$. The function \(r(x,t)\) serves as a time-dependent interpolation between the initial reward $r(x,0)$, usually chosen as $0$, and the terminal reward \(\mathbf r(x)\), gradually incorporating the reward signal into tilting. In inverse problems, a convenient choice is to set \(r(x,t)\equiv \mathbf{r}(x)\) for all \(t\). In contrast, for prompt engineering, a common choice is \(r(x,t) = g(t)\mathbf r(x)\), where \(g(t)\) is a scalar interpolation with \(g(0)=0\) and \(g(T)=1\).

However, since this posterior is up to an unknown normalization constant and typically defined implicitly via \(p_{\rm data}\) and \(\mathbf r\), direct sampling from \(q\) is in general intractable. A simple yet effective method is guidance
$X^{G}=\{X_t^{G}\}_{t\in[0,T]}$ evolves according to
\begin{equation}
\dif X_t^{G}
=
\bigl(v(X_t^{G},t) + V(t)^2 \nabla_x G(X_t^{G},t)\bigr)\dif t
+ V(t)\dif W_t,
\label{eq:guided_sde}
\end{equation}
where $X_0^{G} \sim \mathcal{N}(0,I_d)$ and $G:\R^d\times [0,T]\to\R$ is a guidance potential, which encodes inference-time preferences, such as classifier, alignment scores, or surrogate rewards. Theoretically speaking, incorporating this drift modification induces a change of measure over sample paths and consequently alters the path measure of generated trajectories (\emph{cf.}~\eqref{eq:girsanov}). However, \textbf{the altered path measure induced by guidance does not necessarily coincide with the desired reward-tilted path measure}~\cite{chidambaram2024does}.

\subsection{Infinitesimal Perspective: Particle-Space SMC}

To address this challenge, previous works (\texttt{FK Corrector}~\cite{skreta2025feynman}/ \texttt{AFDPS}~\cite{chen2025solving}) propose particle-space SMC correctors that can be understood as an correction to the infinitestimal generator via reweighting instances after being generated. Let $\mathcal L^G_t$ denote the generator of the guided diffusion~\eqref{eq:guided_sde} acting on smooth test function  $\varphi\in C_b^2(\R^d)$:
\[
\begin{aligned}
    \mathcal L^G_t\varphi(x) &= \frac{1}{2} \mathrm{tr}\big(V^2(t) \nabla^2 \varphi(x)\big) \\
    &+ \big(v(x,t)+V^2(t)\nabla_x G(x,t)\big)^\top \nabla\varphi(x).
\end{aligned}
\] 
The particle-space SMC idea is to augment propagation by an additional state-dependent potential $w_{\texttt{AFDPS}}(\cdot,t)$ so that the resulting law evolves toward the reward-tilted marginals.

\begin{theorem}
\label{theorem:AFDPSgenerator}
    The reward-tilted distribution path corresponds to the following generator:
    \begin{equation}
        \mathcal L_t^\mathrm{eff} = \mathcal L^G_t + w_{\texttt{AFDPS}}(\cdot,t),
    \end{equation}
    where the potential is given by:
    \[
    \small
    \begin{aligned}
        &w_{\texttt{AFDPS}}(x,t) = \bigg(\partial_t r -\tfrac12 V^2(t)\big(\Delta_x r+\|\nabla_x r\|^2_2\big)  \\
        &+ \nabla_x r^{\top}\big(v+V^2(t)\nabla_x G\big) + V^2(t)\Delta_x G \\
        &+ V^2(t)\nabla_x\log p_t(x)^{\top}\nabla_x(G-r)\bigg)(x,t).
    \end{aligned}
    \] 
\end{theorem}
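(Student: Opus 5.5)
The plan is to interpret ``corresponds to the generator $\mathcal L_t^{\mathrm{eff}}$'' in the Feynman--Kac sense. Write the unnormalized tilted density as $\tilde q(x,t) = p(x,t)e^{r(x,t)}$. The claim is then that $\tilde q$ solves the forward (Fokker--Planck) equation with killing/cloning potential,
\[
\partial_t \tilde q = (\mathcal L^G_t)^*\tilde q + w_{\texttt{AFDPS}}\,\tilde q ,
\]
where $(\mathcal L^G_t)^*\rho = -\nabla\cdot\big((v+V^2\nabla G)\rho\big) + \tfrac12 V^2\Delta\rho$ is the formal $L^2$ adjoint of $\mathcal L^G_t$. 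Equivalently, in weak form, $\frac{\dif}{\dif t}\int \varphi\,\tilde q = \int (\mathcal L^G_t\varphi + w_{\texttt{AFDPS}}\varphi)\,\tilde q$ for all $\varphi\in C_b^2$. The normalized path $q(\cdot,t)$ then satisfies the same equation with $w_{\texttt{AFDPS}}$ replaced by $w_{\texttt{AFDPS}}-\int w_{\texttt{AFDPS}}\,q$. This centring only affects the normalizing constant and is exactly what SMC resampling handles. I will assume enough smoothness and decay of $p,r,G$ to differentiate and integrate by parts, with $p>0$ so that $\nabla\log p$ makes sense.

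The key input is that the base marginal solves $\partial_t p = -\nabla\cdot(vp) + \tfrac12 V^2\Delta p$. Differentiating the product gives $\partial_t\tilde q = e^r(\partial_t p + p\,\partial_t r)$. Next I expand $(\mathcal L^G_t)^*\tilde q$ term by term using the product rule:
\[
\begin{aligned}
\nabla\cdot(v\,p e^r) &= e^r\big(\nabla\cdot(vp) + p\,v^\top\nabla r\big),\\
\nabla\cdot(V^2\nabla G\,p e^r) &= V^2 e^r\big(p\Delta G + \nabla p^\top\nabla G + p\,\nabla r^\top\nabla G\big),\\
\Delta(p e^r) &= e^r\big(\Delta p + 2\nabla p^\top\nabla r + p(\Delta r+\|\nabla r\|^2)\big).
\end{aligned}
\]

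Subtracting, the terms $-\nabla\cdot(vp)+\tfrac12V^2\Delta p$ cancel against $\partial_t p$ by the Fokker--Planck equation. Factoring out $\tilde q$ and writing $\nabla p/p=\nabla\log p$, I expect to get
\[
\frac{\partial_t\tilde q-(\mathcal L^G_t)^*\tilde q}{\tilde q}
= \partial_t r + v^\top\nabla r + V^2\nabla r^\top\nabla G + V^2\Delta G + V^2\nabla\log p^\top\nabla G - V^2\nabla\log p^\top\nabla r - \tfrac12V^2(\Delta r+\|\nabla r\|^2).
\]
Regrouping, this is exactly $w_{\texttt{AFDPS}}$: the $\nabla r$ terms combine into $\nabla r^\top(v+V^2\nabla G)$, and the $\nabla\log p$ terms combine into $V^2\nabla\log p^\top\nabla(G-r)$. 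Finally, pairing with a test function $\varphi$ and integrating by parts gives the weak-form statement for $\mathcal L_t^{\mathrm{eff}}$.

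The computation itself is routine. The main obstacle is sign and factor bookkeeping, in particular the cross term $2\nabla p^\top\nabla r$ from the Laplacian combining with the $\nabla p^\top\nabla G$ term from the guidance drift. A secondary point is justifying the integration by parts, i.e.\ that the boundary terms vanish. For that I would impose growth conditions ensuring $\tilde q\,\nabla\varphi$ and $\varphi\,\nabla\tilde q$ decay at infinity, and otherwise state the result at the level of the PDE.
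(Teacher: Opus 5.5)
You have the right computation: with the base Fokker--Planck equation, $\partial_t(p_te^{r})-(\mathcal L^G_t)^*(p_te^{r})$ divided by $p_te^{r}$ equals exactly $w_{\texttt{AFDPS}}$. The weak form and the centring for the normalized path then follow as you say.

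The paper reaches the same identity by a different bookkeeping. It starts from the \texttt{AFDPS} equation for $Q_t=p_te^{r}$ under the drift $v+V^2\nabla r$ (Lemma B.1 of \cite{chen2025solving}). It adds an extra $\partial_t r$ term, justified separately by the time dependence of $r$. It then switches to the drift $v+V^2\nabla G$ by adding and subtracting $V^2\nabla\cdot\big((\nabla G-\nabla r)Q_t\big)$ and expanding with $\nabla Q_t=Q_t(\nabla\log p_t+\nabla r)$. The link between the zeroth-order coefficient and the growth of the particle weights is delegated to Lemmas B.2 and B.4 of the same reference.

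Your one-pass derivation has three advantages. It is self-contained, it produces the $\partial_t r$ term automatically, and it states precisely what \emph{corresponds to the generator} means: the dual identity $\frac{\dif}{\dif t}\int\varphi\,\tilde q=\int(\mathcal L^G_t\varphi+w_{\texttt{AFDPS}}\varphi)\,\tilde q$ together with the normalization.

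The paper's add-and-subtract route buys something different. It exhibits $w_{\texttt{AFDPS}}$ as the $G=r$ potential $\partial_t r+\nabla r^\top v+\tfrac12V^2(\Delta r+\|\nabla r\|^2)$ plus the guidance correction $V^2\big[\Delta(G-r)+\nabla(G-r)^\top(\nabla\log p_t+\nabla r)\big]$. This makes it immediate that $G=r$ recovers \texttt{AFDPS} and $G=0$ gives \texttt{FK-Corrector}.
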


We refer readers to Appendix~\ref{sec:AFDPS reweighting function} for more details.

\subsection{Our Path Perspective: \texttt{URGE} via Path Weights}

Rather than debiasing the guidance by reweighting individuallu generated instants, we propose to operate on the generation trajectories, \emph{i.e.}, reweighting and resampling over path space. This lifting provides additional flexibility in constructing importance weights. 

Let $\P^{G}$ denote the path measure induced by the guided SDE, defined on the measurable space $(\Omega_T, \mathcal{F}_T)$. For any time $t$, by Girsanov's theorem, the Radon-Nikodym derivative between the uncontrolled measure $\P$ and the guided measure $\P^{G}$ is
\begin{equation}
\small
\begin{aligned}
    \frac{\dif \P}{\dif \P^{G}}(X^{G}_{0:t})
    \propto 
    \exp\Bigg(
    - \int_0^t V(s)\nabla_x G(X_s^{G},s)^\top \dif W_s
    \\- \frac{1}{2}\int_0^t V^2(s) \bigl\|\nabla_x G(X_s^{G},s)\bigr\|_2^2 \dif s
    \Bigg).
\end{aligned}
\label{eq:girsanov}
\end{equation}
Then we define a new path measure \(\Q\) on the space of continuous trajectories, via the Radon-Nikodym derivative 
$$
    \frac{\dif \Q}{\dif \P}(X_{0:t}) \propto \exp\bigl(r(X_t,t) - r(X_0,0)\bigr).
$$ 
Under this construction, the terminal marginal of \(\Q\) exactly matches the reward-tilted posterior \(q(x, t) \propto p(x, t)  e^{\mathbf r(x)}\), when evolving from the initial distribution $q(x,0) \propto p(x, 0) e^{r(x, 0)}$.

\footnotetext{As per the official implementation of FK-Steering~\cite{singhal2025general}: \url{https://github.com/zacharyhorvitz/Fk-Diffusion-Steering/}}

The discrepancy between the reward-tilted distribution and the guided trajectory distribution can be characterized using the chain rule for Radon-Nikodym derivatives. For a generation trajectory $X^{G}_{0:T}$, the importance weight that corrects samples drawn from the guided generation trajectory distribution $\P^{G}$ toward the target distribution $\Q$ is
\begin{equation}
\small
\label{eq:URGEweight}
    \begin{aligned}
        &\frac{\dif \Q}{\dif \P^{G}}(X^G_{0:t})
        =
        \frac{\dif \Q}{\dif \P}(X^G_{0:t})
        \frac{\dif \P}{\dif \P^{G}}(X^G_{0:t})\\
        \propto&
        \exp\bigg(
        r(X_t^{G},t) - r(X_0^{G},0)- \int_0^t V(s)\nabla_x G(X_s^{G},s)^\top \dif W_s
        \\&\qquad \qquad- \frac12 \int_0^t V^2(s) \|\nabla_x G(X_s^{G},s)\|_2^2 \dif s
        \bigg).
    \end{aligned}
\end{equation}
We apply an SMC estimator that resamples generation trajectories in proportion to the importance weight $\frac{\dif \Q}{\dif \P^{G}}$, \emph{i.e.}, rather than generating a single instance, we generate multiple trajectories and select and resample higher-quality generations according to their importance weights. 

\subsection{\texttt{URGE} Implementation}
\label{sec:urge_implementation}

Although our resampling strategy could perform resampling after an arbitrarily long time horizon, in practice we apply SMC resampling at every step to achieve better performance. Generally speaking, the pathwise weight associated with \texttt{URGE} is defined by the following function, where \(X^{G}_{s:t}\) denotes the trajectory of particle \(X^{G}\) on an arbitrary time interval $[s, t]$:
\begin{equation}
    \small
    \begin{aligned}
        w^{\texttt{URGE}}_{s,t}&(X^G_{s:t}) = \exp\bigg(\int_s^t -V(\tau)\nabla_x G(X_\tau^G,\tau)^\top \dif W_\tau \\
        &- \frac{1}{2} \int_s^t V^2(\tau)\|\nabla_x G(X_\tau^G,\tau)\|_2^2\dif \tau + \int_s^t \dif r(X_\tau^G,\tau)\bigg),
    \end{aligned}
    \label{eq:URGEweight_pathwise}
\end{equation}
where $\dif r(X_\tau^G, \tau)$ denotes the It\^o differential of the reward.

Specifically, during inference time, we discretize the time interval $[0,T]$ into a schedule $\{t_k\}_{k=0}^{K}$, where $t_k = k \Delta t$ and $\Delta t_k = t_{k+1} - t_k$. On each interval $[t_k, t_{k+1}]$,  we simulate $N$ trajectories $\{X_t^{(i),G}\}_{i=1}^{N}$ following the guided SDE~\eqref{eq:guided_sde}, via a single update.
It is crucial to use sufficiently small time step $\Delta t_k$ to ensure stable importance weighting. With small time steps, the incremental change in the state $X^{(i),G}_t$ remains limited, and the associated Radon-Nikodym derivative varies more smoothly across particles, since the reward $r$ exhibits only minor variation. This consequently reduces the variance of the resampling procedure. 

At each step, particles are resampled according to their importance weights, yielding an approximation of the target trajectory $\Q$. The simplest way to compute the weight of path $X^{(i),G}_t$ on a small time interval $[s,t]$ is given by
\begin{equation}
\small
\begin{aligned}
    \beta_{s,t}^{(i)}
    =\exp\Big(
    &r(X_{t}^{\tiny(i),G},t) - r(X_s^{\tiny(i),G},s)\\
    &- V(s)\nabla_x G(X_s^{\tiny (i),G},s)^\top \sqrt{t-s}\,\xi_s^{(i)}\\
    &- \frac12 V^2(s) \|\nabla_x G(X_s^{\tiny(i),G},s)\|_2^2 (t-s)
    \Big),   
\end{aligned}
\label{eq:importance_weight}
\end{equation}
which matches the Euler-Maruyama (EM) discretization of the likelihood ratio above~\eqref{eq:URGEweight}. 
After computing $\beta_{t,t+\Delta t}^{(i)}$, the proposed states $X_{s,t}^{(i),G}$ are resampled with probabilities proportional to \(\beta_{s,t}^{(i)}\).  From the Feynman-Kac perspective, such resampling implements a birth-death mechanism, in which large-weight particles are replicated while low-weight ones are eliminated~\cite{DelMoral2004_FeynmanKac, DelMoral2000_BranchingIPS}. The overall algorithm is summarized in Algorithm~\ref{alg:urge}. 

\begin{figure*}[!ht]
    \begin{minipage}{0.95\textwidth}
    \vspace{-0.2in}
    \begin{algorithm}[H]
    \caption{\texttt{URGE}: \textbf{U}nbiased \textbf{R}esampling via \textbf{G}irsanov \textbf{E}stimation}
    \begin{algorithmic}[1]
    \REQUIRE 
    Number of particles $N$, time grid \(\{t_i\}_{i=0}^K\) with \(t_0=0,t_K=T\); 
    initial particles $\{X_0^{(i),G}\}_{i=1}^N$ with $X_0^{(i),G}\sim \mathcal{N}(0,I_d)$; 
    drift $v$; guidance potential $G$; 
    variance schedule $V(t)$; reward function $r$.
    \FOR{$k = 0$ to $K-1$}
        \FOR{$i = 1$ to $N$}
            \STATE {Simulate particle dynamics (via Euler-Maruyama), where $\xi_{t_k}^{(i)} \sim \mathcal{N}(0,I_d)$}:
            \[
            X_{t_{k+1}}^{(i),G} 
            = X_{t_k}^{(i),G} 
            + \big(v(X_{t_k}^{(i),G},t_k) + V({t_k})^2 \nabla_x G(X_{t_k}^{(i),G},t_k)\big)(t_{k+1}-t_k)
            + V({t_k})\sqrt{t_{k+1}-t_k}\xi_{t_k}^{(i)}.\]
            \STATE {Weight update}: $
            \begin{aligned}
                \quad \beta_{t_k,t_{k+1}}^{(i)}
                = \exp\Big(
                &r(X_{t_{k+1}}^{(i),G},t_{k+1}) - r(X_{t_k}^{(i),G},t_k)
                - V(t_k)\nabla_x G(X_{t_k}^{(i),G},t_k)^\top \sqrt{t_{k+1}-t_k}\xi_{t_k}^{(i)}\\
                &- \tfrac12 V(t_k)^2 \|\nabla_x G(X_{t_k}^{(i),G},t_k)\|_2^2 (t_{k+1}-t_k)
                \Big).
            \end{aligned}
            $
        \ENDFOR
        \STATE {Resampling}:
        \(
        \{X_{t_{k+1}}^{(i),G}\}_{i=1}^N 
        \sim 
        \textrm{Categorical}\bigg(
            \{X_{t_{k+1}}^{(i),G}\}_{i=1}^N,
            \left\{\frac{\beta_{t_k,t_{k+1}}^{(i)}}{\sum_{j=1}^N \beta_{t_k,t_{k+1}}^{(j)}}\right\}_{i=1}^N
        \bigg).
        \)
    \ENDFOR
    \end{algorithmic}
    \label{alg:urge}
    \end{algorithm}
    \vspace{-0.25in}
    \end{minipage}
\end{figure*}
    
We summarize different importance weight constructions in Table~\ref{table:reweight}. Compared to \texttt{FK Steering}~\cite{singhal2025general}, our method incorporates additional pathwise informative terms, which ensures that the resulting resampling procedure is approximation-free. 
Moreover, compare to \texttt{FK Corrector}~\cite{skreta2025feynman} and \texttt{AFDPS}~\cite{chen2025solving} that require access to the score function or the Laplacian of \(r\), our formulation avoids such quantities entirely. This property makes our approach particularly suitable for practical scenarios with black-box reward functions, where such derivatives are unavailable or expensive to compute. 
Also by incorporating the It\^o integral in the weight, \texttt{URGE} encapsulates stochastic path information into resampling, in contrast to being full deterministic in previous methods, leading to better performance.
The most important advantage of \texttt{URGE} is that \textbf{\texttt{URGE} lifts importance weighting from generated samples to full diffusion paths, which provides greater flexibility in designing importance weights.} In principle, one may adopt any numerical scheme to the path weight~\eqref{eq:URGEweight_pathwise}, instead of the EM scheme used in~\eqref{eq:importance_weight} for more accurate approximations. For example, when computational cost is limited, one may adopt sparser discretization schemes complemented with higher-order schemes for comparable performance.

\section{Theoretical Analysis: Equivalence between Path and Particle Spaces}
\label{sec:Equivalent Margin between Girsanov SMC and AFDPS}

In this section, we establish the equivalence between the pathwise reweighting induced by \texttt{URGE} and the particle-level reweighting employed by \texttt{FK-Corrector} \cite{skreta2025feynman} and \texttt{AFDPS} \cite{chen2025solving} at the level of generators that characterize the infinitesimal evolution of the generation Markov process. We show that marginalizing the \texttt{URGE} path-space importance weights yields an effective particle-level reweighting, which exactly recovers the weighting structure used by \texttt{FK-Corrector} and \texttt{AFDPS}, confirming the correctness of our method.

We first establish the approximation-freeness of \texttt{URGE}. Specifically, we show that reweighting step preserves the mean of weighted empirical measure (See Appendix~\ref{sec:Continuous_Time_Limit_of_URGE} for further discussion). Let $\P$ be the law of the base diffusion with generator $L$, and let $\overline{\P}$ denote the law of \texttt{URGE}. For $\varphi\in C_b^{2,1}(\R^d\times \R)$ and \(t_k\leq t<t_{k+1}(0\leq k\leq K-1)\), define \(M_t^N(\varphi):= \sum_{i=1}^N \frac{\beta_{t_k,t}^{(i)}}{\sum_{j=1}^{N}\beta_{t_k,t}^{(j)}}\varphi(X_t^{(i),G},t)\).

\begin{theorem}[Continuous approximation-freeness]\label{thm:continuous_approximation-freeness}
    For any test function \(\varphi\in C_b^{2,1}(\mathbb R^d\times \mathbb{R})\), if the distribution at time \(s\) satisfies $\E_{\overline{\P}}[M_s^N(\varphi)] = \E_{\P}[\varphi(X_s,s)]$ , then when \(\Delta t\rightarrow 0\) and \(N\rightarrow \infty\), for any time \(t\), we have
    \[
    \begin{aligned}
        \E_{\overline{\P}}[M_t^N(\varphi)] = \frac{\E_{\P}[e^{r(X_t,t)-r(X_s,s)} \varphi(X_t,t)]}{\E_{\P}[e^{r(X_t,t)-r(X_s,s)}]}.
    \end{aligned}
    \]
    Specifically, by taking \(s=0\), at terminal time \(T\),
    \[
    \begin{aligned}
        \E_{\overline{\P}}[M_T^N(\varphi)] = \frac{\E_{\P}[e^{\mathbf r(X_T)-r(X_0,0)} \varphi(X_T,T)]}{\E_{\P}[e^{\mathrm{r}(X_T)-r(X_0,0)}]}.
    \end{aligned}
    \]
\end{theorem}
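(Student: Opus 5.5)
The plan is to reduce the statement to a single-particle identity on path space and then apply Girsanov's theorem on the interval $[s,t]$. As $N\to\infty$, the self-normalized weighted empirical measure $M_t^N(\varphi)$ converges (law of large numbers for i.i.d.\ particles propagated between resampling times) to a ratio of expectations under the guided law. Specifically, conditionally on the particle cloud at time $s$, each particle evolves independently under $\P^G$ on $[s,t]$ and carries weight $\beta_{s,t}^{(i)}$. Hence
\[
M_t^N(\varphi)\;\longrightarrow\;\frac{\E_{\P^G}\bigl[w^{\texttt{URGE}}_{s,t}(X^G_{s:t})\,\varphi(X^G_t,t)\bigr]}{\E_{\P^G}\bigl[w^{\texttt{URGE}}_{s,t}(X^G_{s:t})\bigr]} .
\]
Here $X^G_s$ is distributed according to the weighted law at time $s$, which the hypothesis identifies with the $\P$-law of $X_s$. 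The limit $\Delta t\to0$ is used to replace the Euler--Maruyama weight \eqref{eq:importance_weight}, composed over grid points, by the continuous-time weight \eqref{eq:URGEweight_pathwise}. The stochastic integral and $\dif r$ terms telescope, and the EM sums converge in probability to the It\^o integrals.

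Next, I factor the weight as
\[
w^{\texttt{URGE}}_{s,t}=e^{r(X^G_t,t)-r(X^G_s,s)}\cdot \frac{\dif\P}{\dif\P^G}\Big|_{\mathcal F_{s,t}}.
\]
The second factor is exactly the Girsanov density \eqref{eq:girsanov} restricted to $[s,t]$, conditional on the starting point. The sign is correct: under $\P^G$ the process $\tilde W_\tau=W_\tau+\int_s^\tau V\nabla_x G\,\dif u$ is a Brownian motion under $\P$. The $\dif r$ integral telescopes to $r(X_t,t)-r(X_s,s)$ because It\^o's formula applies to $r\in C^{2,1}$ pathwise. Changing measure then gives, for the numerator,
\[
\E_{\P^G}[w\,\varphi]=\E_{\P}\bigl[e^{r(X_t,t)-r(X_s,s)}\varphi(X_t,t)\bigr],
\]
and similarly for the denominator with $\varphi\equiv1$. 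This uses that the two path measures share the same initial law at time $s$, which is guaranteed by the hypothesis $\E_{\overline{\P}}[M_s^N(\varphi)]=\E_\P[\varphi(X_s,s)]$ for all test $\varphi$. Their ratio is the claimed formula. Taking $s=0$ with $r(\cdot,T)=\mathbf r$ yields the terminal statement.

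The main obstacle is justifying the interchange of limits. This means showing that the Girsanov exponential is a true martingale, so that the density integrates to one; I would assume a Novikov-type bound, e.g.\ bounded $V\nabla_xG$. It also means controlling the self-normalized estimator, since expectation of a ratio is not the ratio of expectations for finite $N$. I would handle the latter by proving $L^2$ convergence of numerator and denominator separately, using boundedness of $\varphi$ and uniform integrability of the weights, with the denominator bounded away from zero. The equality then holds in the joint limit $N\to\infty$, $\Delta t\to0$. Intermediate resamplings between $s$ and $t$ are handled inductively: resampling at each grid point preserves the weighted measure in expectation, and the multiplicative weights over consecutive intervals compose into $w^{\texttt{URGE}}_{s,t}$ by the Markov property.
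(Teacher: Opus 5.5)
Your proposal is correct in outline, but it takes a genuinely different route from the paper.

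\textbf{The paper's route.} It first passes to a continuous-time jump--diffusion limit ($\Delta t\to0$ at fixed $N$, with Poissonized resampling of bounded intensity $\Lambda$). It then applies It\^o's formula to the normalized weights $\overline w_t^{(i)}=w_t^{(i)}/\sum_j w_t^{(j)}$ and to $M_t^N(\varphi)$, and checks that resampling jumps leave $\E_{\overline{\P}}[M_t^N(\varphi)]$ unchanged. After bounding $\sum_i(\overline w_t^{(i)})^2$ and assuming $\operatorname{Var}(A_N),\operatorname{Var}(B_N),\operatorname{Var}(C_N)=O(1/N)$ to decouple $\E[M_t^N(\varphi)M_t^N(a_t)]$, it obtains the nonlinear equation $\tfrac{\dif}{\dif t}m_t(\varphi)=m_t(\mathcal L\varphi+V^2\nabla r^\top\nabla\varphi+\varphi a_t)-m_t(\varphi)m_t(a_t)$. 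Finally it verifies that $\gamma_t(\varphi)/\gamma_t(1)$ solves the same equation and concludes by uniqueness. Girsanov is never invoked; the identity is checked at the level of generators. This fits the generator-based comparison with \texttt{AFDPS} elsewhere in the paper and also covers random resampling times.

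\textbf{Your route.} You send $N\to\infty$ first. The SMC law of large numbers and the composition property of normalized Feynman--Kac flows identify the limit as a self-normalized importance-sampling ratio under $\P^G$. You then read $w^{\texttt{URGE}}_{s,t}$ as $e^{r(X_t,t)-r(X_s,s)}\,\dif\P/\dif\P^G$, so a single change of measure finishes the proof. What this buys:
\begin{itemize}
\item it explains \emph{why} the weight is right;
\item it avoids the delicate normalized-weight estimates and the uniqueness step;
\item it treats the resampling at every grid point in Algorithm~\ref{alg:urge} directly. In the paper's framework that schedule would be intensity $\Lambda\sim1/\Delta t$, outside its bounded-intensity assumption.
\end{itemize}

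\textbf{A simplification.} You can drop the step where Euler--Maruyama sums converge to It\^o integrals. The one-step weight \eqref{eq:importance_weight} is exactly $e^{r(X_{t_{k+1}},t_{k+1})-r(X_{t_k},t_k)}$ times the ratio of the base and guided Euler--Maruyama Gaussian transition densities. So at fixed $\Delta t$ the $N\to\infty$ limit is the exactly tilted base Euler--Maruyama chain, and $\Delta t\to0$ is needed only to pass from that chain to the diffusion.

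\textbf{A caveat.} The hypothesis fixes only $\E_{\overline{\P}}[M_s^N(\varphi)]$. Pushing the time-$s$ cloud through the nonlinear ratio map requires more: the weighted empirical measure at time $s$ must converge to $p_s$ in probability. You should state this as an assumption. It is automatic for $s=0$ with i.i.d.\ $\mathcal N(0,I_d)$ initialization, and the paper implicitly needs it too through its $O(1/N)$ variance assumption.
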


The proof is given in Appendix~\ref{app:continuous_approximation-freeness}.
This theorem ensures that marginalizing the \texttt{URGE} path measure does not introduce systematic distortion in the terminal-time distribution. 

Next, we analyze the terminal-time marginal of \texttt{URGE} pathwise measure and show that it induces the same particle-level reweighting as \texttt{FK-Corrector} and \texttt{AFDPS}. We first study the marginalized generator of \texttt{URGE} via exploiting the Feynman--Kac duality by analyzing the backward value function on the \(i\)-th particle:
\[
    \psi(x,t) := \E\!\left[
M_t^N(\varphi)
\;\Big|\;
X_t^{(i),G}=x
\right].
\] 
This formulation allows us to identify the effective marginalized particle space generator of the reweighted process via the associated Kolmogorov backward equation.
\begin{lemma}[Effective Marginalized Generator]\label{thm:reweighted_generator}
Define infinitesimal generator $\mathcal L^G_t$ of the process $(X_t^G)_{t\in[0,T]}$ for $\varphi\in C_b^{2,1}(\R^d\times \R)$. Assume that for each $(t,x)$, the instantaneous intensity of the weight exists and is given by {\small\[\lambda(x,t) := \lim_{h \downarrow 0} \dfrac{1}{h}\Bigg(\mathbb{E}_{\mathbb{P}^G}\Big[w_{t-h,t}^{\texttt{URGE}}(X^{(i),G}_{t-h:t})\Big| X_t^{(i),G}=x \Big] - 1\Bigg).\]}
Then as \(N\rightarrow \infty\), the value function $\psi(x,t)$ satisfies the generalized Kolmogorov backward equation:
\begin{equation*}
    \partial_t \psi(x,t) + \mathcal L^G_t \psi(x,t) + (\lambda(x,t)-\bar\lambda_t)\psi(x,t) = 0.
\end{equation*}
where \(\bar\lambda_t=\E_{\overline{\P}}\!\left[\lambda(X_s^G,s)\right]\) is a normalizing factor.
\end{lemma}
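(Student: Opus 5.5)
The plan is to derive the backward equation from a one-step, small-time expansion of $\psi$. I will condition on the configuration at an intermediate time and send $h\downarrow 0$, treating $\psi$ the way one treats a Feynman--Kac value function.

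\textbf{Step 1: large-$N$ reduction.} As $N\to\infty$, the self-normalizing denominator $\frac1N\sum_j \beta^{(j)}_{t_k,t}$ concentrates by the law of large numbers (propagation of chaos) on its mean under the current particle law. The $i$-th particle's contribution to $M^N_t(\varphi)$ then becomes a single-particle functional: an unnormalized path weight divided by a deterministic normalizer $Z_t=\E_{\overline{\P}}[w]$. Concretely, $\psi(x,t)$ reduces to a conditional expectation of $\varphi(X_T^G,T)\,W_{t,T}/Z_{t,T}$ given $X^G_t=x$, where $W_{t,T}$ is the product of incremental URGE weights \eqref{eq:URGEweight_pathwise}. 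Because resampling preserves weighted means in the limit (Theorem~\ref{thm:continuous_approximation-freeness}), the resampling events can be replaced by these multiplicative weights.

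\textbf{Step 2: semigroup identity over $[t-h,t]$.} By the Markov property of the pair (particle, accumulated weight),
\[
\psi(y,t-h)=\frac{Z_{t-h,t}^{-1}}{1}\,\E_{\P^G}\!\Big[w^{\texttt{URGE}}_{t-h,t}(X^{G}_{t-h:t})\,\psi(X_t^G,t)\,\Big|\,X^G_{t-h}=y\Big].
\]
I will expand the right-hand side to first order in $h$. I tower through $X^G_t$ and use the definition of $\lambda$ in the form
\[
\E[w_{t-h,t}\mid X_t=x]=1+h\lambda(x,t)+o(h).
\]
This gives
\[
\E[w\,\psi(X_t,t)\mid X_{t-h}=y]=\E[\psi(X_t,t)\mid X_{t-h}=y]+h\,\E[\lambda\psi(X_t,t)\mid X_{t-h}=y]+o(h).
\]
The first term equals $\psi(y,t)-h\,\partial_t\psi\cdots$; more precisely, by It\^o/Dynkin for the guided SDE \eqref{eq:guided_sde} it is $\psi(y,t)+h\,\mathcal L^G_t\psi(y,t)+o(h)$ plus the time shift. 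The normalizer expands analogously:
\[
Z_{t-h,t}=1+h\bar\lambda_t+o(h),
\]
with $\bar\lambda_t$ the mean of $\lambda$ under the current (reweighted) particle law.

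\textbf{Step 3: collect the $O(h)$ terms.} Subtract $\psi(y,t)$ and divide by $h$, using $\psi(y,t-h)=\psi(y,t)-h\,\partial_t\psi(y,t)+o(h)$. This yields
\[
-\partial_t\psi=\mathcal L^G_t\psi+\lambda\psi-\bar\lambda_t\psi,
\]
which is the stated equation.

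\textbf{Main obstacle.} The delicate point is Step 2's interchange. The definition of $\lambda$ conditions on the \emph{endpoint} $X_t=x$, whereas the semigroup step conditions on the \emph{start point}. Moreover, $w$ and the increment of $\psi$ are correlated through the It\^o integral $\int V\nabla G^\top dW$, so a cross term $h\,V^2\nabla G^\top\nabla\psi$ could appear. My approach is to work throughout with the endpoint-conditioned (backward) form, writing
\[
\E[w\,\psi(X_t)\mid X_{t-h}]=\E\big[\psi(X_t)\,\E[w\mid X_t,X_{t-h}]\big].
\]
I would then argue that the dependence on $X_{t-h}$ is $o(1)$ as $h\to 0$. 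Under that argument, any covariance contribution is absorbed into $\lambda$ as defined by the lemma's assumption. I would accept this as the content of the hypothesis that $\lambda$ exists, together with uniform integrability of the weights, which justifies the limit $h\downarrow0$ and the LLN in Step 1.
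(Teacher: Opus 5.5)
Your plan follows the same route as the paper: a one-step Feynman--Kac recursion for the unnormalized value function, then division by a deterministic normalizer with $\partial_t Z=-\bar\lambda_t Z$. (You step over $[t-h,t]$ and the paper over $[t,t+h]$; that difference is immaterial.) The gap is exactly the interchange you flag, and the hypothesis that $\lambda$ exists cannot close it. Combining It\^o's formula for $r$ with the Girsanov term gives
\[
\dif w_\tau=w_\tau\big(a\,\dif\tau+V\nabla_x(r-G)^\top\dif W_\tau\big),\qquad a:=\partial_t r+v^\top\nabla_x r+\tfrac12V^2\big(\Delta_x r+\|\nabla_x r\|_2^2\big).
\]
So, conditioning on the start point,
\[
\E_{\P^G}\big[w_{t-h,t}\,\psi(X^G_t,t)\,\big|\,X^G_{t-h}=y\big]=\psi(y,t)+h\big(\partial_t\psi+\mathcal L^G_t\psi+V^2\nabla_x(r-G)^\top\nabla_x\psi+a\,\psi\big)(y,t)+o(h).
\]
The cross term involves $\nabla_x(r-G)$, not $\nabla_x G$, because $\dif r$ carries its own $V\nabla_x r^\top\dif W$. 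Given both endpoints, $\E[w\mid X_{t-h},X_t]-\E[w\mid X_t]=\nabla_x(r-G)^\top(X_t-X_{t-h})+O(h)$. This is of order $\sqrt h$, not merely $o(1)$. Paired with $\psi(X_t)-\psi(X_{t-h})\approx\nabla_x\psi^\top(X_t-X_{t-h})$, it produces the $O(h)$ term $hV^2\nabla_x(r-G)^\top\nabla_x\psi$. A term that is first order in $\nabla_x\psi$ cannot be absorbed into a multiplicative potential, and an assumption about an endpoint-conditioned limit says nothing about it.

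Moreover, the endpoint conditioning is exactly what makes $\lambda\neq a$. If the particles at time $t-h$ have law $q_{t-h}\propto p_{t-h}e^{r}$, a short duality computation gives $\lambda=a+V^2q_t^{-1}\nabla_x\cdot\big(q_t\nabla_x(G-r)\big)$. Using $\nabla_x\log q_t=\nabla_x\log p_t+\nabla_x r$, this is $w_{\texttt{AFDPS}}$. The correct pointwise equation for $\psi$ is $\partial_t\psi+\mathcal L^G_t\psi+V^2\nabla_x(r-G)^\top\nabla_x\psi+(a-\bar a_t)\psi=0$. It differs from the lemma's equation by $V^2q_t^{-1}\nabla_x\cdot\big(q_t\,\psi\,\nabla_x(r-G)\big)$. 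This divergence-form term integrates to zero against $q_t$, which is also why $\bar a_t=\bar\lambda_t$. Pointwise, however, it vanishes only when $G=r$; then the weight is $\exp\int a\,\dif\tau$ with no martingale part, and $\lambda=a$.

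A quick sanity check confirms this: the single-particle value function cannot depend on the population marginal, but $\lambda$ does, through $V^2\nabla_x\log p_t^\top\nabla_x(G-r)$. The paper's proof does not supply the missing step either. It writes $w_{t,t+h}=1+h\lambda(x,t)+o(h)$ while conditioning on the start point $X_t=x$, making the same identification silently.

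What can actually be proved along these lines is the lemma for $G=r$. For general $G$, only the integrated identity $\int q_t\big(\partial_t\psi+\mathcal L^G_t\psi+(\lambda-\bar\lambda_t)\psi\big)\dif x=0$ holds. Given the forward equation for $q_t$, this is the statement $\frac{\dif}{\dif t}\int q_t\psi_t\,\dif x=0$.
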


Theorem \ref{thm:reweighted_generator} shows that the effective marginalized generator on the particle space induced by path-space reweighting takes the form
\(\mathcal L^{\mathrm{eff}}_t := \mathcal L^G_t + \lambda(\cdot,t).\) This generator can be interpreted as the generator of a mixture-of-experts SMC estimator on the particle space, augmented by the potential $\lambda(\cdot,t)$, with normalization \(\bar\lambda_t\) coincides with that in \texttt{AFDPS}~\cite{chen2025solving}. In other words, \textbf{path-space resampling is therefore equivalent to particle-space resampling with weights given by expectation of the path-space weight conditioned on the terminal position}.
 
We observe a structural similarity between the generators induced by our proposed \texttt{URGE} method (Lemma \ref{thm:reweighted_generator}) and \texttt{FK-Corrector}/\texttt{AFDPS} (Theorem \ref{theorem:AFDPSgenerator}). In both cases, the original guided-diffusion generator is augmented with an additional reweighting term. To establish marginalized equivalence between \texttt{URGE} and \texttt{FK-Corrector}/\texttt{AFDPS}, it therefore suffices to verify that these reweighting terms coincide. We now derive the explicit form of $\lambda(\cdot,t)$ for \texttt{URGE}. 
The following theorem establishes that the instantaneous growth rate $\lambda(\cdot,t)$ coincides exactly with the particle reweighting function $w_{\texttt{AFDPS}}(\cdot,t)$ used in \texttt{FK-Corrector}/\texttt{AFDPS}.

\begin{theorem}[Marginalized Equivalence between \texttt{URGE} and \texttt{FK-Corrector/AFDPS}]\label{thm:Girsanov_SMC_and_AFDPS}
Let $p_s \in C^2(\R^d)$ denote the time-marginal density of \eqref{eq:guided_sde} at time \(s\). The convergence rate holds:
\begin{equation}
\hspace*{-0.8em}
\lim_{s\rightarrow t}\frac{\mathbb{E}_{\P^G}\left[w^{\texttt{URGE}}_{s,t}(X^G_{s:t}) \big| X_t^G=x\right]-1}{t-s} = w_{\texttt{AFDPS}}(x,t).\label{main_equation_of_same_marginal}
\end{equation}
\end{theorem}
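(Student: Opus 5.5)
The plan is to compute the conditional expectation on the left of \eqref{main_equation_of_same_marginal} by a short-time expansion in $h:=t-s$. I would write the weight as $w^{\texttt{URGE}}_{s,t}=e^{A_{s,t}}$, where
\[
A_{s,t}=\int_s^t \bigl(\nabla_x r-\nabla_x G\bigr)^\top V\,\dif W_\tau+\int_s^t\Bigl(\partial_\tau r+\nabla_x r^\top b+\tfrac12V^2\Delta_x r-\tfrac12V^2\|\nabla_x G\|_2^2\Bigr)\dif\tau .
\]
Here $b:=v+V^2\nabla_x G$ is the guided drift, and the It\^o differential $\dif r$ has been expanded via It\^o's formula along \eqref{eq:guided_sde}.

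Since $A_{s,t}=O(h^{1/2})$, the first step expands $e^{A}=1+A+\tfrac12A^2+O(h^{3/2})$. This reduces the problem to computing $\E_{\P^G}[A\mid X_t^G=x]$ and $\E_{\P^G}[A^2\mid X_t^G=x]$ to order $h$.
\begin{itemize}
\item The $\dif\tau$-part of $A$ contributes $h\,(\partial_t r+\nabla_x r^\top b+\tfrac12V^2\Delta_x r-\tfrac12V^2\|\nabla_x G\|^2)(x,t)+o(h)$ by continuity.
\item In $A^2$ only the quadratic variation of the martingale part survives, giving $\tfrac12 V^2\|\nabla_x r-\nabla_x G\|_2^2\,h$. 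To leading order this is unaffected by conditioning, since the conditional covariance of $V\,\Delta W$ given the endpoint is still $V^2 h\,I+o(h)$.
\end{itemize}

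The main obstacle is the conditional mean of the stochastic integral given the \emph{terminal} point, because conditioning on $X_t^G=x$ destroys the martingale property. I would handle it by Bayes' formula with the Gaussian one-step transition. Given $X_t^G=x$, the backward (time-reversed) law of the guided diffusion gives
\[
X_s^G = x-\bigl(b-V^2\nabla_x\log p_t\bigr)(x,t)\,h+V\sqrt h\,\eta+o(h).
\]
Here $p_t$ is the guided marginal, which is where the hypothesis $p_s\in C^2$ enters. Since $V\Delta W=X_t^G-X_s^G-b\,h+o(h)$, this yields $\E[V\Delta W\mid X_t^G=x]=V^2\nabla_x\log p_t(x)\,h+o(h)$.

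The integrand must also be frozen at the correct point. It is evaluated at $X_s^G$, not at $x$, so I would Taylor expand $\nabla_x f(X_s^G)=\nabla_x f(x)-\nabla_x^2 f(x)(x-X_s^G)+\dots$ for $f=r-G$. Combined with $x-X_s^G\approx V\Delta W$, this produces the extra It\^o-to-backward correction $-V^2\Delta_x f\,h$. Hence
\[
\E\Bigl[\int_s^t\nabla_x f^\top V\,\dif W\,\Big|\,X_t^G=x\Bigr]=V^2\bigl(\nabla_x\log p_t^\top\nabla_x f-\Delta_x f\bigr)(x,t)\,h+o(h).
\]

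Finally I would collect the terms. Taking $f=r-G$ produces the $\nabla\log p_t^\top\nabla(G-r)$ structure and the $V^2\Delta_x G$ term. Adding the drift terms and $\tfrac12V^2\|\nabla r-\nabla G\|^2-\tfrac12V^2\|\nabla G\|^2$, I expect the cross terms to combine into the $\nabla_x r^\top(v+V^2\nabla_x G)$, $-\tfrac12V^2(\Delta_x r+\|\nabla_x r\|^2)$ and $V^2\nabla_x\log p_t^\top\nabla_x(G-r)$ terms of $w_{\texttt{AFDPS}}$ in Theorem~\ref{theorem:AFDPSgenerator}. The terms $\nabla_x r^\top b$ and $\tfrac12V^2\Delta_x r$ partially cancel against the backward corrections.

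This bookkeeping of signs, especially the sign convention for $\nabla\log p_t$ in the reversed drift, is where I expect slips, so I would check it first on a one-dimensional Ornstein--Uhlenbeck example with quadratic $r,G$, where every quantity is explicit. The remainders would be justified as $o(h)$ under $C_b^{2,1}$ regularity of $r,G$ and $C^2$ regularity of $p$, uniformly in a neighbourhood of $x$. Dividing by $h$ and letting $s\to t$ then gives \eqref{main_equation_of_same_marginal}.
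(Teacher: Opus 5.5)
There is a genuine gap in the step you deferred to ``bookkeeping'', and it has two layers. The first is a sign slip. Your reversed-drift formula $X_s^G=x-(b-V^2\nabla_x\log p_t)(x,t)\,h+V\sqrt h\,\eta+o(h)$ is correct. Combined with $V\Delta W=X_t^G-X_s^G-bh+o(h)$, however, it gives $\E[V\Delta W\mid X_t^G=x]=-V^2\nabla_x\log p_t(x)\,h+o(h)$, not $+$. You can check this on $\dif X=\dif W$, $X_0\sim\mathcal N(0,1)$, where $\E[\Delta W\mid X_t=x]=hx/(1+t)=-\nabla\log p_t(x)\,h$. The conditional mean of the stochastic integral is therefore $-V^2(\nabla_x\log p_t^\top\nabla_x f+\Delta_x f)\,h+o(h)$, which agrees with what the paper gets from the backward integral. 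With your sign, the choice $f=r-G$ would produce $\nabla\log p_t^\top\nabla(r-G)$, not the $\nabla(G-r)$ structure you claim.

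The second layer is more serious: even with the sign fixed, the terms do not combine. Adding the drift part, $\tfrac12V^2\|\nabla_x r-\nabla_x G\|^2$, and the corrected stochastic-integral term gives
\[
\partial_t r+\nabla_x r^\top(v+V^2\nabla_x G)-\tfrac12V^2\Delta_x r+V^2\Delta_x G+V^2\nabla_x\log p_t^\top\nabla_x(G-r)+\tfrac12V^2\|\nabla_x r\|^2-V^2\nabla_x r^\top\nabla_x G .
\]
This equals $w_{\texttt{AFDPS}}+V^2\nabla_x r^\top\nabla_x(r-G)$, and the residual vanishes only when $\nabla_x r^\top\nabla_x(r-G)\equiv0$, e.g.\ $G=r$. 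Your proposed one-dimensional sanity check would expose this. Take $v=0$, $V=1$, $G=0$, $r=ax$, $X_0\sim\mathcal N(0,1)$. The exact limit is $ax/(1+t)+a^2/2$, whereas $w_{\texttt{AFDPS}}(x,t)=ax/(1+t)-a^2/2$.

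So if $p_t$ is read literally as the marginal of the guided SDE under $\P^G$, as the statement says, the identity fails in general, and no rearrangement of your terms will produce it. The paper closes the argument with a different ingredient. In the time reversal it takes the law of $X_\tau^G$ to be the tilted density $Q_\tau\propto p_\tau e^{r(\cdot,\tau)}$. Here $p_\tau$ is the \emph{base} marginal, which is what $p_t$ means in $w_{\texttt{AFDPS}}$ (it comes from the Fokker--Planck equation for $Q_t$). In effect, the particles are treated as distributed according to the reweighted SMC law. Then $\nabla_x\log Q_t=\nabla_x\log p_t+\nabla_x r$, and the extra term $V^2\nabla_x(G-r)^\top\nabla_x r$ turns $\tfrac12V^2\|\nabla_x r\|^2-V^2\nabla_x r^\top\nabla_x G$ into $-\tfrac12V^2\|\nabla_x r\|^2$, which is exactly $w_{\texttt{AFDPS}}$.

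Your mechanics are otherwise equivalent to the paper's: the Gaussian Bayes step plus the $-V^2\Delta_x f$ endpoint correction play the role of its Haussmann--Pardoux backward integral. For the proof to go through, you must explicitly adopt this interpretation of the conditioning law.
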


This result indicates that \texttt{URGE} and \texttt{FK-Corrector}/\texttt{AFDPS} are in fact simulating the same underlying generative process, with \texttt{URGE} \textbf{lifting the construction of resampling importance weights from the particle space to the path space. This reformulation enables more flexible weight construction and facilitates a derivative-free inference-time scaling strategy.}

\section{Experiment}
\label{sec:Experiment}

In this section, we evaluate the empirical performance of \texttt{URGE} on reward-tilting tasks: a synthetic Gaussian Mixture Model, text-to-image generation and inverse problems. We show that our derivative-free \texttt{URGE} method consistently outperforms the discrete-time steering baseline \texttt{FK-steering}~\cite{singhal2025general} across all tasks, and matches the performance of the higher-order, derivative-based \texttt{FK-Corrector}~\cite{skreta2025feynman}/\texttt{AFDPS}~\cite{chen2025solving} when derivatives are available, demonstrating the empirical effectiveness of our approach. Regarding computing resources, all experiments included in this paper were conducted on NVIDIA A100 GPUs.



\subsection{Gaussian Mixture Model}


We evaluate \texttt{URGE} on a synthetic benchmark: a 30-dimensional Gaussian Mixture Model (GMM) with 40 equally weighted components. The target density is $p_{\text{data}}(x) = \frac{1}{40}\sum_{i=1}^{40}\mathcal{N}(x;\mu_{i},40I)$, where mean vectors $\mu_{i}$ are sampled uniformly from $[-40,40]^{30}$. In this setting, both the score \(\nabla \log p_t\) and the potential \(\log p_t\) admit closed-form expressions, allowing performance to be evaluated without confounding score-estimation error. 

The experiment adopts a reward-tilted setup. Particles are initialized as \(X_0\sim\mathcal{N}(0,I)\) and propagated using the guided dynamics in Eq.~\eqref{eq:guided_sde}. We impose a quadratic reward, which is detailed together with other experimental settings in Appendix~\ref{additional_set}.

Several guidance strategies are compared: Pure Guidance (PG) without control drift or resampling; \texttt{AFDPS}, which applies ESS-based resampling to PG dynamics; \texttt{FK-Steering} and \texttt{URGE}, both using \(G=r\) with resampling; and a variance-controlling guidance (VCG) \cite{ren2025driftlite} scheme that learns a control drift via regularized weighted least squares on top of \texttt{AFDPS}. 

\begin{table}[!t]
    \centering
    \begin{tabular}{l c c c c}
    \toprule
    \textbf{Method}              & \textbf{MMD}& \textbf{SWD}  & \textbf{Mean}  & \textbf{Cov Frob} \\
    \midrule
    \texttt{PG}                  & 0.17 & 1.68 & 7.14 & 469.09 \\
    \texttt{AFDPS}               & 0.10 & 1.04 & 5.07 & 335.19 \\
    \texttt{AFDPS+VCG}           & 0.08 & 0.83 & 4.13 & 246.61 \\
    \texttt{FK-Steering}         & 0.07 & 0.85 & 4.86 & 198.20 \\
    \texttt{\bfseries URGE}      & \textbf{0.06} & \textbf{0.62} & \textbf{3.20} & \textbf{181.31} \\
    \bottomrule
    \end{tabular}
    \caption{Performance of different strategies on the Gaussian mixture toy example. Lower values indicate closer alignment with the analytical reference.}
    \label{tab:gmm_results}
    \vspace{-0.2in}
\end{table}

Performance is measured using Maximum Mean Discrepancy (MMD)\cite{smola2006maximum}, Sliced Wasserstein Distance (SWD) \cite{bonneel2015sliced}, mean vector \(\ell_2\) error (Mean $L^2$), and covariance Frobenius error (Cov~Frob). In table~\ref{tab:gmm_results}, \texttt{URGE} is tested on 5 random seeds and we report the averages. \texttt{URGE} consistently attains the lowest error across all metrics, demonstrating superior fidelity to the ground-truth distribution compared to all baselines. Notably, \texttt{URGE} surpasses \texttt{AFDPS+VCG}, achieving more robust convergence across various guidance regimes without the need for explicit variance control.

\begin{figure}[t]
  \centering
  \begin{minipage}[t]{\linewidth}
    \centering
    \includegraphics[width=0.88\linewidth]{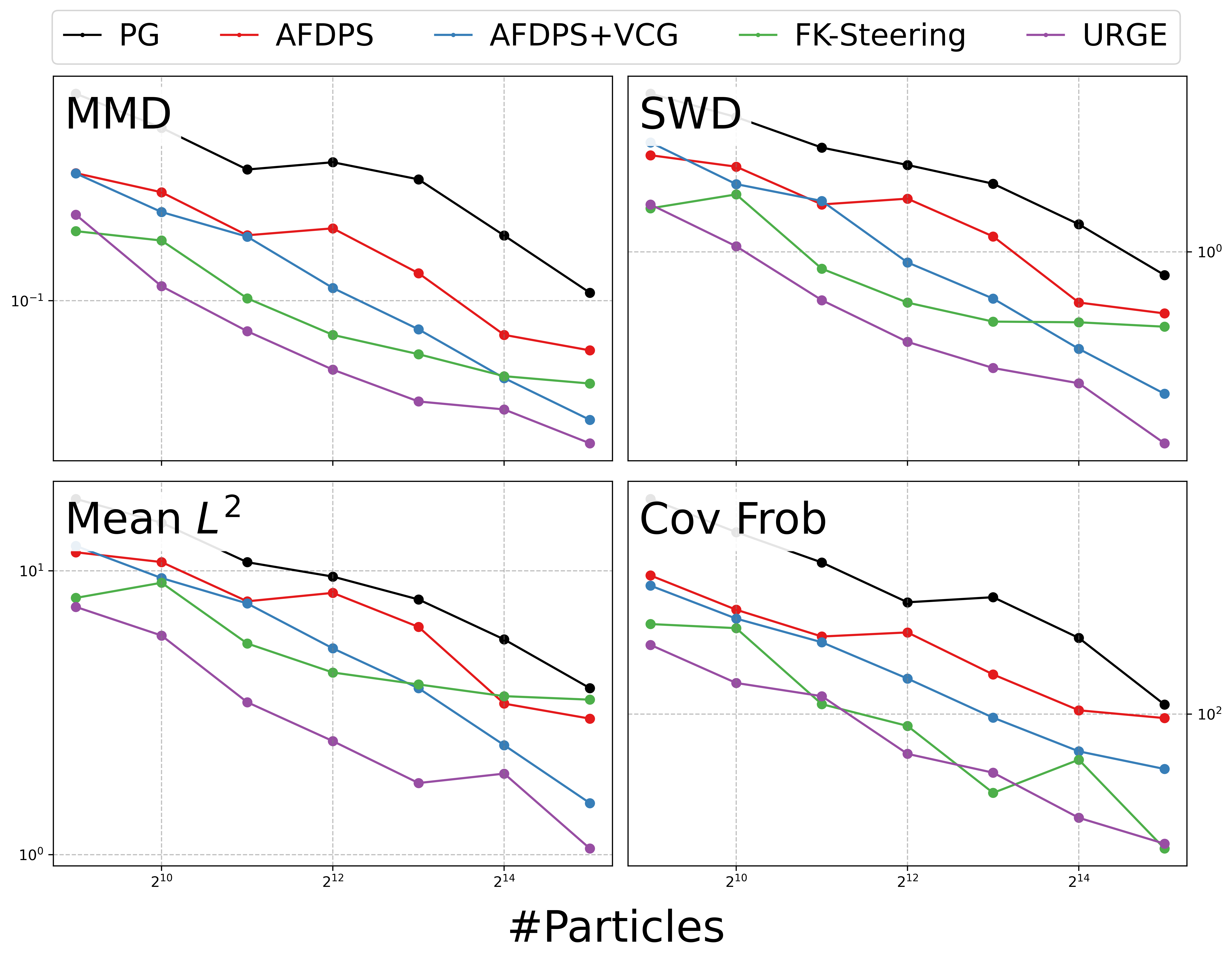}
  \end{minipage}
  \begin{minipage}[t]{\linewidth}
    \centering
    \includegraphics[width=0.84\linewidth]{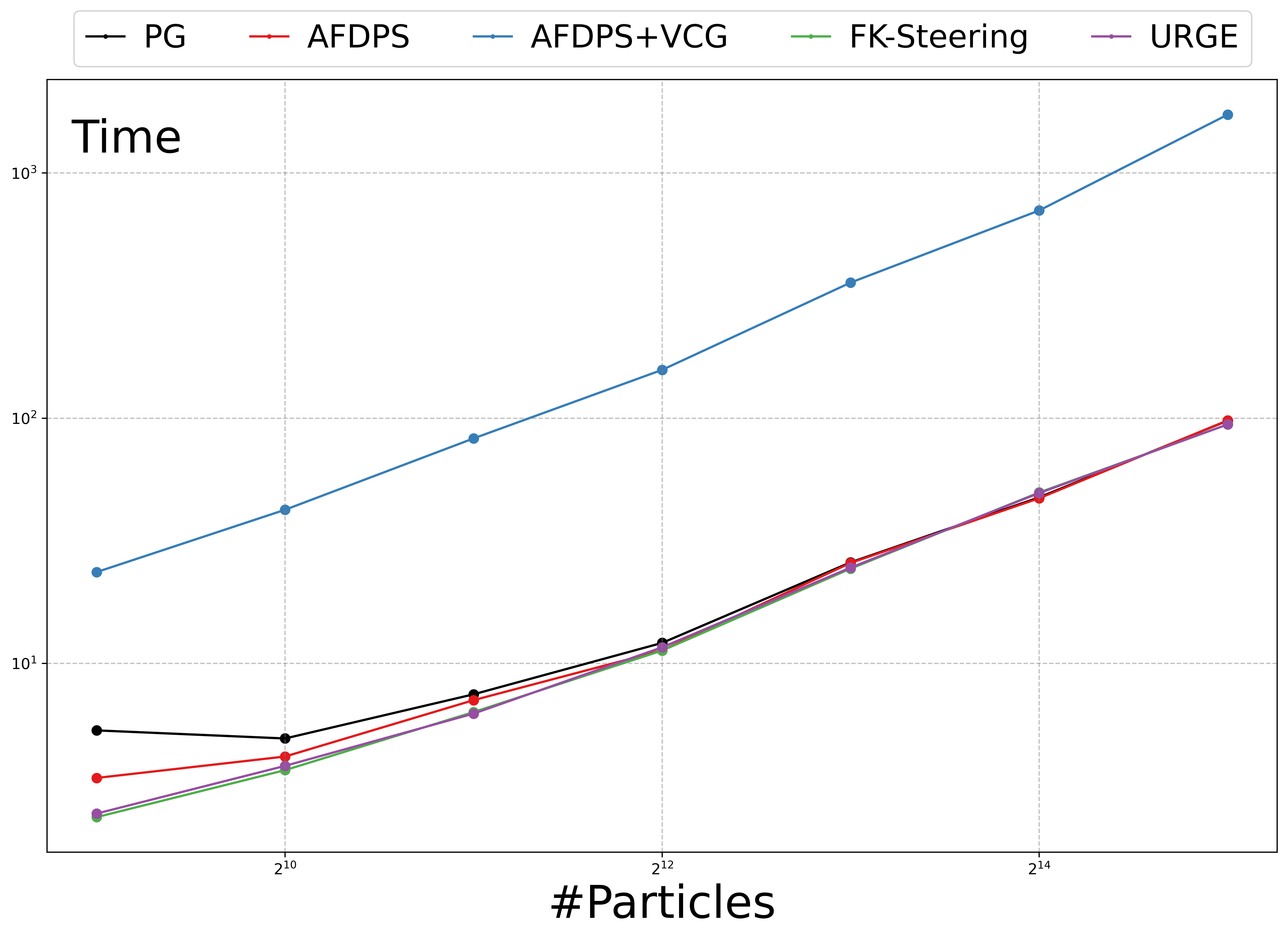}
  \end{minipage}

  \caption{Performance metrics (top) and computational time (bottom) versus number of particles on Gaussian Mixture Model. Our \texttt{URGE} method surpasses all previous baselines on the analytical task.}
  \label{fig:gmm_results}
  \vspace{-0.2in}
\end{figure}

We also evaluate \texttt{URGE} across different particle counts while keeping the random seed fixed to isolate the effect of particle scale on both accuracy and runtime. Figure~\ref{fig:gmm_results} illustrates the scaling behavior: as the number of particles increases, \texttt{URGE} not only outperforms the baselines but also converges more efficiently, achieving better results with fewer particles. Moreover, \texttt{URGE} exhibits competitive computational scaling compared to the baselines, especially \texttt{AFDPS+VCG}. These results confirm that \texttt{URGE} strikes an optimal balance between estimation fidelity and computational overhead, maintaining a practical runtime even as the number of particles scales. Moreover, we study the effect of dimensionality and discretization steps in Appendix~\ref{additional_exp}.


\begin{table*}[h]
    \centering
    \small
    \renewcommand{\arraystretch}{1.2}
    \setlength{\tabcolsep}{6pt}
    \begin{tabular}{c cc cc cc cc}
    \toprule
    \multirow{2}{*}{\makecell[c]{\textbf{Method}}} &
    \multicolumn{2}{c}{\textbf{Gaussian Deblurring}} &
    \multicolumn{2}{c}{\textbf{Motion Deblurring}} &
    \multicolumn{2}{c}{\textbf{Super Resolution}} &
    \multicolumn{2}{c}{\textbf{Box Inpainting}} \\
     & \textbf{PSNR$\uparrow$} & \textbf{LPIPS$\downarrow$} & \textbf{PSNR$\uparrow$} & \textbf{LPIPS$\downarrow$} & \textbf{PSNR$\uparrow$} & \textbf{LPIPS$\downarrow$} & \textbf{PSNR$\uparrow$} & \textbf{LPIPS$\downarrow$} \\
     \midrule
    \texttt{SGS-EDM} & 22.09 & 0.4827 & 20.50 & 0.5255 & 15.43 & 0.6190 & 21.43 & 0.2977 \\
    \texttt{FK-Corrector} & 18.36 & 0.5986 & 18.37 & 0.6006 & 18.58 & 0.5892 & 16.25 & 0.7137 \\
    \texttt{AFDPS-SDE} & 22.43 & 0.3913 & 18.52 & 0.5196 & \textbf{21.03} & 0.4598 & 23.13 & 0.3065 \\
    \texttt{AFDPS-ODE} & \textbf{22.57} & 0.4583 & \textbf{21.46} & \textbf{0.5030} & 19.60 & 0.5670 & 22.75 & \textbf{0.2748} \\
    \texttt{\bfseries URGE} & 22.38 & \textbf{0.3861} & 18.37 & 0.5252 & 21.00 & \textbf{0.4598} & \textbf{23.27} & 0.3054 \\
    \bottomrule
    \end{tabular}
    \caption{Results on 4 inverse problems for 100 validation images from ImageNet-256.}
      \vspace{-0.2in}
    \label{tab:imagenet256}
\end{table*}

\begin{table}[!ht]
    \small
    \centering
    \renewcommand{\arraystretch}{1.15}
    \begin{tabular}{l c c c c}
    \toprule
    \textbf{Sampler} & \textbf{Clip-Score $\uparrow$} & \textbf{HPS $\uparrow$} & \textbf{IR $\uparrow$} & \textbf{GenEval $\uparrow$}\\
    \midrule
    $N = 1$              & 0.2730 & 0.2619 & 0.2144 & 0.6400\\
    $\nabla(N = 1)$      & 0.2726 & 0.2617 & 0.2074 & 0.6400\\
    \texttt{FK} $(N = 4)$         & 0.2901 & 0.2849 & 0.8397 & 0.7200\\
    $\nabla(\texttt{FK}, N = 4)$ & 0.2899 & 0.2839 & 0.7906 & 0.7467\\
    \texttt{\bfseries URGE} $(N = 4)$       & \textbf{0.2997} & \textbf{0.2927} & \textbf{0.9955} & \textbf{0.7800}\\
    \bottomrule
    \end{tabular}
    \caption{Comparison of different sampling strategies for text-to-image generation. The \(N=1\) and \(\nabla(N=1)\) rows report best‑of‑n results on 10 random seeds. \texttt{URGE} achieves higher scores on all three metrics than the the base model, \texttt{FK-Steering}, and their gradient guidance variants, indicating the best overall quality.}
    \vspace{-0.15in}
    \label{overall_test}
\end{table}

\begin{table}[!t]
    \centering
    \tiny
    \setlength{\tabcolsep}{4pt} 
    
    \begin{tabular}{C{0.08\textwidth} C{0.08\textwidth} C{0.08\textwidth} C{0.08\textwidth} C{0.08\textwidth}}
    \toprule
    \textbf{Prompt} \quad\quad\quad \textit{a photo of ...} &
    \textit{... a toothbrush and a snowboard.} &
    \textit{... a toaster and an oven.} &
    \textit{... a knife and a zebra.}&
    \textit{... a horse and a giraffe.}\\
    \midrule
    
    \makecell{\emph{Base Model}\\SDv1.5} &
    \includegraphics[width=\linewidth,keepaspectratio]{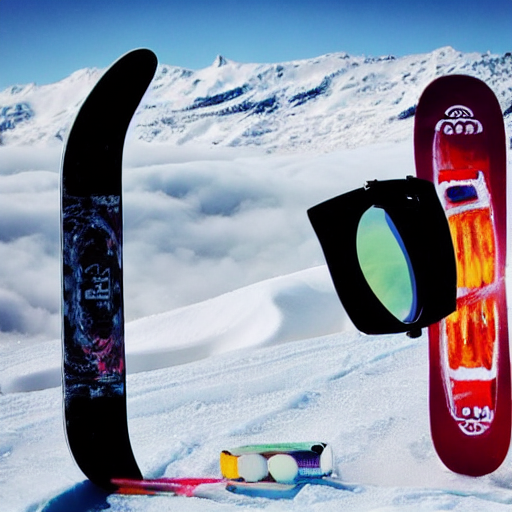} &
    \includegraphics[width=\linewidth,keepaspectratio]{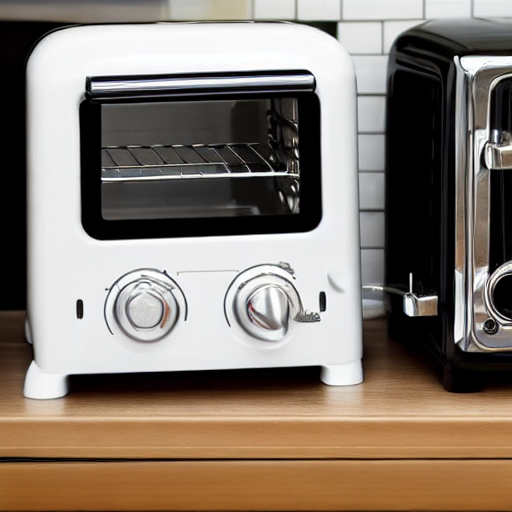} &
    \includegraphics[width=\linewidth,keepaspectratio]{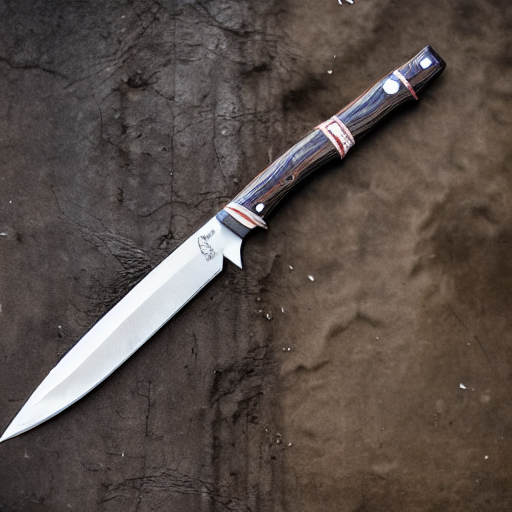} & \includegraphics[width=\linewidth,keepaspectratio]{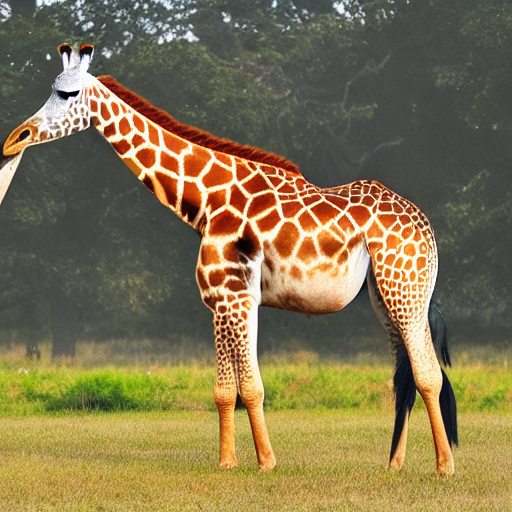} \\
    \makecell{\emph{Base Model}\\SDXL} &
    \includegraphics[width=\linewidth,keepaspectratio]{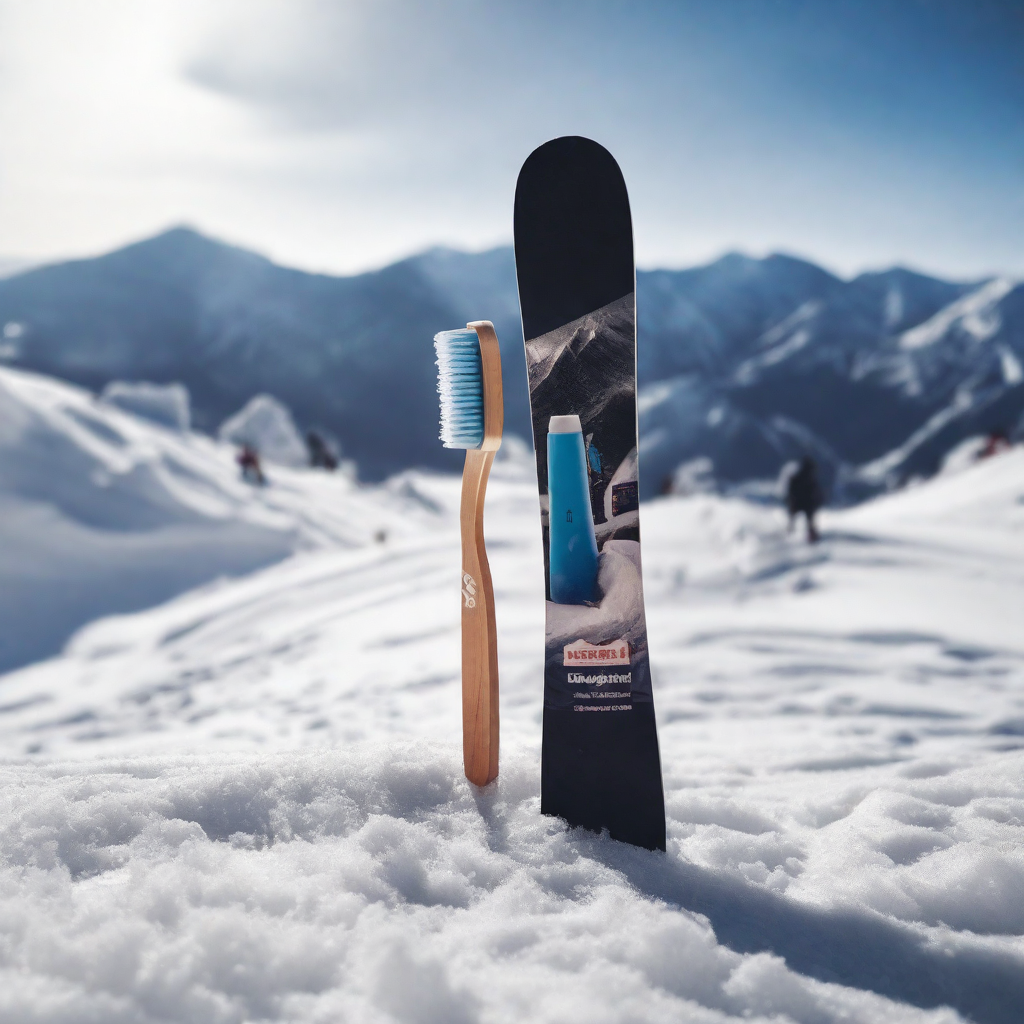} &
    \includegraphics[width=\linewidth,keepaspectratio]{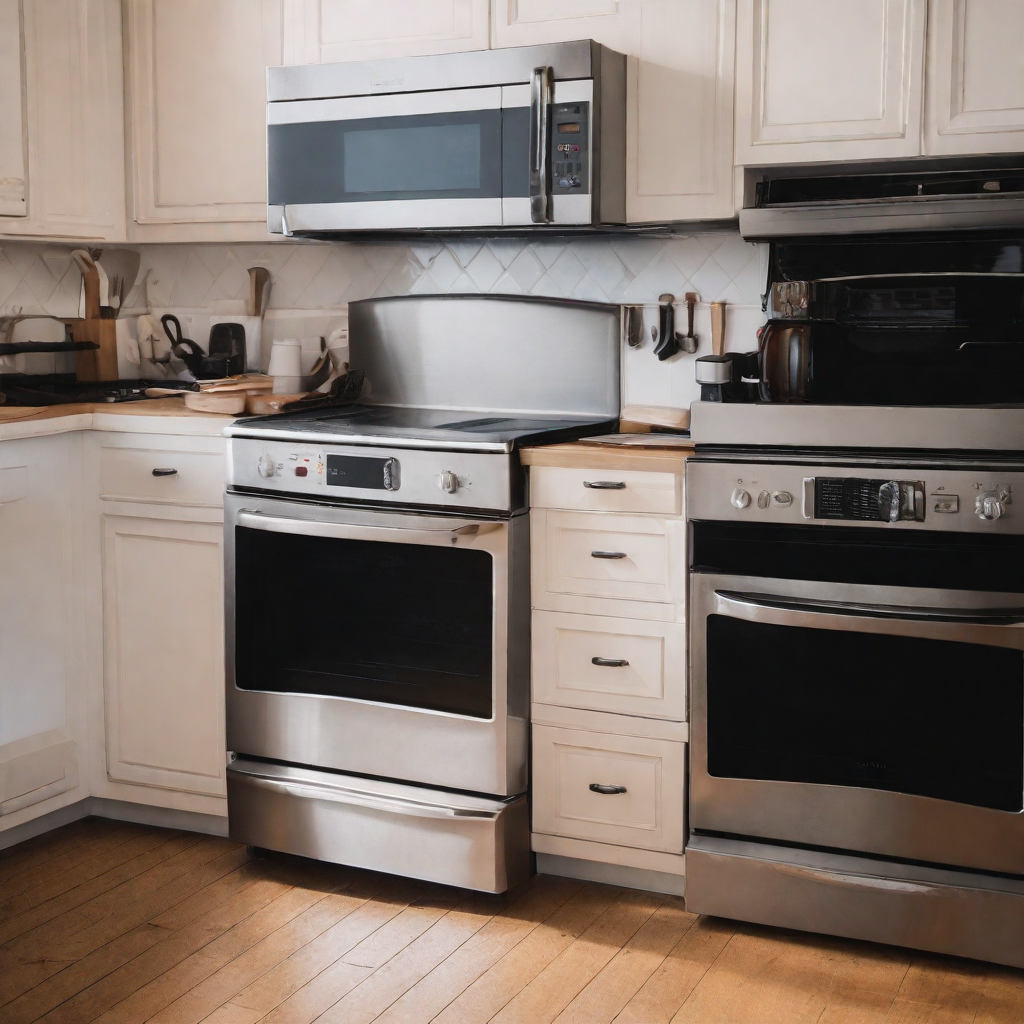} &
    \includegraphics[width=\linewidth,keepaspectratio]{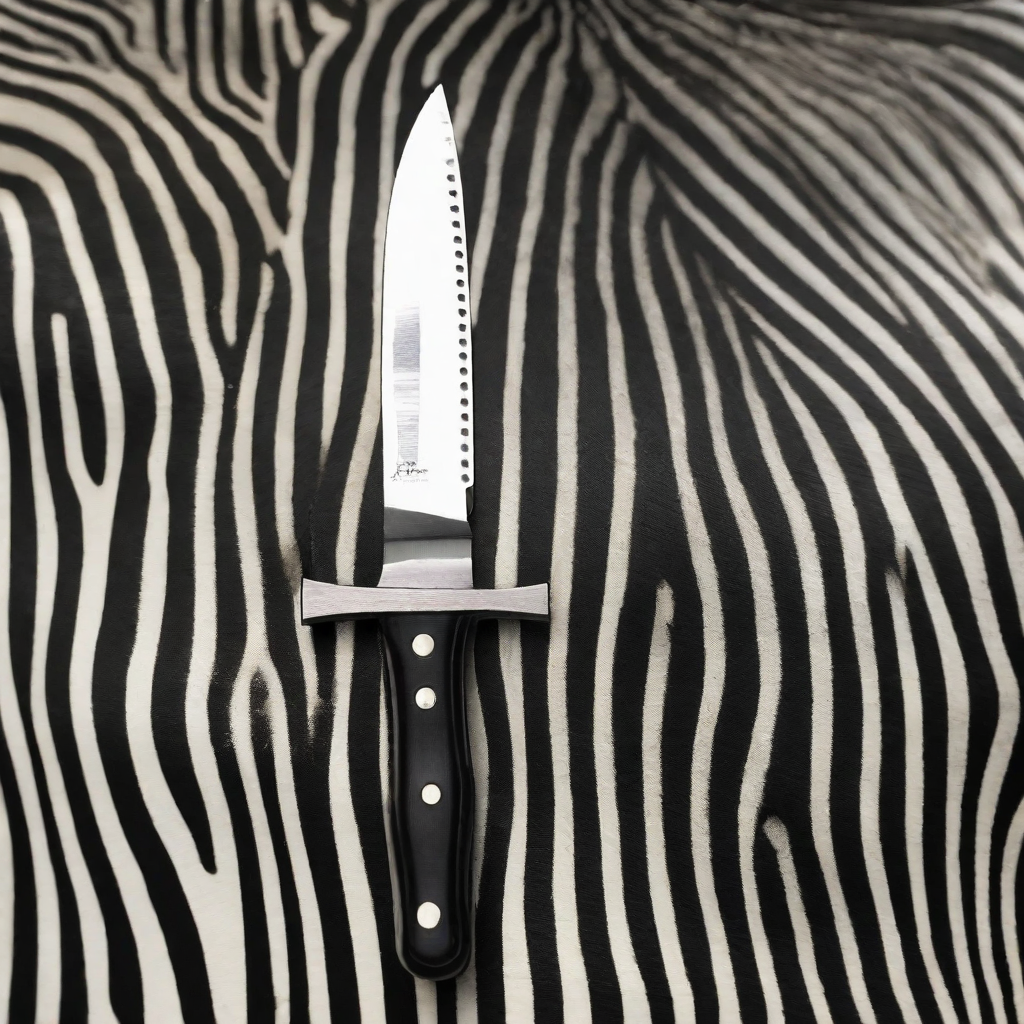} & \includegraphics[width=\linewidth,keepaspectratio]{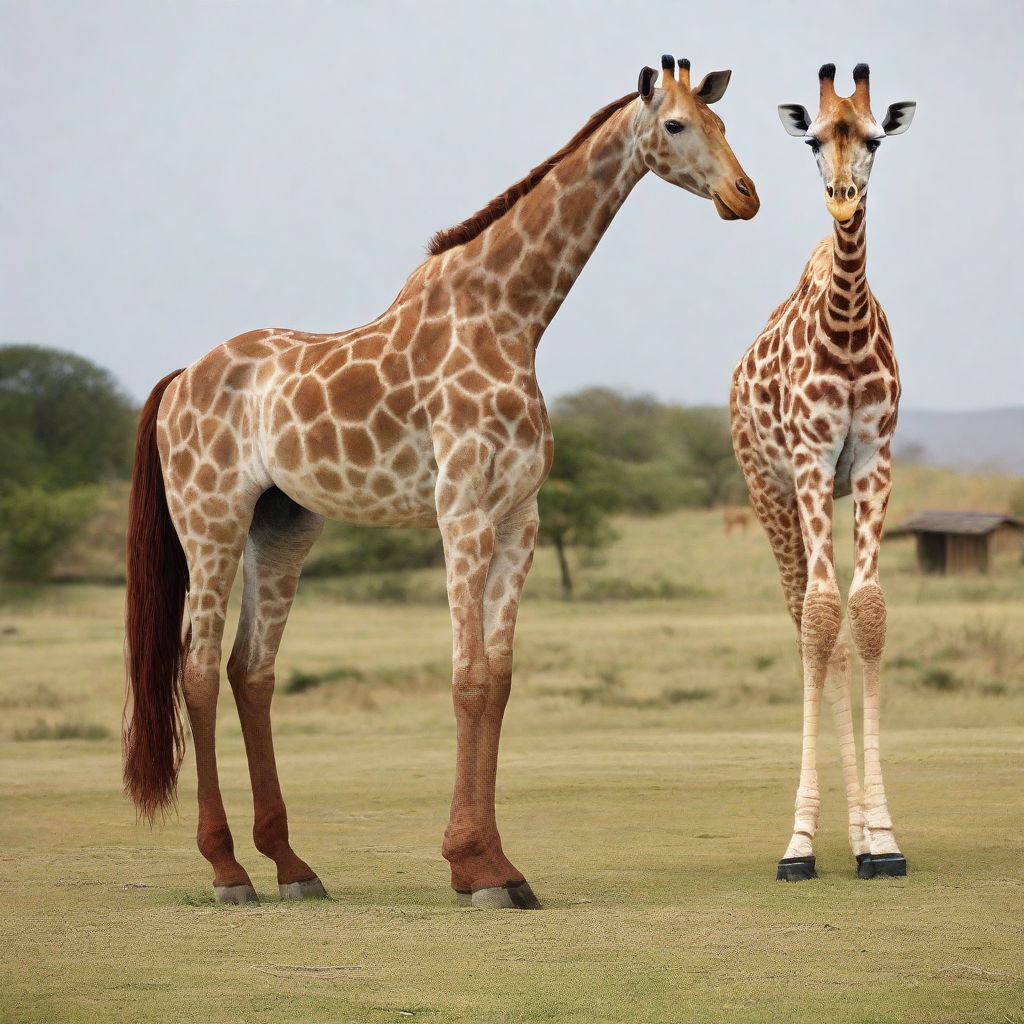} \\
    \hline
    \hline
    
    \makecell{\texttt{FK-}\\\texttt{Steering}\\SDv1.5,\\$N=4$} &
    \includegraphics[width=\linewidth,keepaspectratio]{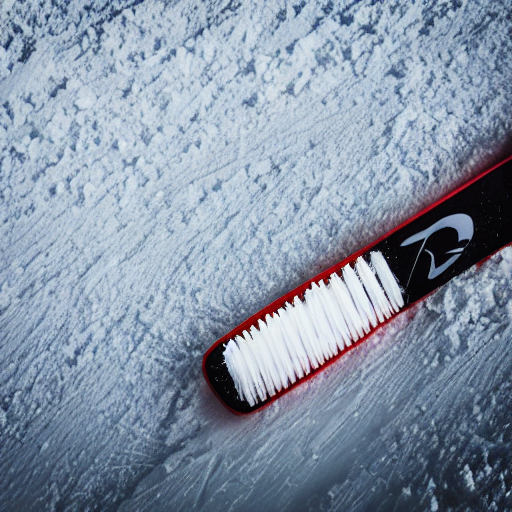} &
    \includegraphics[width=\linewidth,keepaspectratio]{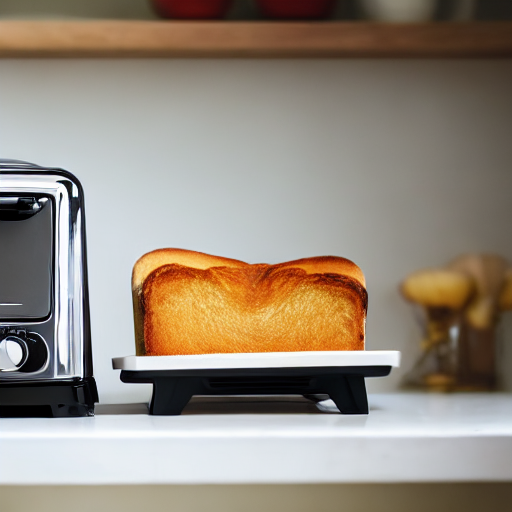} &
    \includegraphics[width=\linewidth,keepaspectratio]{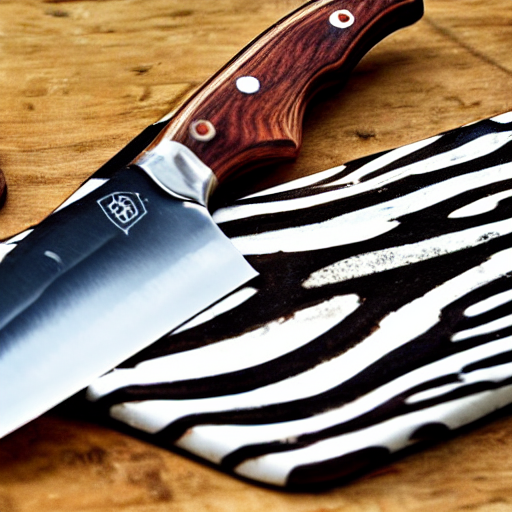} &
    \includegraphics[width=\linewidth,keepaspectratio]{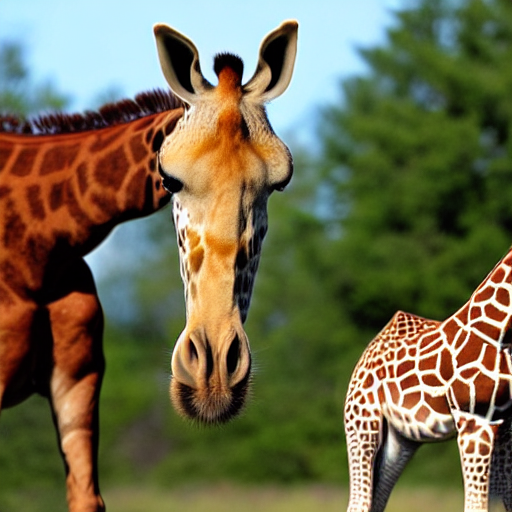} \\
    
    \makecell{\texttt{\bfseries URGE}\\SDv1.5\\$N=4$} &
    \includegraphics[width=\linewidth,keepaspectratio]{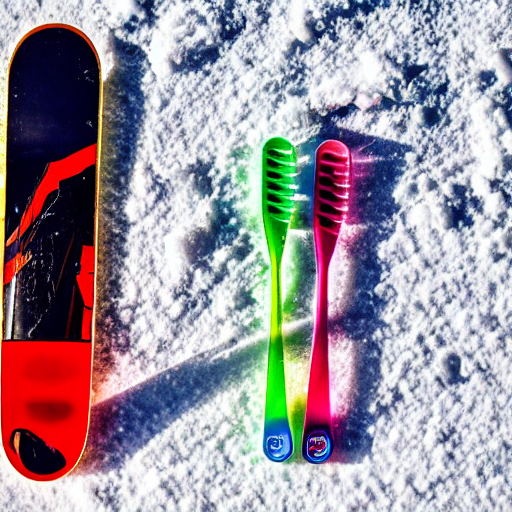} &
    \includegraphics[width=\linewidth,keepaspectratio]{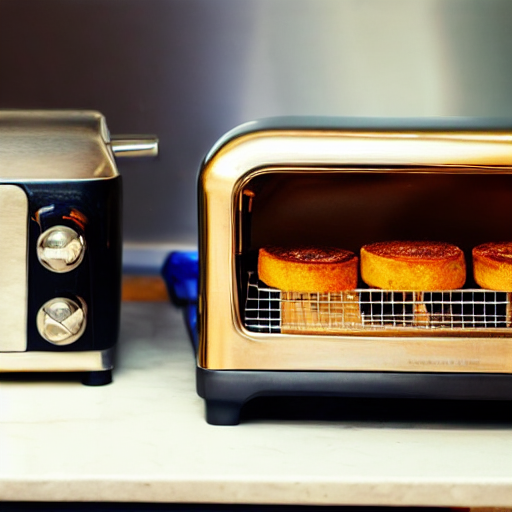} &
    \includegraphics[width=\linewidth,keepaspectratio]{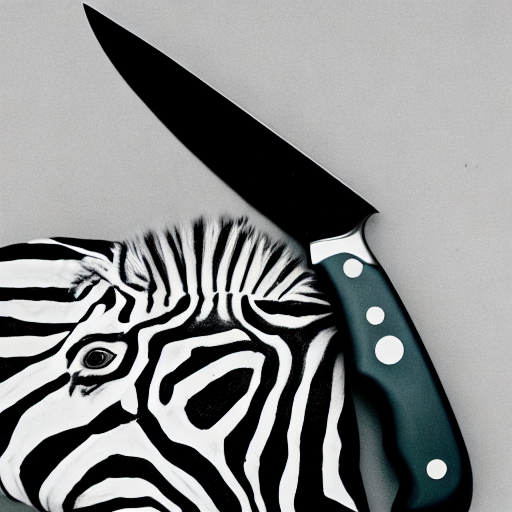} &\includegraphics[width=\linewidth,keepaspectratio]{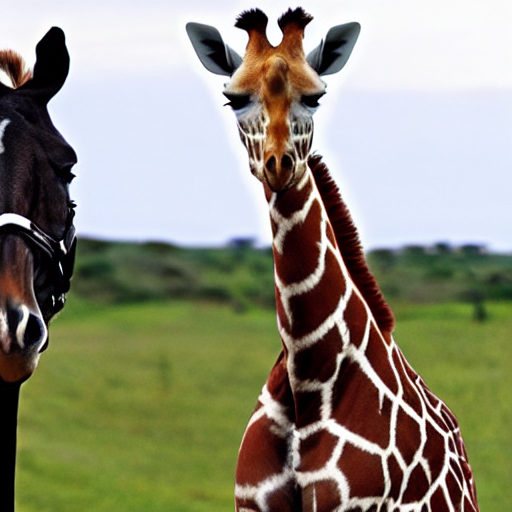} \\
    \bottomrule
    \end{tabular}
    \caption{Comparison of three methods under four prompts. \texttt{URGE}, even using SDv1.5, often matches or approaches the effect of the baseline that uses SDXL, and clearly outperforms \texttt{FK-Steering}, with results that better match the text and common sense.}
    \vspace{-0.2in}
    \label{visualization_test}
\end{table}

\subsection{Inverse Problems}

To ensure a rigorous comparison with \texttt{AFDPS} \cite{chen2025solving}, we adopt the identical inverse-problem benchmarks and evaluation protocols established in this work. We use \(\nabla \log p_t\) as the reward function in the experiment. Table~\ref{tab:imagenet256} reports PSNR \cite{wang2004image} and LPIPS  \cite{zhang2018unreasonable} for four representative tasks on ImageNet-256. The tabulated results allow a head-to-head comparison of reconstruction fidelity and perceptual quality under identical evaluation protocols. 

As shown in Table~\ref{tab:imagenet256}, \texttt{URGE} achieves performance comparable to \texttt{AFDPS} variants, while both significantly outperform other baselines. In addition, \texttt{URGE} achieves these results without requiring high-order derivatives, underscoring its efficiency and robustness in high-dimensional reconstruction tasks. Additional results of the inverse-problem experiment and visual examples are provided in the Appendix~\ref{additional_exp}.

\subsection{Text-to-Image Generation}

In text-to-image generation, reward models are typically complex neural networks whose gradients with respect to the sampling trajectory are either inaccessible or computationally prohibitive. Consequently, derivative-based methods like \texttt{FK-Corrector} and \texttt{AFDPS} are inapplicable in this setting. In our experiment, text-to-image generation is carried out using Stable Diffusion~\cite{podell2023sdxl, rombach2022high}, modeled as \(p_{\mathrm{data}}(x\mid \mathbbm{c})\) for prompt \(\mathbbm{c}\). The base model acts as the proposal generator. We consider publicly available diffusion models fine-tuned for prompt alignment and aesthetic preference. In particular, we consider SD~v1.5 and SDXL, along with their DPO-tuned variants~\cite{wallace2024dpo, rafailov2023dpo}, as the reward for prompt alignment.

Comparisons are conducted against the base model and \texttt{FK-Steering}. We focus on two aspects: aesthetic quality and qualitative prompt fidelity. The aesthetic alignment is measured using CLIP-Score \cite{hessel2021clipscore}, ImageReward~\cite{Xu2023ImageReward} and Human Preference Score (HPS)~\cite{WuSunZhu2023_HPS}. Table~\ref{overall_test} reports averages over 50 prompts, each evaluated with three random seeds. To enhance the baseline, we use the best-of-n (bon) metrics for the rows $N=1$ and $\nabla (N=1)$, where the experiment is carried out on 10 random seeds and the maximum values of each metric were displayed on these seeds. With \(N=4\) particles, \texttt{URGE} consistently improves ImageReward scores over the baseline sampler, \texttt{FK-Steering}, and their gradient-guidance variants, indicating superior and more stable performance across diverse prompts.

Figure~\ref{num_particle_test} examines particle-scaling effects on 10 prompts using ImageReward as the metric, with base and DPO-tuned models as baselines.  As particle counts grow, \texttt{URGE} not only benefits from ensemble effects, but also consistently surpasses \texttt{FK-Steering}. This indicates that URGE performs well on more particles and small models, which is valuable in practical applications with limited resources.

Table~\ref{visualization_test} provides qualitative comparisons under prompts drawn from GenEval~\cite{Ghosh2023GenEval}. Because current models often struggle with prompts containing two objects~\cite{CaoGuoHuo2025_T2ICountBench}, the selected prompts involve exactly two entities to stress-test compositional capabilities. Even when restricted to SD~v1.5, \texttt{URGE} frequently matches or approaches outputs from the stronger SDXL baseline and consistently surpasses \texttt{FK-Steering}. Generated images show closer adherence to prompt semantics and commonsense consistency, demonstrating the practical benefits of \texttt{URGE} in real text-to-image settings. More experiment results are presented in the Appendix~\ref{additional_exp}.

\section{Conclusion}

We introduce \texttt{URGE}, a training-free, approximation-free ensemble-based posterior sampling framework that couples SMC with diffusion processes via a pathwise Girsanov reweighting mechanism. This formulation eliminates the need for score or Laplacian information of the reward and unifies path-based and state-based weighting by recovering \texttt{AFDPS} / \texttt{FK-Corrector} in the terminal-conditioned setting. Empirically, \texttt{URGE} consistently outperforms concurrent methods, achieving stronger aesthetic fidelity and improved prompt-faithful visualizations.

\begin{figure}[!t]
    \centering
    \includegraphics[width=0.4\textwidth]{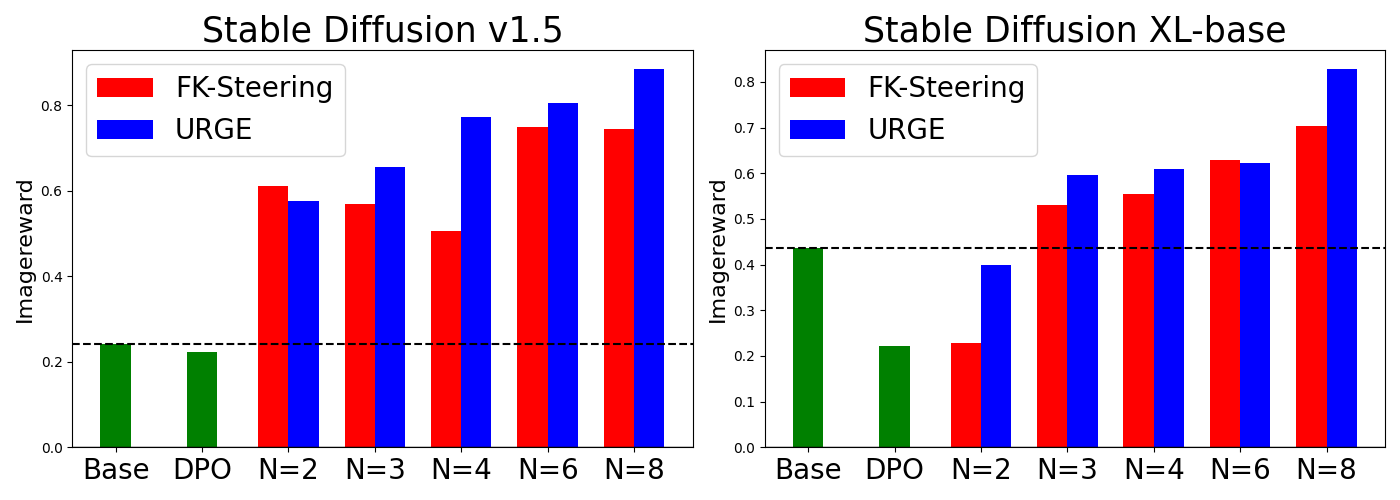}
    \caption{\texttt{URGE} achieves better performance compared with baseline, while exhibiting monotonic and larger reward increase with $N$ (unlike \texttt{FK-Steering}). With \texttt{URGE}, a smaller model (SD v1.5) can attain higher reward than an XL model.} 
    \label{num_particle_test}
\end{figure}

\newpage
\section*{Impact Statements}
This paper presents work whose goal is to advance the field of machine learning. There are many potential societal consequences of our work, none of which we feel must be specifically highlighted here.

\bibliographystyle{plainnat}
\bibliography{example_paper}

\newpage
\appendix
\onecolumn

\section{Mathematical Formulation of Continuous-Time URGE}
\label{app:URGE_math}

In this section, we reformulate the discrete-time \texttt{URGE} algorithm in an abstract framework and show how it converges, in the limit, to a continuous-time dynamical system. Specifically, we derive the continuous-time limit of \texttt{URGE}, which will play a key role in the proof of Theorem~\ref{thm:continuous_approximation-freeness}. 

\subsection{Continuous-Time Jump-Diffusion System}

As described in Section~\ref{sec:urge_implementation}, for an arbitrary time interval $[s, t]$, we specify the path weight in~\eqref{eq:URGEweight_pathwise} as follows:
\begin{equation*}
    w^{\texttt{URGE}}_{s,t}(X^G_{s:t}) = \exp\bigg(\int_s^t -V(\tau)\nabla_x G(X_\tau^G,\tau)^\top \dif W_\tau - \frac{1}{2} \int_s^t V^2(\tau)\|\nabla_x G(X_\tau^G,\tau)\|_2^2\dif \tau + \int_s^t \dif r(X_\tau^G,\tau)\bigg),
\end{equation*}
which can be rewritten as 
\begin{equation}
    \log w^{\texttt{URGE}}_{s,t}(X^G_{s:t}) = \int_s^t -V(\tau)\nabla_x G(X_\tau^G,\tau)^\top \dif W_\tau - \frac{1}{2} \int_s^t V^2(\tau)\|\nabla_x G(X_\tau^G,\tau)\|_2^2\dif \tau + \int_s^t \dif r(X_\tau^G,\tau).
    \label{eq:urgepath_logweight}
\end{equation}

In the implementation of \texttt{URGE}, we simulate \(N\) independent trajectories. In the absence of resampling events, particles and weights evolve continuously: the particles follow the guided SDE, and the weights encode the Radon-Nikodym derivative between the base and guided path measures:
\begin{subequations}
\begin{align}
\dif X_t^{(i),G} &= \big(v(X_t^{(i),G},t) + V^2(t) \nabla_x G(X_t^{(i),G},t)\big)\dif t+ V(t)\dif W_t^{(i)}, \label{eq:cts_guided_sde}\\
\dif\log w_t^{(i)} &= \dif r(X^{(i),G}_t,t)-V(t)\nabla_x G(X_t^{(i),G},t)^\top \dif W_t^{(i)}- \frac12 V^2(t)\|\nabla_x G(X_t^{(i),G},t)\|_2^2\dif t, \label{eq:cts_guided_weight}\end{align}\label{eq:cts_guided}\end{subequations}
with initial weights \(w_0^{(i)}=1\) and \(\dif r(X^{(i)}_t,t)\) denotes the It\^o differential of the process \(r(X^{(i)}_t,t)\) for $i \in [N]$. One should notice that \eqref{eq:cts_guided_weight} derives immediately from~\eqref{eq:urgepath_logweight}.

Assumption~\ref{basic_assumption_on_diffusion} on the guided diffusion 
\[\dif X_t = \big(v(X_t)+V^2(t)\nabla_x G(X_t)\big)\dif t + V(t)\dif W_t\]  discussed in Section~\ref{sec:Equivalent Margin between Girsanov SMC and AFDPS} ensures well-posedness and sufficient smoothness of the resulting processes. The constraints on the reward serves to make the coefficients of \eqref{eq:cts_guided} Lipschitz and the usage of Girsanov's Theorem legal (Novikov's condition). This condition matches the classical regularity assumptions on diffusion coefficients described in \cite{ethierkurtz1986, IkedaWatanabe1981}.

\begin{assumption}\label{basic_assumption_on_diffusion}
Fix $T>0$. The drift $v:[0,T]\times\mathbb R^d\to\mathbb R^d$, the guidance \(G:[0,T]\times\mathbb R^d\rightarrow\R\), the reward \(r:[0,T]\times\mathbb R^d\rightarrow\R\) the diffusion scalar $V:[0,T]\to\mathbb R$ satisfy:
\begin{enumerate}
    \item $r,G \in C^{1,2}(\mathbb R^d\times[0,T])$, and $\partial_t r,\nabla_x r,\nabla_x^2 r, \partial_t G,\nabla_x G,\nabla_x^2 G$ are bounded on $\mathbb R^d\times[0,T]$.
    \item $v(\cdot,t)$ and $\nabla_x G(\cdot,t)$ are globally Lipschitz in $x$, uniformly in $t\in[0,T]$, \emph{i.e.}, there exist constants $L,K>0$ such that \[\|v(x,t)-v(y,t)\| + \|\nabla_x G(x,t)-\nabla_x G(y,t)\|\le L \|x-y\|\] and \[\|v(x,t)\| + \|\nabla_x G(x,t)\| \le K(1+\|x\|),\] for any $x,y\in\mathbb R^d$, $t\in[0,T]$.
    \item $V$ is measurable and uniformly lower- and upper- bounded, \emph{i.e.}, 
    \[ 
        0 < V_\text{min} \leq V(t) \leq V_\text{max},\quad \forall t\in[0,T].
    \]
    \item There exists a constant $L_r>0$ such that 
    \[
    \big\|\nabla_x r(y,t)^\top v(y,t)-\nabla_x r(x,t)^\top v(x,t)\big\|\le L_r \|y-x\|,\quad\forall x,y\in\mathbb R^d,\ \forall t\in[0,T].
    \]
\end{enumerate}
\end{assumption}

\subsection{Continuous-Time Limit of \texttt{URGE}}
\label{sec:Continuous_Time_Limit_of_URGE}

Lemma~\ref{lem:ancestor_selection_mapping} formalizes the resampling step by characterizing ancestor selection through inverse transform sampling. Specifically, given normalized weights
$$\mathbbm{\widehat  w} = (\widehat w^{(1)}, \ldots, \widehat w^{(N)}), \quad \widehat w^{(i)} = \frac{w^{(i)}}{\sum_{j=1}^N w^{(j)}},$$ 
and a mark \({v} \sim \mathrm{Unif}[0,1]\), we define the ancestor selection map \(\mathcal{A} : \R^N \times [0,1] \to \{1,\ldots,N\}\) as \[\mathcal{A}(\mathbbm{\widehat  w}, v):= \min\Big\{ k : \sum_{i=1}^k \widehat w^{(i)} \ge v \Big\}.\] Applying this map component-wise to \(\mathbbm{v}\) yields parent indices for all particles, with selection probabilities proportional to the normalized weights.
\begin{lemma}
For normalized weights \(\mathbbm{\widehat  w}=(\widehat w^{(1)},\ldots,\widehat w^{(N)})\) with 
\(\widehat w^{(i)} = w^{(i)} \big/ \sum_{j=1}^N w^{(j)}\) and \(v \sim \mathrm{Unif}[0,1]\), 
then the mapping \(\mathcal{A}\) defined as above coincides with the multinomial resampling rule used in the jump component of \texttt{URGE}, \emph{i.e.} 
\[
    \P\left(\mathcal{A}(\mathbbm{\widehat  w},v)=k\right) = \widehat w^{(k)}.
\]
\label{lem:ancestor_selection_mapping}
\end{lemma}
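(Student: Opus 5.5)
The statement is elementary: it asserts that inverse-transform sampling on the cumulative normalized weights reproduces the categorical law with probabilities $\widehat w^{(k)}$. I will prove it by identifying the event $\{\mathcal A(\widehat{\mathbbm w},v)=k\}$ with a subinterval of $[0,1]$ of length $\widehat w^{(k)}$.

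Write the cumulative sums $S_0:=0$ and $S_k:=\sum_{i=1}^k \widehat w^{(i)}$ for $k=1,\dots,N$. These satisfy $0=S_0\le S_1\le\dots\le S_N=1$, because the $w^{(i)}$ are nonnegative with positive sum, which is automatic for the exponential weights $\beta^{(i)}$ of \texttt{URGE}. It follows that $\mathcal A$ is well defined for every $v\in[0,1]$: the set $\{k: S_k\ge v\}$ always contains $N$, so its minimum exists. By the definition of the minimum and the monotonicity of $(S_k)$, $\mathcal A(\widehat{\mathbbm w},v)=k$ holds exactly when $S_k\ge v$ and $S_{k-1}<v$, that is, when $v\in(S_{k-1},S_k]$. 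For $k=1$ the condition $S_0<v$ simply excludes $v=0$, which is a null event.

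Since $v\sim\mathrm{Unif}[0,1]$,
\[
\P\left(\mathcal A(\widehat{\mathbbm w},v)=k\right)=\P\left(v\in(S_{k-1},S_k]\right)=S_k-S_{k-1}=\widehat w^{(k)}.
\]
If some weight vanishes, the interval $(S_{k-1},S_k]$ is empty and the formula still holds with probability $0$. Summing over $k$ gives $1$, consistent with the intervals partitioning $(0,1]$.

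To match the multinomial resampling step of Algorithm~\ref{alg:urge}, I draw independent marks $v^{(1)},\dots,v^{(N)}$ and apply $\mathcal A$ componentwise. The resulting parent indices are then i.i.d.\ with law $\mathrm{Categorical}(\widehat{\mathbbm w})$, which is exactly the resampling rule in the algorithm.

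No step here is a real obstacle. The only points needing care are the boundary conventions: the choice of $\ge$ in the definition of $\mathcal A$ produces half-open intervals, and ties or zero weights affect only null sets or empty intervals.
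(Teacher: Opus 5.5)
Your proof is correct and follows the paper's argument: both use the cumulative sums $S_k$ to identify $\{\mathcal{A}(\widehat{\mathbbm w},v)=k\}$ with the event $S_{k-1}<v\le S_k$, then read off its probability $S_k-S_{k-1}=\widehat w^{(k)}$ under the uniform law. Your extra remarks are sound refinements the paper leaves implicit: well-definedness via $S_N=1$, zero weights, the null boundary point $v=0$, and independent marks for componentwise application.
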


\begin{proof}[Proof of Lemma~\ref{lem:ancestor_selection_mapping}]
Define \(S_0=0\), and \(S_k = \sum_{i=1}^k \widehat w^{(i)}\). By definition of \(\mathcal{A}\), \(\mathcal{A}(\mathbbm{\widehat  w},v)=k\) equals that \(w^{(k)}\) is the first index such that \(S_k\geq v\). Equivalently, \(S_{k-1}<v\leq S_k\). By \(v\sim U[0,1]\), we have 
$$\P(S_{k-1}<v\leq S_k) = \widehat w^{(k)},$$
which turns out that \(\P(\mathcal{A}(\mathbbm{\widehat  w},v)=k)=\widehat w^{(k)}\).
\end{proof}

We now specify the Lévy-driven jump component of the dynamics. Let \(N_t^{\mathrm{res}}\) denote the counting process recording the number of resampling events in \([0,t]\). Its intensity is given by $\Lambda(\mu^N_{t-})$, where 
$$
    \mu^N_t := \sum_{i=1}^N\widehat{w}^{(i)}\delta_{X_t^{(i),G}}
$$ 
denotes the empirical measure. $\Lambda : \mathcal{P}(\R^d) \to \R_+$ is a measurable functional defined on the space of Borel probability measures. For the detailed definition and discussions on the Poisson random measure, we refer readers to previous literatures~\cite{applebaum2009levy,ren2024discrete,ren2025unified}.

We make the following assumption on the resampling procedure.
\begin{assumption}
$\Lambda$ is uniformly bounded, \emph{i.e.}, there exists a constant $K < \infty$ such that $\Lambda(\mu) \le K$ for all $\mu \in \mathcal{P}(\R^d)$.
\label{assump:boundness_of_Lambda}
\end{assumption}

Assumption~\ref{assump:boundness_of_Lambda} is a standard regularity condition in particle methods with state-dependent jump intensities. Under this assumption, the associated resampling process is well-posed and non-explosive, in the sense that only finitely many jumps can occur over any finite time horizon. Similar bounded intensity conditions have been employed in the literature on importance sampling for Markovian intensity models~\citep{Glasserman2004Convergence, Djehiche2022MarkovianIntensity, Miasojedow2016Geometric}, where the boundedness of the intensity plays a key role in establishing the existence and stability of the sampling scheme. One practical interpretation of this assumption in the context of SMC is that it precludes pathological regimes in which the Effective Sample Size (ESS) collapses excessively, thereby facilitating stable resampling behavior.

Randomized resampling can be modeled via a Poisson random measure
\(\mathcal{N}(\dif t,\dif u, \dif \mathbbm{v})\) on \([0,\infty)\times[0,\infty)\times[0,1]^N\). Here \(t\) denotes time, \(u\) the jump-intensity signal, and \(\mathbbm{v}\in[0,1]^N\) the resampling marks that determine which particle is copied at a jump. Each coordinate of \(\mathbbm{v}\) is drawn from the uniform distribution \(\operatorname{Unif}[0,1]\), and the mean intensity measure is Lebesgue, \emph{i.e.}, 
\[
    \E[\mathcal{N}(\dif t,\dif u, \dif \mathbbm{v})]=\dif t\dif u\dif \mathbbm{v}.
\]

Using the Poisson random measure \(\mathcal{N}(\dif t,\dif u, \dif \mathbbm{v})\), the process admits the representation \[N_t^{\mathrm{res}}= \int_0^t\int_0^\infty\int_{[0,1]^N}\mathbbm{1}_{\{u \le \Lambda(\mu^N_{s-})\}}\mathcal{N}(\dif s,\dif u, \dif \mathbbm{v}).\] 
Define the weight mean  
\(\overline w_t = \frac{1}{N}\sum_{i=1}^N w_t^{(i)}\). The resulting continuous-time jump-diffusion system is
\begin{subequations}
\begin{align*}
&\dif X_t^{(i),G} = 
\big(v(X_t^{(i),G},t) + V^2(t) \nabla_x G(X_t^{(i),G},t)\big)\dif t + V(t)\dif W_t^{(i)}\\
&+ \int_0^\infty\int_{[0,1]^N}
   \big(X^{(\mathcal{A}(\mathbbm{\widehat  w}_{t-},v_i)),G}_{t-} - X_{t-}^{(i),G}\big)\mathbbm{1}_{\{u \le \Lambda(\mu^N_{t-})\}}
   \mathcal{N}(\dif t,\dif u, \dif \mathbbm{v}),
\\[6pt]
&\dif \log w_t^{(i)} =
\dif r(X^{(i),G}_t,t)-V(t) \nabla_x G(X_t^{(i),G},t)^\top \dif W_t^{(i)}-\frac12 \|V(t) \nabla_x G(X_t^{(i),G},t)\|_2^2\dif t \\
&+ \int_0^\infty\int_{[0,1]^N}
   \big(\log \overline w_{t-} - \log w_{t-}^{(i)}\big)\mathbbm{1}_{\{u \le \Lambda(\mu^N_{t-})\}}
   \mathcal{N}(\dif t,\dif u, \dif \mathbbm{v}).
\end{align*}
\end{subequations}

Finally, we show that the proposed jump-diffusion dynamics arises as the weak limit of the discrete resampling algorithm. Direct analysis of the discrete-time scheme is facilitated by introducing a vanishingly sparse resampling mechanism that matches the intensity of the limiting jump process. Specifically, let 
\[
    0=t_0<t_1<\cdots<t_K=T
\] be a time grid with step size \(\Delta t_k := t_k - t_{k-1}\) and 
$$
    \Delta t := \max_{k=1,\dots,K} \Delta t_k
$$ 
be the maximum time step. At each grid point \(t_k\) (\(k=1,\dots,K-1\)), resampling is triggered independently with probability \(p_k = \Lambda(\mu^N_{t_k-})\Delta t_k.\) The resulting weak convergence to the continuous-time jump-diffusion process is established in Theorem~\ref{lem:continuous_is_limit_of_discrete}.

Define \(X^{(i),G,\Delta t}_t := X_{t_k}^{i,G},w^{(i),\Delta t}_t := w^{(i)}_{t_k} \ \text{for } t \in [t_k, t_{k+1}), \text{ where } t_k = k\Delta t\), and

\begin{lemma}[Continuous Time Limit of \texttt{URGE}]
\label{lem:continuous_is_limit_of_discrete}
Consider \texttt{URGE} particle system with time step $\Delta t>0$, and resampling events occurring independently at each $t_k$ with probability \(p_k=\Lambda(\mu^N_{t_k-})\Delta t\). Then, as $\Delta t\to 0$, the $N$-particle process \(\{X^{(i),G,\Delta t}_t, w^{(i),\Delta t}_t\}_{i=1}^N\) viewed as a random element of Skorokhod space $D([0,T];(\R^d \times \R)^{\times N})$ converges weakly to the continuous-time jump-diffusion particle system \(\{X^{(i),G}_t, w^{(i)}_t\}_{i=1}^N\).
\end{lemma}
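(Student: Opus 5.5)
The plan is the standard martingale-problem route to weak convergence of Markov chains towards a jump-diffusion, in the spirit of \cite{ethierkurtz1986}. We work with the augmented state $Z_t=(X_t^{(i),G},\log w_t^{(i)})_{i=1}^N\in(\R^d\times\R)^{N}$. For the limiting system we write down the generator $\mathcal{G}_t$ acting on $f\in C_c^{2}((\R^d\times\R)^N)$. It is the sum of two parts:
\begin{itemize}
\item \emph{a diffusion part}: each particle moves by the guided drift $v+V^2\nabla_x G$ with diffusion $V(t)$. Its log-weight has drift $\partial_t r+\nabla_x r^\top(v+V^2\nabla_x G)+\tfrac12V^2\Delta_x r-\tfrac12V^2\|\nabla_xG\|^2$ and diffusion coefficient $V(\nabla_x r-\nabla_x G)$. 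The two are correlated through the common Brownian motion $W^{(i)}$, with It\^o's formula applied to $r(X_t^{(i),G},t)$.
\item \emph{a jump part}: $\Lambda(\mu^N)\big(\E_{\mathbbm v}[f(\mathcal{R}(z,\mathbbm v))]-f(z)\big)$, where $\mathcal R$ replaces $x^{(i)}$ by $x^{(\mathcal A(\widehat{\mathbbm w},v_i))}$ and resets every $\log w^{(i)}$ to $\log\overline w$.
\end{itemize}
By Lemma~\ref{lem:ancestor_selection_mapping}, this expectation is exactly the multinomial resampling kernel used in the discrete algorithm. Under Assumption~\ref{basic_assumption_on_diffusion} the coefficients are Lipschitz with linear growth, and under Assumption~\ref{assump:boundness_of_Lambda} the jump rate is bounded. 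Together these give pathwise existence and uniqueness for the jump-diffusion SDE: solve the diffusion between the finitely many Poisson times and patch the solutions. Hence the martingale problem for $\mathcal G_t$ is well posed.

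The first step is a \emph{one-step generator computation}. Conditional on the state $z$ at $t_k$, the discrete step first applies Euler--Maruyama to both $X$ and $\log w$, the latter using the EM weight increment $\log\beta$ of Algorithm~\ref{alg:urge}. It then, with probability $p_k=\Lambda(\mu^N_{t_k-})\Delta t_k$, applies the resampling map $\mathcal R$. Taylor-expanding $f$ to second order gives
\[
\E[f(Z_{t_{k+1}})-f(Z_{t_k})\mid Z_{t_k}=z]=\Delta t_k\,\mathcal G_{t_k}f(z)+o(\Delta t_k),
\]
uniformly on compacts. The diffusive contribution reproduces the limiting drift and covariance, using boundedness of $\nabla_x r,\nabla^2_x r,\nabla_x G$ and the fact that $r(X_{t_{k+1}})-r(X_{t_k})$ equals its It\^o expansion up to $O(\Delta t^{3/2})$ in $L^2$. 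The jump contribution is $p_k(\E f\circ\mathcal R-f)$ plus the cross term of probability $O(\Delta t^{3/2})$ in which a jump and a diffusion increment occur together.

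Next, \emph{tightness} in $D([0,T];(\R^d\times\R)^N)$. Uniform moment bounds $\sup_{\Delta t}\E\sup_t\|Z^{\Delta t}_t\|^2<\infty$ follow from linear growth and Gronwall. Resampling is harmless here: it only permutes positions and replaces log-weights by $\log\overline w\le\max_i\log w^{(i)}$, and it happens at most $\mathrm{Poisson}(KT)$ many times. We then verify Aldous' criterion. For stopping times $\tau$ and $\delta\to0$, the diffusive increment over $[\tau,\tau+\delta]$ is $O(\delta^{1/2})$ in $L^2$. A jump occurs in that window with probability at most $K\delta$, so $\P(\|Z_{\tau+\delta}-Z_\tau\|>\epsilon)\to0$ uniformly.

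Finally, \emph{identification of the limit}. The one-step expansion shows that
\[
f(Z^{\Delta t}_{t_k})-\sum_{j<k}\Delta t_j\,\mathcal G_{t_j}f(Z^{\Delta t}_{t_j})
\]
is a martingale up to an error that vanishes as $\Delta t\to0$. Passing to a subsequential limit, every limit point solves the martingale problem for $\mathcal G$, and uniqueness gives convergence of the whole sequence.

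The \textbf{main obstacle} is the discontinuity of the jump kernel in this last step. Both $\mathcal A(\widehat{\mathbbm w},v)$ and $\Lambda(\mu^N)$ depend on the state, and $\mathcal A$ is discontinuous at ties of the cumulative weights. Because of this, the map $z\mapsto \E_{\mathbbm v}[f(\mathcal R(z,\mathbbm v))]$ must be handled carefully when passing to the limit. The plan is to note that this expectation equals $\sum_{\mathbf a}\prod_i\widehat w^{(a_i)}f(\cdot)$, which is continuous in $z$ since the normalized weights are continuous and positive. We would additionally assume $\Lambda$ is continuous in the weak topology on $\mathcal P(\R^d)$, which is implicitly needed. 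With that, $\mathcal G f$ is continuous, and the standard Ethier--Kurtz continuity argument applies. A further technicality is that the limit has jumps at Poisson times while the discrete process jumps only on grid points; this mismatch is absorbed by the Skorokhod $J_1$ topology, since jump times differ by at most $\Delta t$.
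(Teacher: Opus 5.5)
Your proposal is correct and follows essentially the paper's route. Both arguments use one-step generator convergence on $C_c^2$, with the multinomial kernel written as an integral over uniform marks, and pathwise well-posedness obtained by patching diffusions between the finitely many bounded-rate jump times. Both then prove tightness via moment bounds and Aldous' criterion, and identify the limit through uniqueness plus the Ethier--Kurtz convergence theorem.

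Your added continuity assumption on $\Lambda$ is exactly what the paper tacitly needs for $\mathcal{G}_t f$ to be continuous. The only slip is that resampling duplicates rather than permutes positions, so $\sum_i\|X^{(i)}\|^2$ can grow by a factor up to $N$ per event; this is harmless given the bounded number of resampling events.
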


To simplify the following discussion, we introduce the following notation: let \(E = (\R^d \times \R)^N\) be the state space of the particle system and let
\[
    \mathbbm{z}_t=\left(X^{(1),G}_t,w_t^{(1)},X^{(2),G}_t,w_t^{(2)},\cdots, X^{(N),G}_t,w_t^{(N)}\right) \in E 
\] denote the full particle state at time \(t\), and let \(C_c^2(E)\) be the space of twice continuously differentiable test functions with compact support.

For \(f\in C_c^2(E)\), the discrete generator is
\[
L^{\Delta t}_t f(\mathbbm{z}) = \frac{1}{\Delta t}\big( \mathbb{E}[f(\mathbbm{z}_{t+\Delta t}) \mid \mathbbm{z}_t=\mathbbm{z}] - f(\mathbbm{z}) \big).
\]
Now we denote
\[
    \Psi(\mathbbm{z}_t,\mathbbm{v}):=(X_{t}^{(\mathcal{A}(\mathbbm{\widehat  w}_{t},v_1)),G}, \log \overline w_t, X_{t}^{(\mathcal{A}(\mathbbm{\widehat  w}_{t},v_2)),G}, \log \overline w_t, \cdots, X_{t}^{(\mathcal{A}(\mathbbm{\widehat  w}_{t},v_N)),G}, \log \overline w_t)
\] as the resampling map of state and averaging procedure of weights discussed above and let \(\mathcal{L}^{\mathrm{diff}}_t\) be the diffusion generator.

\subsection{Mathematical Properties of the Particle System}

Before we prove the lemma, we first establish the necessary conditions for the weak convergence of the discrete-time particle system to its continuous-time limit. Specifically, we address generator convergence, well-posedness of the limit, and tightness. 

\begin{lemma}
The discrete-time generator $L^{\Delta t}_t$ converges uniformly to the generator $\mathcal{L}_t$ of the limiting continuous-time jump-diffusion on the core $C_c^2(E)$, defined as 
\begin{equation*}
    \mathcal{L}_t f(\mathbbm{z}) = \mathcal{L}^{\text{diff}}_t f(\mathbbm{z}) + \Lambda(\mu^N_{t-}) \int_{[0,1]^N} \left( f(\Psi(\mathbbm{z}, \mathbbm{v})) - f(\mathbbm{z}) \right)  \mathrm{d}\mathbbm{v}.
\end{equation*}
\label{lem:convergence_of_discrete_time_generator}
\end{lemma}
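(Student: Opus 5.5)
We prove the uniform convergence $L^{\Delta t}_t f\to\mathcal L_t f$ on $C_c^2(E)$ by splitting one step of the discrete scheme into its two independent ingredients. These are (i) the Euler--Maruyama move of positions and log-weights, which follows \eqref{eq:cts_guided_sde}--\eqref{eq:cts_guided_weight} discretized, and (ii) a resampling event, which fires with probability $p_k=\Lambda(\mu^N_{t_k-})\Delta t$ and applies the map $\Psi(\cdot,\mathbbm v)$ with $\mathbbm v\sim\mathrm{Unif}[0,1]^N$. Write $\mathbbm z'$ for the state after the diffusion move. Conditioning on whether resampling fires gives
\[
\E[f(\mathbbm z_{t+\Delta t})\mid \mathbbm z_t=\mathbbm z]-f(\mathbbm z)=\big(\E f(\mathbbm z')-f(\mathbbm z)\big)+\Lambda(\mu^N_{t-})\Delta t\,\E\Big[\int_{[0,1]^N}\big(f(\Psi(\mathbbm z',\mathbbm v))-f(\mathbbm z')\big)\dif\mathbbm v\Big].
\]
Dividing by $\Delta t$ separates the generator into a diffusion part and a jump part, which we handle separately.

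For the diffusion part, we Taylor expand $f$ to second order around $\mathbbm z$. The increment $\mathbbm z'-\mathbbm z$ has conditional mean $b(\mathbbm z,t)\Delta t$ and conditional covariance $a(\mathbbm z,t)\Delta t+O(\Delta t^2)$, where $b$ and $a$ are the drift and diffusion coefficients of the joint $(X,\log w)$ system. By Itô's formula, $\dif r$ contributes $(\partial_t r+\nabla r^\top(v+V^2\nabla G)+\tfrac12V^2\Delta r)\dif t+V\nabla r^\top \dif W$. Combined with the $-V\nabla G^\top \dif W$ term, this gives explicit Lipschitz coefficients by Assumption~\ref{basic_assumption_on_diffusion}. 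The resulting expansion is
\[
\tfrac1{\Delta t}\big(\E f(\mathbbm z')-f(\mathbbm z)\big)=\mathcal L^{\rm diff}_t f(\mathbbm z)+R_{\Delta t}(\mathbbm z).
\]
The remainder is controlled by $\|\nabla^2 f\|$ times the modulus of continuity of $\nabla^2 f$ evaluated at $\E|\mathbbm z'-\mathbbm z|^2$, plus third-moment terms $\E|\mathbbm z'-\mathbbm z|^3/\Delta t=O(\Delta t^{1/2})$. Two facts make this bound uniform in $\mathbbm z$. First, $f$ has compact support $S$. Second, the coefficients have linear growth while $\nabla G,\nabla r,V$ are bounded. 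For $\mathbbm z$ far from $S$, both $f(\mathbbm z)$ and $\mathcal L^{\rm diff}_tf(\mathbbm z)$ vanish, and $\E f(\mathbbm z')$ is bounded by $\|f\|_\infty\,\P(\mathbbm z'\in S)$. This probability is exponentially small in $\mathrm{dist}(\mathbbm z,S)^2/\Delta t$ by Gaussian tails, so it is $o(\Delta t)$ uniformly.

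For the jump part, it suffices to show that
\[
\sup_{\mathbbm z}\Big|\E\int\big(f(\Psi(\mathbbm z',\mathbbm v))-f(\mathbbm z')\big)\dif\mathbbm v-\int\big(f(\Psi(\mathbbm z,\mathbbm v))-f(\mathbbm z)\big)\dif\mathbbm v\Big|\to0 .
\]
Multiplying by the bounded factor $\Lambda\le K$ from Assumption~\ref{assump:boundness_of_Lambda} then finishes the proof. The term $f(\mathbbm z')-f(\mathbbm z)$ is handled by the uniform continuity of $f$, since $\mathbbm z'\to\mathbbm z$ in $L^2$ uniformly on the relevant region.

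The main obstacle is the term involving $\Psi$. The map $\mathbbm z\mapsto\Psi(\mathbbm z,\mathbbm v)$ is discontinuous in the weights through the ancestor map $\mathcal A$. To avoid this, we integrate out $\mathbbm v$ first and use Lemma~\ref{lem:ancestor_selection_mapping}. This gives
\[
\int f(\Psi(\mathbbm z,\mathbbm v))\dif\mathbbm v=\sum_{k_1,\dots,k_N}\prod_i\widehat w^{(k_i)}\,f\big(X^{(k_1)},\log\overline w,\dots,X^{(k_N)},\log\overline w\big),
\]
which is continuous in $\mathbbm z$: the normalized weights $\widehat w$ and $\log\overline w$ are smooth functions of the log-weights. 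This expression is bounded by $\|f\|_\infty$ and is uniformly continuous on compacts. Outside a large compact set it is handled by the same tail argument as before. We also need $\Lambda(\mu^N_{t-})$ to be evaluated consistently at the pre-move state. We take this to be part of the definition of the scheme, or we assume $\Lambda$ is continuous in the weak topology so that replacing $\mu^N$ by its post-move version costs $o(1)$. Combining the two parts yields $\sup_{\mathbbm z}|L^{\Delta t}_tf(\mathbbm z)-\mathcal L_tf(\mathbbm z)|\to0$.
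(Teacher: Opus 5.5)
Your skeleton matches the paper's: Taylor-expand the Euler--Maruyama move, fire resampling with probability $\Lambda\Delta t$, and integrate out the marks $\mathbbm v$ to get the multinomial sum. The diffusion part, including the Gaussian-tail treatment of $\mathbbm z$ far from $\operatorname{supp} f$, is fine.

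The gap is where you go beyond the paper. You apply $\Psi$ to the post-move state $\mathbbm z'$ and assert that $F(\mathbbm z):=\int_{[0,1]^N} f(\Psi(\mathbbm z,\mathbbm v))\,\dif\mathbbm v$ is handled outside a large compact set ``by the same tail argument''. That argument needs $F$ to vanish off a compact set, and it does not. $F(\mathbbm z)\neq0$ only constrains the positions of the \emph{selected} ancestors and the value $\log\overline w$; unselected particles may sit anywhere. Since the log-weight drift contains $\nabla_x r^\top v$, which grows linearly in $x$, a distant particle's log-weight can move by $O(1)$ in one step. This shifts $\widehat w$ and $\log\overline w$ by $O(1)$.

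Concretely, take $d=1$, $N=2$, $v(x,t)=x$, $r(x,t)=x$, $G\equiv0$, $V\equiv1$, $\Lambda\equiv1$; all standing assumptions hold. Let $f=g^{\otimes 4}$ with $g\in C_c^\infty$ and $g=1$ on $[-1,1]$, and let $\mathbbm z=(0,0,M,0)$ with $M=1/\Delta t$.
\begin{itemize}
\item At $\mathbbm z$, both $f$ and $\mathcal L^{\rm diff}_tf$ vanish, $\E f(\mathbbm z')/\Delta t\to0$, and $F(\mathbbm z)=1/4$.
\item After one step the second log-weight is $1+O(\sqrt{\Delta t})$, so $\E F(\mathbbm z')\to(1+e)^{-2}$.
\item Hence $|L^{\Delta t}_tf(\mathbbm z)-\mathcal L_tf(\mathbbm z)|\to\tfrac14-(1+e)^{-2}\neq0$.
\end{itemize}
So for the scheme you model, $\sup_{\mathbbm z\in E}$ convergence is false, not merely unproved. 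The same objection applies to replacing $\Lambda$ at the post-move state by $\Lambda$ at the pre-move state. That replacement also needs a continuity of $\Lambda$ that Assumption~\ref{assump:boundness_of_Lambda} does not provide.

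What your argument does establish is convergence uniformly on compact subsets of $E$. In log-weight coordinates $F$ is globally uniformly continuous, because softmax and log-mean-exp are globally Lipschitz. Also $\mathbbm z'\to\mathbbm z$ in $L^2$ uniformly over compacts. That local form, together with the moment bound of Lemma~\ref{lem:relatively_tight}, is how generator convergence is typically used for weak convergence. So either state the result that way or adopt the paper's convention.

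The paper treats resampling as occurring \emph{instead of} the diffusion move and applies $\Psi$ to the pre-move state $\mathbbm z$. The jump term is then exact. Beyond the Taylor remainder you already control, the only extra error is $-\Lambda\Delta t\,\mathcal L^{\rm diff}_tf$, which is uniformly $O(\Delta t)$ because $\mathcal L^{\rm diff}_tf$ is bounded with compact support. That convention gives global uniformity, at the price of not literally matching Algorithm~\ref{alg:urge}, where resampling acts on the freshly propagated particles.
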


\begin{proof}[Proof of Lemma~\ref{lem:convergence_of_discrete_time_generator}]
    We prove the lemma in two steps.
    
    \textbf{Step 1: Generator of the Continuous Process.}
    By It\^o's formula,
    \begin{align*}
    \dif r(X_t^G,t)
    &= \partial_t r(X_t^G,t)\dif t +\nabla_x r(X_t^G,t)^{\top}\big(v(X_t^G,t)+V^2(t)\nabla_x G(X_t^G,t)\big)\dif t\\
    &\quad + \frac12 V^2(t)\Delta_x r(X_t^G,t)\dif t
    + V(t)\nabla_x r(X_t^G,t)^{\top} \dif W_t.
    \end{align*}
    We use a generalized version of dynamics of the continuous process $\mathbbm{z}_t$:
    \[
    d\mathbbm{z}_t = v_{\text{diff}}(\mathbbm{z}_t) \dif t + \sigma_{\text{diff}}(\mathbbm{z}_t) \dif W_t + \int_{0}^{\infty} \int_{[0,1]^N} (\Psi(\mathbbm{z}_{t-}, \mathbbm{v}) - \mathbbm{z}_{t-}) \mathbbm{1}_{\{u \le \Lambda(\mu^N_{t-})\}} \mathcal{N}(\dif t, \dif u, \dif \mathbbm{v}),
    \]
    where 
    \[
        \begin{aligned}
            v_{\text{diff}}(\mathbbm{z}_t,t) =& \bigg(v(X^{(i),G}_t,t)+V^2(t)\nabla_x G(X^{(i),G}_t,t), \partial_t r(X_t^G,t)+\nabla_x r(X_t^{(i),G},t)^{\top}\big(v(X_t^{(i),G},t)+V^2(t)\nabla_x G(X_t^{(i),G},t)\big)\\
            &+\frac12 V^2(t)\Delta_x r(X_t^{(i),G},t)-\frac{1}{2}\|\theta(X^{(i),G}_t,t)\|^2\bigg)_{i=1}^N,
        \end{aligned}
    \]
    and
    \[
        \sigma_{\text{diff}}(\mathbbm{z}_t,t) = \left(V(t), V(t)\nabla_x r(X_t^{(i),G},t)^{\top}-V(t) \nabla_x G(X^{(i),G}_t,t)^\top\right)_{i=1}^N.
    \]
    By applying It\^o's formula for semimartingales with jumps, the time evolution of $f(\mathbbm{z}_t)$ is given by:
    \begin{align*}
    f(\mathbbm{z}_t) - f(\mathbbm{z}_0) &= \int_0^t \mathcal{L}^{\text{diff}}_t f(\mathbbm{z}_s) ds + \int_0^t \nabla f^\top \sigma_{\text{diff}} \dif W_s \\
    &\quad + \int_0^t \int_0^\infty \int_{[0,1]^N} \left( f(\Psi(\mathbbm{z}_{s-}, \mathbbm{v})) - f(\mathbbm{z}_{s-}) \right) \mathbbm{1}_{\{u \le \Lambda(\mu^N_{s-})\}} \mathcal{N}(\dif s, \dif u, \dif \mathbbm{v}),
    \end{align*}
    where the generator for the diffusion part is defined as
    \begin{equation*}
        \mathcal L_t^{\text{diff}} f(\mathbbm{z}) = v_{\text{diff}}(\mathbbm{z}, t)^\top \nabla f(\mathbbm{z}) + \frac{1}{2} \mathrm{tr}(\sigma^2_{\text{diff}}(\mathbbm{z}, t)\nabla^2 f(\mathbbm{z})).
    \end{equation*}
    Applying It\^o's formula for semimartingales to a test function $f \in C_c^2(E)$, and introducing the compensated Poisson measure 
    $$
        \widetilde{\mathcal{N}}(\dif s,\dif u, \dif \mathbbm{v}) = \mathcal{N}(\dif s,\dif u, \dif \mathbbm{v}) - \dif s \dif u \dif \mathbbm{v},
    $$ 
    we decompose the evolution of $f(\mathbbm{z}_t)$ into drift and martingale components. Since integrals with respect to $W_t$ and $\widetilde{\mathcal{N}}$ are local martingales with vanishing expectations, the generator $\mathcal{L}$ is defined by the predictable drift terms:
    \[
    \mathcal{L}_tf(\mathbbm{z}) = \mathcal{L}^{\text{diff}}_tf(\mathbbm{z}) + \int_0^\infty \int_{[0,1]^N} \left( f(\Psi(\mathbbm{z}, \mathbbm{v})) - f(\mathbbm{z}) \right) \mathbbm{1}_{\{u \le \Lambda(\mu^N_{t-})\}} \dif u  \dif \mathbbm{v}.
    \]
    Evaluating the integral over the auxiliary variable $u$, we invoke the identity $\int_0^\infty \mathbbm{1}_{\{u \le \Lambda(\mathbbm{z})\}} \mathrm{d}u = \Lambda(\mathbbm{z})$, yielding the continuous generator:
    \begin{equation*}
    \mathcal{L}_t f(\mathbbm{z}) = \mathcal{L}^{\text{diff}}_t f(\mathbbm{z}) + \Lambda(\mu^N_{t-}) \int_{[0,1]^N} \left( f(\Psi(\mathbbm{z}, \mathbbm{v})) - f(\mathbbm{z}) \right)  \mathrm{d}\mathbbm{v}.
    \end{equation*}
    
    \textbf{Step 2: Generator of the Discrete Process.}
    We now analyze the discrete generator 
    $$
        L^{\Delta t}_t f(\mathbbm{z}) = \Delta t^{-1} \mathbb{E}[f(\mathbbm{z}_{t+\Delta t}) - f(\mathbbm{z}) \mid \mathbbm{z}_t = \mathbbm{z}].
    $$ 
    The process evolves via diffusion with probability $1 - \Lambda(\mu^N_{t-})\Delta t$ and undergoes resampling with probability $\Lambda(\mu^N_{t-})\Delta t$.
    
    For the diffusion step, Taylor expansion implies 
    $$
        \mathbb{E}_{\text{diff}}[f(\mathbbm{z}_{t+\Delta t})] = f(\mathbbm{z}) + \mathcal{L}^{\text{diff}}_t f(\mathbbm{z})\Delta t + o(\Delta t).
    $$ 
    For the resampling step, the expectation over the multinomial selection of indices is given by:
    \[
    \mathbb{E}_{\text{res}}[f(\mathbbm{z}_{\text{new}})] = \sum_{k_1=1}^N \dots \sum_{k_N=1}^N \left( \prod_{i=1}^N \widehat{w}^{(k_i)} \right) f\left( \mathbbm{z}(k_1, \dots, k_N) \right).
    \]
    where $\mathbbm{z}(k_1, \dots, k_N)$ represents the state where the $i$-th particle is replaced by the $k_i$-th parent.
    
    We now show this discrete sum is equivalent to the integral representation used in the continuous generator. Consider the auxiliary variables $\mathbbm{v} = (v_1, \dots, v_N) \in [0,1]^N$. The mapping $\Psi(\mathbbm{z}, \mathbbm{v})$ assigns particle $i$ to parent $k$ if $v_i$ falls into the $k$-th cumulative weight interval $I_k=(\sum_{i=1}^{k-1}\widehat{w}^{(i)},\sum_{i=1}^{k}\widehat{w}^{(i)}]$, where $|I_k| = \widehat{w}^{(k)}$.
    Since $v_1, \dots, v_N$ are independent uniform random variables, the integral over the hypercube factors as:
    \begin{align*}
    \int_{[0,1]^N} f(\Psi(\mathbbm{z}, \mathbbm{v}))  \dif \mathbbm{v}
    &= \int_0^1 \dots \int_0^1 f\left( \mathbbm{z}\left(\sum_{k_1} k_1 \mathbbm{1}_{I_{k_1}}(v_1), \dots, \sum_{k_N} k_N \mathbbm{1}_{I_{k_N}}(v_N)\right) \right) dv_1 \dots dv_N \\
    &= \sum_{k_1=1}^N \dots \sum_{k_N=1}^N \left( \int_{I_{k_1}} dv_1 \dots \int_{I_{k_N}} dv_N \right) f\left( \mathbbm{z}(k_1, \dots, k_N) \right) \\
    &= \sum_{k_1=1}^N \dots \sum_{k_N=1}^N \left( \prod_{i=1}^N \widehat{w}^{(k_i)} \right) f\left( \mathbbm{z}(k_1, \dots, k_N) \right).
    \end{align*}
    This establishes the following identity:
    \[
        \mathbb{E}_{\text{res}}[f(\mathbbm{z}_{\text{new}})] = \int_{[0,1]^N} f(\Psi(\mathbbm{z}, \mathbbm{v}))  \dif \mathbbm{v}.
    \]
    
    Substituting the results back into the expression for $L^{\Delta t}_t f(\mathbbm{z})$:
    \begin{align*}
    L^{\Delta t}_t f(\mathbbm{z}) &= \frac{1}{\Delta t} \Big[ (1 - \Lambda(\mu^N_{t-})\Delta t)(f(\mathbbm{z}) + \mathcal{L}_{\text{diff}}f(\mathbbm{z})\Delta t) + \Lambda(\mu^N_{t-})\Delta t \int_{[0,1]^N} f(\Psi(\mathbbm{z}, \mathbbm{v})) \dif \mathbbm{v} - f(\mathbbm{z}) + o(\Delta t) \Big] \\
    &= \mathcal{L}^{\text{diff}}_tf(\mathbbm{z}) - \Lambda(\mu^N_{t-}) f(\mathbbm{z}) + \Lambda(\mu^N_{t-}) \int_{[0,1]^N} f(\Psi(\mathbbm{z}, \mathbbm{v})) \dif \mathbbm{v} + o(1) \\
    &= \mathcal{L}^{\text{diff}}_t f(\mathbbm{z}) + \Lambda(\mu^N_{t-}) \int_{[0,1]^N} \left( f(\Psi(\mathbbm{z}, \mathbbm{v})) - f(\mathbbm{z}) \right) \dif \mathbbm{v} + o(1).
    \end{align*}
    Taking the limit as \(\Delta t \to 0\), we obtain 
    \[
        \lim_{\Delta t \to 0} L^{\Delta t}_t f(\mathbbm{z}) = \mathcal{L}_t f(\mathbbm{z}).
    \] 
    Since the generator of the discrete approximation converges uniformly to the generator of the continuous jump-diffusion process, the discrete system converges weakly to the continuous system.
\end{proof}

\begin{lemma}
    The stochastic system~\eqref{eq:cts_guided_sde} has a unique strong solution \(\mathbbm{z}_t\) on \(t\in [0,T]\).
    \label{lem:unique_strong_solution}
\end{lemma}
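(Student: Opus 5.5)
The plan is to treat the particle system as a finite-dimensional jump-diffusion SDE on $E=(\R^d\times\R)^N$ and apply the classical existence--uniqueness theory for SDEs with Lipschitz coefficients driven by Brownian motion and a Poisson random measure with finite intensity (as in Ikeda--Watanabe). The natural route is to first build the solution between resampling times, and then concatenate across the almost surely finitely many jumps.

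\textbf{Step 1: the diffusion part.} Rewrite the weight equation \eqref{eq:cts_guided_weight} in terms of $\ell_t^{(i)}:=\log w_t^{(i)}$, and expand $\dif r$ by It\^o's formula as in Step~1 of the proof of Lemma~\ref{lem:convergence_of_discrete_time_generator}. This gives an SDE for $(X^{(i),G},\ell^{(i)})_{i=1}^N$ whose coefficients depend only on $X$, because the $\ell$-equation has no feedback from $\ell$. The system is therefore triangular. The $X$-equations \eqref{eq:cts_guided_sde} decouple across $i$ and have globally Lipschitz drift with linear growth by Assumption~\ref{basic_assumption_on_diffusion}(2), and diffusion $V(t)$ bounded by Assumption~\ref{basic_assumption_on_diffusion}(3). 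Standard Picard iteration with Gronwall then gives a unique strong solution.

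Given this $X$, the $\ell$-equation is an explicit integral. Its integrands are:
\begin{itemize}
\item $\partial_t r$, $\nabla_x r$, $\Delta_x r$, bounded by item (1);
\item $\nabla_x r^\top v$, Lipschitz by item (4);
\item $\nabla_x r^\top\nabla_x G$ and $\|\nabla_x G\|^2$, bounded and Lipschitz because $\nabla_x G$ and $\nabla_x^2 G$ are bounded.
\end{itemize}
These are all progressively measurable with finite moments, so $\ell$ is uniquely defined pathwise. The joint coefficients are thus Lipschitz in the full state, and the pair $(X,\ell)$ is the unique strong solution on $[0,T]$ in the absence of jumps.

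\textbf{Step 2: adding resampling jumps.} By Assumption~\ref{assump:boundness_of_Lambda}, $\Lambda\le K$. Hence the points of $\mathcal N$ with $u\le K$ form a Poisson process of rate $K$ in time, and on $[0,T]$ there are almost surely finitely many candidate times $\tau_1<\tau_2<\cdots$.

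Proceed inductively: solve the diffusion on $[\tau_{m-1},\tau_m)$ using Step~1 with initial state $\mathbbm{z}_{\tau_{m-1}}$. At $\tau_m$, accept the jump iff its mark satisfies $u_m\le\Lambda(\mu^N_{\tau_m-})$, and if accepted set $\mathbbm{z}_{\tau_m}=\Psi(\mathbbm{z}_{\tau_m-},\mathbbm{v}_m)$. Both $\Psi$ and the acceptance rule are measurable functions of the left limit and the marks, so the construction is adapted, and it yields a c\`adl\`ag strong solution on $[0,T]$.

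\textbf{Step 3: uniqueness.} Any two solutions coincide on $[0,\tau_1)$ by Step~1. They have the same left limit at $\tau_1$, and therefore the same post-jump state, since the jump is a deterministic function of that left limit and the shared marks. Induction over the finitely many $\tau_m$ gives pathwise uniqueness.

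\textbf{Main obstacle.} The jump map $\Psi$ is not Lipschitz: the ancestor map $\mathcal A$ is discontinuous in $\widehat{\mathbbm w}$, and $\Lambda(\mu^N)$ may be merely measurable. This is why I would avoid a global Lipschitz argument on the whole jump-diffusion and use the interlacing construction instead. Interlacing only requires measurability of the jump map and finiteness of the number of jumps, which the bounded intensity provides.

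A second point to check is that $\overline w_t$ stays positive, so that $\log\overline w_t$ in $\Psi$ is finite. This holds because each $w^{(i)}=e^{\ell^{(i)}}$ is positive and finite on every interval where $\ell^{(i)}$ is a well-defined finite semimartingale.
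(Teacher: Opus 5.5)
Your proposal is correct and follows the same route as the paper. Both use an interlacing construction: solve the jump-free diffusion between resampling times, apply $\Psi$ at each accepted jump, and use $\Lambda\le K$ to dominate the jumps by a rate-$K$ Poisson process so that only finitely many occur on $[0,T]$. Your extra care (using the triangular structure so the log-weights are an explicit integral given $X$, and thinning a fixed set of candidate times) is a mild refinement that avoids the paper's unchecked claim that the full diffusion coefficients are globally Lipschitz. That claim is not justified by the assumptions, since the log-weight drift contains $\Delta_x r$, which is only assumed bounded. For the same reason you can drop your own remark that the joint coefficients are Lipschitz, which your argument does not need.
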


\begin{proof}[Proof of Lemma~\ref{lem:unique_strong_solution}]
    Denote \(\tau_0=0\), and \(\tau_i>0\) to be the time of \(i\)-th jump. Under our regularity assumptions on \(v,V,r,G\), the drift term \(v_{\rm diff}\) and the diffusion coefficient \(\sigma_{\rm diff}\) are Lipschitz in the whole path space. Due to classic existence-uniqueness theorem of SDEs, the system \[\dif \mathbbm{z}_t = v_{\text{diff}}(\mathbbm{z}_t) \dif t + \sigma_{\text{diff}}(\mathbbm{z}_t) \dif W_t\] has a unique strong solution \(\mathbbm{y}_t^{(0)}\) on \([\tau_0,+\infty)\).
    
    The first jump moment \(\tau_1\) can be expressed as 
    \[
        \tau_1 = \inf \left\{ t > 0 : \int_0^t \int_0^\infty\int_{[0,1]^N} \mathbbm{1}_{\{u \le \Lambda(\mathbbm{Y}_{s}^{(0)}\}} \mathcal{N}(\dif s, \dif u, \dif \mathbbm{v}) \ge 1 \right\}.
    \] 
    Note that since \(\Lambda\) is uniformly bounded, \(\tau_1>0\) a.s. We define the solution in \(t\in [0,\tau_1)\) as \(\mathbbm{z}_t = \mathbbm{y}_t^{(0)}\).
    
    At time \(\tau_1\), we process a jump. The state right before jump is \(\mathbbm{z}_{\tau_1-}=\mathbbm{y}_{\tau_1-}^{(0)}\), and the after-jump state is defined as \(\mathbbm{z}_{\tau_1} = \Psi(\mathbbm{z}_{\tau_1-},\mathbbm{v}_{\tau_1})\), where \(\mathbbm{v}_{\tau_1}\) is the mark variable the Poisson measure sampled at \(\tau_1\). Since \(\mathbbm{z}_{\tau_1-}\) is \(\mathcal{F}_{\tau_1-}\) measurable, and \(\mathbbm{v}_{\tau_1-}\) is a determined variable at time \(\tau_1\), thus \(\mathbbm{z}_{\tau_1}\) is well-defined.
    
    We do induction on the jump count \(k\geq 1\). Suppose we have already construct the unique strong solution \(\mathbbm{z}_t\) up to the \(k\)-th jump moment. Using \(\mathbbm{z}_{\tau_k}\) as the new initials, the diffusion process has a unique strong solution \(\mathbbm{y}_t^{(k)}\). The next jump moment is defined as 
    \[
        \tau_{k+1} = \inf \left\{ t > \tau_k : \int_{\tau_k}^t \int_0^\infty\int_{[0,1]^N} \mathbbm{1}_{\{u \le \Lambda(\mathbbm{Y}_{s}^{(k)}\}} \mathcal{N}(\dif s, \dif u, \dif \mathbbm{v}) \ge 1 \right\}.
    \] 
    On the interval \([\tau_k,\tau_{k+1})\), we define the solution as \(\mathbbm{z}_t = \mathbbm{y}_t^{(k)}\); and at time \(\tau_{k+1}\), define \(\mathbbm{z}_{\tau_{k+1}}=\Psi(\mathbbm{z}_{\tau_{k+1}-},\mathbbm{v}_{\tau_{k+1}})\).
    
    Therefore, the strong solution can be defined on \([0,\tau_{\infty})\), where \(\tau_{\infty}:=\lim_{k\rightarrow \infty}\tau_k\). In order to show that a unique strong solution exists on \([0,T]\), we need to show that \(\tau_{\infty}=\infty\) a.s. Suppose \(\Lambda\) is uniformly bounded by some constant \(K\). Denote \(N_{t}^K\) as a homogeneous Poisson process with intensity \(K\). By stochastic domination, we have \[
        \P(\tau_k\leq T)\leq \P(N_T^K\geq k).
    \] 
    Since \(N_T^K<\infty\) a.s., we derive that \(\lim_{k \to \infty} P(\tau_k \le T) = 0\), which means that \(\tau_k\rightarrow\infty\) as \(k\rightarrow \infty\). 
    
    Combining the above discussion, we have that for all finite \(T\), we can uniquely construct the strong solution of \eqref{eq:cts_guided_sde} by finite-step induction.
\end{proof}

\begin{lemma}
\(\{X^{(i),G,\Delta t},w^{(i),\Delta t}\}_{i=1}^N\) is tight in the Skorokhod space \(\mathcal{D}([0,T];(\R^d \times \R)^{N})\).
\label{lem:relatively_tight}
\end{lemma}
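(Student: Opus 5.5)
We prove tightness of the piecewise-constant interpolations $\mathbbm{z}^{\Delta t}_t := \{X^{(i),G,\Delta t}_t, w^{(i),\Delta t}_t\}_{i=1}^N$ in $\mathcal D([0,T];(\R^d\times\R)^N)$ via the Aldous--Rebolledo criterion. We verify two conditions.
\begin{enumerate}
\item[(i)] Compact containment: for every fixed $t$, the family $\{\mathbbm{z}^{\Delta t}_t\}_{\Delta t}$ is tight in $E$. It suffices to bound $\E\sup_{t\le T}\|\mathbbm{z}^{\Delta t}_t\|^2$ uniformly in $\Delta t$.
\item[(ii)] Aldous' condition: for every family of stopping times $\tau_{\Delta t}\le T$ and every $\delta_{\Delta t}\downarrow 0$,
\[
\mathbbm{z}^{\Delta t}_{\tau_{\Delta t}+\delta_{\Delta t}}-\mathbbm{z}^{\Delta t}_{\tau_{\Delta t}}\to 0
\]
in probability.
\end{enumerate}
Throughout, we work with $\log w$ rather than $w$. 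The weight coordinate enters only through $\log w^{(i)}$, and by the continuous mapping theorem tightness of $\log w$ transfers to $w$.

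For (i), we use the recursion. Between resampling events, each $X^{(i),G}$ is an Euler--Maruyama step of the guided SDE. By Assumption~\ref{basic_assumption_on_diffusion}, the drift $v+V^2\nabla_x G$ has linear growth and $V\le V_{\max}$, so the standard discrete Gronwall argument gives
\[
\E\max_k\|X^{(i),G}_{t_k}\|^2\le C\bigl(1+\E\|X_0\|^2\bigr),
\]
uniformly in $\Delta t$. A resampling event replaces $X^{(i)}$ by some $X^{(j)}$, so $\max_i\|X^{(i)}\|$ does not increase at jumps, and the bound survives resampling. For $\log w^{(i)}$, the increment $\log\beta^{(i)}_{t_k,t_{k+1}}$ equals
\[
r(X_{t_{k+1}},t_{k+1})-r(X_{t_k},t_k)-V\nabla_x G^\top\sqrt{\Delta t}\,\xi-\tfrac12V^2\|\nabla_x G\|^2\Delta t.
\]
Its conditional mean is $O(\Delta t)$ and its conditional second moment is $O(\Delta t)$. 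This uses boundedness of $\nabla_x r$, $\nabla_x^2 r$, $\partial_t r$ and linear growth of $\nabla_x G$, combined with the moment bound on $X$. The reward difference is controlled by a Taylor expansion with Lipschitz $r$. Doob's inequality on the martingale part then bounds $\E\sup_k|\log w^{(i)}_{t_k}|^2$. At resampling, $\log w$ is reset to $\log\overline w$, and since $\overline w\le \max_i w^{(i)}$ we have $|\log \overline w|\le\max_i|\log w^{(i)}|$, so the bound is again preserved.

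For (ii), we split the increment over $[\tau,\tau+\delta]$ into a diffusive part and a jump part. The diffusive part has second moment $\le C(\delta+\Delta t)$ by the same drift and martingale bounds, using the strong Markov property at the grid-valued stopping time. The jump part is nonzero only if a resampling occurs in $(\tau,\tau+\delta]$. By Assumption~\ref{assump:boundness_of_Lambda}, each grid point in this window triggers resampling with probability $\le K\Delta t$, so the probability of at least one resampling is $\le K(\delta+\Delta t)\to 0$. Combining the two parts gives convergence in probability.

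The main obstacle is the interaction between resampling and the moment bounds, namely showing the $\max_i$ bounds are genuinely monotone under the map $\Psi$. For positions this is immediate. For $\log\overline w$ it needs care, since the arithmetic mean can be dominated by one huge weight. We handle this with the estimate $\min_i\log w^{(i)}\le\log\overline w\le\max_i\log w^{(i)}$, applied to the max over particles rather than to individual particles. A secondary concern is that the time grid is the source of the interpolation jumps. Because of that, we invoke Aldous' criterion in the form allowing an $o(1)$ modulus, or equivalently bound $\P(w'(\mathbbm{z}^{\Delta t},\delta)>\epsilon)$ directly.
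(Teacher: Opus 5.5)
Your proposal is correct and follows essentially the same route as the paper: Aldous's criterion, with compact containment coming from a uniform second-moment bound, and the increment over $[\tau,\tau+\delta]$ split into drift, martingale and resampling parts, where the resampling part is controlled by a union bound using $\Lambda\le K$. Your variations are sound: bounding $\E\sup_t$ instead of $\sup_k\E$ (which is what properly controls the drift sum over the random window), and using monotonicity of $\max_i\|X^{(i)}\|$ and $\max_i|\log w^{(i)}|$ under resampling instead of the paper's factor-$N$ conditional moment bound. One proviso: when chaining your Gronwall/Doob estimates across resets, the constants accumulate with the number of resampling events, which is dominated by a $\mathrm{Binomial}(\lceil T/\Delta t\rceil, K\Delta t)$ variable and so has moments bounded uniformly in $\Delta t$.
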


\begin{proof}[Proof of Lemma~\ref{lem:relatively_tight}]

    Let \[
        \mathbbm{z}^{\Delta t}_t=(X^{(1),G,\Delta t}_t,w_t^{(1),\Delta t},X^{(2),G,\Delta t}_t,w_t^{(2),\Delta t},\cdots, X^{(N),G,\Delta t}_t,w_t^{(N),\Delta t}) \in E = (\R^d \times \R)^N
    \] 
    denotes the state after the diffusion increment but before the resampling jump.

    We use Aldous-Billingsley tightness criterion~(Corollary under Theorem 13.2 of \cite{billingsley1999convergence}) for the proof of tightness. Specifically, we focus on the following two properties:
    
    \textbf{Step 1: Pointwise Moment Bound.} We need to prove \(\limsup_{\Delta t>0}\sup_{k} \mathbb{E}[\|X_{t_k}\|^2] < \infty\). In the Euler-Maruyama step, 
    \[
        \mathbbm{z}^{\Delta t}_{t_{k+1}-}=\mathbbm{z}^{\Delta t}_{t_k}+v_{\rm diff}(\mathbbm{z}^{\Delta t}_{t_k},t_k)\Delta t+\sigma_{\rm diff}(\mathbbm{z}^{\Delta t}_{t_k},t_k)\sqrt{\Delta t}\xi_{t_k},
    \] where \(\xi_{t_k}\sim \mathcal{N}(0,I_{(d+1)N})\). Under regularity assumptions, there exists constants \(C>0\) independent of time \(t_k\) such that
    \begin{align*}
    \mathbb{E}[\|\mathbbm{z}^{\Delta t}_{t_{k+1}-}\|^2 \mid \mathbbm{z}^{\Delta t}_{t_k}] &= \|\mathbbm{z}^{\Delta t}_{t_k}\|^2 + 2(\mathbbm{z}^{\Delta t}_{t_k})^\top v_{\rm diff}(\mathbbm{z}^{\Delta t}_{t_k},t)\Delta t + 2\mathbb{E}[(\mathbbm{z}^{\Delta t}_{t_k})^\top\sigma_{\rm diff}(\mathbbm{z}^{\Delta t}_{t_k},t_k)\sqrt{\Delta t}\xi_{t_k}\mid \mathbbm{z}^{\Delta t}_{t_k}]\\
    &\quad + \E[(\sigma_{\rm diff}(\mathbbm{z}^{\Delta t}_{t_k},t_k)\sqrt{\Delta t}\xi_{t_k})^2 \mid \mathbbm{z}^{\Delta t}_{t_k}] + o(\Delta t) \\
    &\le \|\mathbbm{z}^{\Delta t}_{t_k}\|^2 + \big(\|\mathbbm{z}^{\Delta t}_{t_k}\|^2+\|v_{\rm diff}(\mathbbm{z}^{\Delta t}_{t_k},t_k)\|^2 + \|\sigma_{\rm diff}(\mathbbm{z}^{\Delta t}_{t_k},t_k)\|^2\big)\Delta t + o(\Delta t)\\
    &\le \|\mathbbm{z}^{\Delta t}_{t_k}\|^2 (1 + C \Delta t) + C \Delta t.
    \end{align*}
    
    In the resampling step, the conditional second moment is given by
    \[
    \begin{aligned}
        \mathbb{E}[\|\mathbbm{z}^{\Delta t}_{t_{k+1}}\|^2 \mid \mathbbm{z}^{\Delta t}_{t_{k+1}-}] =& \sum_{i=1}^N \widehat{w}_{t_k}^{(i)} \|{X}_{t_{k+1}-}^{^{(i),G,\Delta t}}\|^2+N\log \overline{w}_{t_{k+1}-}\\
        \leq& N \left(\max_{1\leq i\leq N}\|{X}_{t_{k+1}-}^{^{(i),G,\Delta t}}\|^2+\max_{1\leq i\leq N}\log w_{t_{k+1}-}^{(i)}\right)\\
        \leq& N\|\mathbbm{z}^{\Delta t}_{t_{k+1}-}\|^2.
    \end{aligned}
    \]
    Counting in the jump intensity \(p_{k} = \Lambda(\mu_{t_{k+1}-}^N)\Delta t\), we have that \[
        \mathbb{E}[\|\mathbbm{z}^{\Delta t}_{t_{k+1}}\|^2] \le KN \Delta t\E[\|\mathbbm{z}^{\Delta t}_{t_{k+1}-}\|^2],
    \]
    
    Finally, by Discrete Gronwall's Lemma, we get
    \[
    \sup_{0 \le k \le T/\Delta t}\mathbb{E}[\|\mathbbm{z}^{\Delta t}_{t_{k}}\|^2]
    \le \left( \mathbb{E}[\|\mathbbm{z}^{\Delta t}_0\|^2 + C' T \right) (1 + C' \Delta t)^{\frac{T}{\Delta t}} \le \left( \mathbb{E}[\|\mathbbm{z}^{\Delta t}_0\|^2] + C' T \right) e^{C' T}.
    \]
    for some constant \(C'\) depending only on bound on \(r,G,K,N\), which coincides with our claim.
    
    \textbf{Step 2: Aldous Criterion.} 
    We verify the Aldous criterion for tightness in $D([0,T]; \R^d)$. Let $\tau$ be any stopping time with respect to the filtration generated by $\mathbbm{z}^{\Delta t}_{t}$, bounded by $T$. Let $\delta > 0$ and $\varepsilon > 0$. We aim to show that:
    \[
    \lim_{\delta \downarrow 0} \limsup_{\Delta t \to 0} \P\left( \|\mathbbm{z}^{\Delta t}_{\tau+\delta} - \mathbbm{z}^{\Delta t}_{\tau}\| > \varepsilon \right) = 0.
    \]
    Since the process $\mathbbm{z}^{\Delta t}_{t}$ is piecewise constant, the value at time $s$ corresponds to the discrete state $\mathbbm{z}^{\Delta t}_{t_k}$ with $t_k \le s < t_{k+1}$. The increment over the interval $[\tau, \tau+\delta]$ involves summing the discrete updates over the time indices 
    \[\mathcal{K}_{\tau, \delta} := \{k : \tau \le t_k < \tau+\delta\}.\] 
    The number of steps in this interval is bounded by $\lceil \delta/\Delta t \rceil$.
    
    We decompose the total increment into three components: the cumulative continuous drift $\mathcal{A}_{\tau,\delta}$, the cumulative diffusive noise $\mathcal{M}_{\tau,\delta}$, and the cumulative resampling jumps $\mathcal{J}_{\tau,\delta}$:
    \[
    \mathbbm{z}^{\Delta t}_{\tau+\delta} - \mathbbm{z}^{\Delta t}_{\tau} = 
    \underbrace{\sum_{k \in \mathcal{K}_{\tau, \delta}} v_{\rm diff}(\mathbbm{z}^{\Delta t}_{t_k},t_k) \Delta t}_{\mathcal{A}_{\tau,\delta}} + 
    \underbrace{\sum_{k \in \mathcal{K}_{\tau, \delta}} \sigma_{\rm diff}(\mathbbm{z}^{\Delta t}_{t_k},t_k)\sqrt{\Delta t}\xi_{t_k}}_{\mathcal{M}_{\tau,\delta}} + 
    \underbrace{\sum_{k \in \mathcal{K}_{\tau, \delta}} (\mathbbm{z}^{\Delta t}_{t_{k+1}} - \mathbbm{z}^{\Delta t}_{t_{k+1}-})}_{\mathcal{J}_{\tau,\delta}}.
    \]
    By the triangle inequality, it suffices to bound the probability of each term exceeding $\varepsilon/3$. We rely on the pointwise moment bound uniformly on \(t_i,i=1,2,\cdots, T/\Delta t\) in the previous passage.
    
    \begin{enumerate}
        \item \textbf{Drift Term $\mathcal{A}_{\tau,\delta}$.}
        Using Markov's inequality and the linear growth condition $\|v_{\rm diff}(\mathbbm{z},t)\| \le K(1+\|\mathbbm{z}\|)$ for all \(\mathbbm{z}\in \R^{(d+1)\times N},t\in [0,T]\), we obtain
        \[
        \P(\|\mathcal{A}_{\tau,\delta}\| > \frac{\varepsilon}{3}) \le \frac{3}{\varepsilon} \mathbb{E}\left[ \sum_{k \in \mathcal{K}_{\tau, \delta}} \|v_{\rm diff}(\mathbbm{z}^{\Delta t}_{t_k},t_k)\| \Delta t \right].
        \]
        Since the interval length is at most $\delta + \Delta t$, we have:
        \[
        \mathbb{E}\left[ \sum_{k \in \mathcal{K}_{\tau, \delta}} \|v_{\rm diff}(\mathbbm{z}^{\Delta t}_{t_k},t_k)\| \Delta t \right] \le (\delta + \Delta t) \sup_{k} \mathbb{E}[K_1(1+\|\mathbbm{z}^{\Delta t}_{t_k}\|)] \le C_1 (\delta + \Delta t).
        \]
        \(K,C_1\) are constants independent of time. Thus, we have that \[\limsup_{\Delta t \to 0} \P(\|\mathcal{A}_{\tau,\delta}\| > \varepsilon/3) \le \frac{3C_1 \delta}{\varepsilon},\] which vanishes as \(\delta \downarrow 0\).
    
        \item \textbf{Diffusion Term $\mathcal{M}_{\tau,\delta}$.}
        The term $\mathcal{M}_{\tau,\delta}$ approximates a continuous martingale. Using Chebyshev's inequality and the orthogonality of martingale increments, we have
        \begin{align*}
        \P(\|\mathcal{M}_{\tau,\delta}\| > \frac{\varepsilon}{3}) &\le \frac{9}{\varepsilon^2} \mathbb{E}\left[ \left\| \sum_{k \in \mathcal{K}_{\tau, \delta}} \sigma_{\rm diff}(\mathbbm{z}^{\Delta t}_{t_k},t_k)\sqrt{\Delta t}\xi_k \right\|^2 \right]\\
        &= \frac{9}{\varepsilon^2} \sum_{k \in \mathcal{K}_{\tau, \delta}} \mathbb{E}[\|\sigma_{\rm diff}(\mathbbm{z}^{\Delta t}_{t_k},t_k)\|^2] \Delta t\\
        &\le \frac{9}{\varepsilon^2} \sup_{k \in \mathcal{K}_{\tau, \delta}} \mathbb{E}[K_2(1+\|\mathbbm{z}^{\Delta t}_{t_k}\|)] (\delta+\Delta t)\le \frac{9C_2(\delta+\Delta t)}{\varepsilon^2}
        \end{align*}
        where the second last inequality is by the pointwise moment bound and the boundedness of $V$.
        Thus, we have that \[\limsup_{\Delta t \to 0} \P(\|\mathcal{M}_{\tau,\delta}\| > \varepsilon/3) \le \frac{9C_2 \delta}{\varepsilon^2},\] which vanishes as \(\delta \downarrow 0\).
    
        \item \textbf{Resampling Jump Term $\mathcal{J}_{\tau,\delta}$.}
        The term $\mathcal{J}_{\tau,\delta}$ represents the cumulative displacement due to resampling. Recall that at each step $k$, the probability of a particle undergoing a resampling update is controlled by the jump intensity:
        \[
        p_k = \Lambda(\mu^N_{t_k-})\Delta t \le K \Delta t.
        \]
        
        The cumulative jump term $\mathcal{J}_{\tau,\delta}$ is non-zero only if at least one resampling event occurs in the interval $[\tau, \tau+\delta]$. Let $E_{\tau, \delta}$ be the event that at least one jump occurs during this interval. By the union bound \[\P(E_{\tau, \delta}) \le \sum_{k \in \mathcal{K}_{\tau, \delta}} p_k \le \left\lceil \frac{\delta}{\Delta t} \right\rceil \overline{\Lambda} \Delta t \le \overline{\Lambda} \delta,\] the probability that the jump term magnitude exceeds $\varepsilon/3$ is bounded by the probability that a jump occurs at all:
        \[
        \P(\|\mathcal{J}_{\tau,\delta}\| > \varepsilon/3) \le \P(\mathcal{J}_{\tau,\delta} \neq 0) \leq \P(E_{\tau, \delta}) \le \overline{\Lambda} \delta.
        \]
        Taking the limit $\delta \downarrow 0$, this probability vanishes.
    \end{enumerate}
    
    Combining the estimates for the three components, we have \[
        \lim_{\delta \downarrow 0} \limsup_{\Delta t \to 0} \P\left( \|\mathbbm{z}^{\Delta t}_{\tau+\delta} - \mathbbm{z}^{\Delta t}_{\tau}\| > \varepsilon \right) = 0.
    \] 
    This verifies the Aldous criterion. Together with the pointwise tightness, the sequence $\{\mathbbm{z}^{\Delta t}_{t}\}$ is tight in $\mathcal{D}([0,T]; (\R^d \times \R)^{\times N})$ and the proof is complete.
\end{proof}

\subsection{Proof of Lemma~\ref{lem:continuous_is_limit_of_discrete}}

We are finally ready to prove the main lemma that establishes the convergence of the discrete-time particle system to the continuous-time particle system.

\begin{proof}[Proof of Lemma~\ref{lem:continuous_is_limit_of_discrete}]

    With these lemmas, we conclude the convergence. Since \eqref{eq:cts_guided} admits a unique strong solution, by an extended Yamada-Watanabe Theorem in \cite{BarczyLiPap2015} we know that pathwise uniqueness indicates uniqueness in law. By Theorem 8.10, Chapter 4 in \cite{ethierkurtz1986}, we conclude that \(\{X^{(i),G,\Delta t},w^{(i),\Delta t}\}_{i=1}^N \text{ converges weakly to } \{X^{(i),G},w^{(i)}\}_{i=1}^N\) as \(\Delta t\rightarrow 0\).
\end{proof}
    
\section{Missing Proofs in Section~\ref{sec:Equivalent Margin between Girsanov SMC and AFDPS}}
\label{sec:Proofs in Section Equivalent Margin between Girsanov SMC and AFDPS}

In this section, we provide the proofs of the theoretical results in Section~\ref{sec:Equivalent Margin between Girsanov SMC and AFDPS}.
All proofs are based on assumptions in Appendix~\ref{app:URGE_math}.

\subsection{AFDPS / FK-Corrector Reweighting}
\label{sec:AFDPS reweighting function}

We first derive the explicit form of the reweighting term used in the \texttt{AFDPS} dynamics \cite{chen2025solving}.

\texttt{AFDPS} provides an alternative strategy for sampling from the posterior distribution $q(x)\propto p_{\text{data}}(x)e^{r(x)}$. In contrast to \texttt{URGE}, \texttt{AFDPS} can be interpreted as solving a high-dimensional PDE that governs posterior distribution evolution using the (stochastic) weighted particle method. For comparison with the notation of \texttt{URGE}, we set the parameters in their notations as \(\bm{H}(x,t)=v(x,t)\) and \(\mu_{\bm{y}}=-r\). The underlying unguided dynamics are defined by the canonical SDE \[\dif X_t = v(X_t,t) \dif t + V(t) \dif W_t,\] on $[0,T]$, with $p_0=\mathcal{N}(0,I_d)$ and terminal density $p_T=p_{\text{data}}$. Introducing guidance yields the particle evolution
\[
\dif X_t = \bigl(v(X_t,t) + V^2(t)\nabla_x r(X_t,t)\bigr) \dif t + V(t) \dif W_t,
\]
where $r(x,t)$ denotes the reward. 

{The diffusion horizon is partitioned into intervals $0=t_0<t_1<\cdots<t_K=T$. We initialize an $N$-particle system with $X^{(i)}_0\sim q_0(x)\propto p_0(x)e^{r(x,0)}$ and $\beta^{(i)}_0=1$ for $i=1,\dots,N$. On each interval $[t_{k-1},t_k)$, the particle states and weights evolve according to the Euler-Maruyama discretization of the guided dynamics. At grid points $t_k$ ($k=1,\dots,K-1$), we perform multinomial resampling with a prescribed threshold based on $\{X^{(i)}_{t_k},\beta^{(i)}_{t_k}\}_{i=1}^N$: 
\[
\{\widehat X^{(i)}_{t_k}\}_{i=1}^N \sim \mathrm{Categorical}\left(\{X^{(i)}_{t_k}\}_{i=1}^N,\left\{\frac{\beta^{(i)}_{t_k}}{\sum_{i=1}^N \beta^{(i)}_{t_k}}\right\}_{i=1}^N\right),
\]
then set $X^{(i)}_{t_k}=\widehat X^{(i)}_{t_k}$ and reset $\beta^{(i)}_{t_k}=1$.} Here, \(\beta^{(i)}_{t}\) denote the resampling weight of \(i\)-th particle at time \(t\).

This decomposition of state and weight evolution follows from applying the weighted particle method to the unnormalized posterior \(Q_t(x)=p_t(x)e^{r(x,t)}\). Its dynamics satisfy the Fokker-Planck equation~\citep[Lemma B.1]{chen2025solving}:
\begin{align}
\frac{\partial}{\partial t} Q_t(x)
&= -\nabla_x\cdot\Bigl[\bigl(v(x,t)+V^2(t)\nabla_x r(x,t)\bigr) Q_t(x)\Bigr]
    + \frac{1}{2}V^2(t)\Delta_x Q_t(x) \nonumber\\
&\quad + \Bigl[\frac{1}{2}V^2(t)\bigl(\|\nabla_x r(x,t)\|^2+\Delta_x r(x,t)\bigr)
    + \nabla_x r(x,t)^\top v(x,t) + \partial_t r(x,t)\Bigr] Q_t(x),
\label{eq:PDE_dynamics_Qy}
\end{align}

The reweighting term in \eqref{eq:PDE_dynamics_Qy} differs from that in the original equation by an additional $\partial_t r(x,t)$, which arises from the time dependence of the reward function $r$. Specifically, when $r$ depends on time, we have
\[
\frac{\partial}{\partial t} p_t(x)
= \frac{\partial}{\partial t}\bigl(Q_t(x) e^{-r(x,t)}\bigr)
= e^{-r(x,t)}\bigl(\partial_t Q_t(x) - \partial_t r(x,t) Q_t(x)\bigr).
\]
Consequently, the coefficient of $Q_t$ in the reweighted dynamics must include an additional $\partial_t r$ term.

The divergence term of \eqref{eq:PDE_dynamics_Qy} corresponds to the drift of the guided diffusion $X_t$, and the final term corresponds to the growth rate of the particle weight $\beta_t$. A detailed proof of this correspondence is provided in Lemmas B.2 and B.4 of \cite{chen2025solving}.

To align the analysis with the guided state diffusion process using $G$, we replace the drift term $v(x,t)+V^2(t)\nabla_x r(x,t)$ with $v(x,t)+V^2(t)\nabla_x G(x,t)$. The corresponding modification in the weight dynamics is given by the following lemma.

\begin{lemma}
Let the particle state evolve according to the guided SDE \[\dif X_t = \bigl(v(X_t,t) + V^2(t)\nabla_x G(X_t,t)\bigr) \dif t + V(t) \dif W_t,\] then the unnormalized posterior \(Q_t = p_t e^{r(\cdot, t)}\) satisfies
\[
    \begin{aligned}
        \frac{\partial}{\partial t} Q_t(x)
        &= -\nabla_x\cdot\bigl((v(x,t)+V^2(t)\nabla_x G(x,t))Q_t(x)\bigr)
           + \frac{1}{2}V^2(t)\Delta_x Q_t(x) \\
        &\quad + \Bigg[
            \partial_t r(x,t)
           +\nabla_x r(x,t)^\top v(x,t)
           + V^2(t)\Big(
              -\frac{1}{2}\big(\|\nabla_x r(x,t)\|^2+\Delta_x r(x,t)\big)\\
        &\qquad
              + \Delta_x G(x,t) 
              + \nabla_x \log p_t(x)^\top \nabla_x(G-r)(x,t)
              + \nabla_x G(x,t)^\top \nabla_x r(x,t)
           \Big)
        \Bigg] Q_t(x).
    \end{aligned}
\]
\end{lemma}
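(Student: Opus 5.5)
The plan is to derive the claimed equation by rewriting the already established evolution equation~\eqref{eq:PDE_dynamics_Qy} for $Q_t=p_te^{r(\cdot,t)}$, whose transport part uses the $r$-guided drift $v+V^2\nabla_x r$. The quantity $Q_t$ itself does not depend on which guidance we use to transport particles: $p_t$ is the marginal of the unguided dynamics. So only the split between the transport (divergence) term and the reaction (growth) term has to change. I will therefore move the discrepancy between the two drifts, $V^2\nabla_x(G-r)$, out of the divergence term and into the zeroth-order coefficient.

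\textbf{Step 1: check the starting equation.} I will verify~\eqref{eq:PDE_dynamics_Qy} directly from $\partial_tp_t=-\nabla_x\cdot(vp_t)+\tfrac12V^2\Delta_xp_t$, using
\[
\partial_tQ_t=e^{r}\partial_tp_t+\partial_tr\,Q_t,
\qquad
p_t=Q_te^{-r}.
\]
Expanding $\nabla_x\cdot(vQ_te^{-r})$ and $\Delta_x(Q_te^{-r})$ and regrouping recovers~\eqref{eq:PDE_dynamics_Qy}, including the extra $\partial_t r$ term already explained in the text. This step is routine, or can simply be cited from \citep[Lemma B.1]{chen2025solving}.

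\textbf{Step 2: exchange the drift.} I will write
\[
-\nabla_x\cdot\bigl((v+V^2\nabla_x r)Q_t\bigr)
=-\nabla_x\cdot\bigl((v+V^2\nabla_x G)Q_t\bigr)+V^2\,\nabla_x\cdot\bigl(\nabla_x(G-r)\,Q_t\bigr).
\]
Expanding the last term gives
\[
V^2\bigl(\Delta_x(G-r)\,Q_t+\nabla_x(G-r)^\top\nabla_x Q_t\bigr).
\]
The key identity is
\[
\nabla_x Q_t=Q_t\bigl(\nabla_x\log p_t+\nabla_x r\bigr),
\]
which holds because $Q_t=p_te^{r}$. With it, the extra term becomes $Q_t$ times
\[
V^2\bigl(\Delta_xG-\Delta_xr+\nabla_x\log p_t^\top\nabla_x(G-r)+\nabla_xG^\top\nabla_xr-\|\nabla_xr\|^2\bigr).
\]

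\textbf{Step 3: collect the coefficient of $Q_t$.} I will add this to the existing reaction coefficient $\tfrac12V^2(\|\nabla_xr\|^2+\Delta_xr)+\nabla_xr^\top v+\partial_tr$. The $\|\nabla_x r\|^2$ and $\Delta_x r$ contributions combine from $+\tfrac12$ and $-1$ into $-\tfrac12(\|\nabla_xr\|^2+\Delta_xr)$. The remaining terms are exactly $\Delta_xG+\nabla_x\log p_t^\top\nabla_x(G-r)+\nabla_xG^\top\nabla_xr$, which reproduces the stated formula.

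The only step needing care is Step 2. One has to keep track of the fact that $\log p_t$ (not $\log Q_t$) appears, because $\nabla_x\log Q_t=\nabla_x\log p_t+\nabla_x r$ generates the additional $\nabla_xG^\top\nabla_xr-\|\nabla_xr\|^2$ terms. One must also make sure the signs of the $\tfrac12$-terms flip correctly. Regularity of all terms is guaranteed by Assumption~\ref{basic_assumption_on_diffusion} together with $p_t\in C^2$, so the product and divergence manipulations are legitimate.
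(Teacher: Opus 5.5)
Your proposal is correct and matches the paper's proof essentially line for line: the paper also starts from the $r$-guided equation \eqref{eq:PDE_dynamics_Qy} and rewrites $-\nabla_x\cdot((v+V^2\nabla_x r)Q_t)$ as $-\nabla_x\cdot((v+V^2\nabla_x G)Q_t)+V^2\nabla_x\cdot(\nabla_x(G-r)Q_t)$, then expands with $\nabla_x Q_t=Q_t(\nabla_x\log p_t+\nabla_x r)$ and collects the zeroth-order coefficient exactly as in your Steps 2--3. The only difference is that you rederive \eqref{eq:PDE_dynamics_Qy} from the unguided Fokker--Planck equation, and your computation there is right, whereas the paper simply cites Lemma B.1 of Chen et al.; this makes your version self-contained without changing the argument.
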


\begin{proof}
By Lemmas B.2 and B.4 in \cite{chen2025solving}, the weight dynamics follow from the Fokker-Planck equation for $Q_t$, with the drift replaced by $v(x,t)+V^2(t)\nabla_x G(x,t)$. 

A direct computation yields
\begin{align*}
\frac{\partial}{\partial t}Q_t(x)
&= -\nabla_x\cdot\bigl((v(x,t)+V^2(t)\nabla_x G(x,t))Q_t(x)\bigr)
   + V^2(t)\nabla_x\cdot\bigl((\nabla_x G(x,t)-\nabla_x r(x,t))Q_t(x)\bigr)
   + \frac{1}{2}V^2(t)\Delta_x Q_t(x) \\
&\quad + \Bigl(\frac{1}{2}V^2(t)(\|\nabla_x r(x,t)\|^2+\Delta_x r(x,t))
   + (\nabla_x r(x,t))^\top v(x,t) + \partial_t r(x,t)\Bigr)Q_t(x) \\
&= -\nabla_x\cdot\bigl((v(x,t)+V^2(t)\nabla_x G(x,t))Q_t(x)\bigr)
   + V^2(t)(\nabla_x G(x,t)-\nabla_x r(x,t))^\top\nabla_x Q_t(x) \\
&\quad + V^2(t)(\Delta_x G(x,t)-\Delta_x r(x,t))Q_t(x)
   + \frac{1}{2}V^2(t)\Delta_x Q_t(x) \\
&\quad + \Bigl(\frac{1}{2}V^2(t)(\|\nabla_x r(x,t)\|^2+\Delta_x r(x,t))
      + (\nabla_x r(x,t))^\top v(x,t) + \partial_t r(x,t)\Bigr)Q_t(x) \\
&= -\nabla_x\cdot\bigl((v(x,t)+V^2(t)\nabla_x G(x,t))Q_t(x)\bigr)\\
&\quad + V^2(t)(\nabla_x G(x,t)-\nabla_x r(x,t))^\top(\nabla_x\log p_t(x)+\nabla_x r(x,t))Q_t(x) \\
&\quad + V^2(t)(\Delta_x G(x,t)-\Delta_x r(x,t))Q_t(x)
   + \frac{1}{2}V^2(t)\Delta_x Q_t(x) \\
&\quad + \Bigl(\frac{1}{2}V^2(t)(\|\nabla_x r(x,t)\|^2+\Delta_x r(x,t))
      + (\nabla_x r(x,t))^\top v(x,t) + \partial_t r(x,t)\Bigr)Q_t(x),
\end{align*}
where we used
\[
\nabla_x Q_t(x) = \nabla_x\big(p_t(x) e^{r(x, t)}\big)
= e^r(x, t)\nabla_x p_t(x) + p_t(x) e^r(x, t)\nabla_x r(x, t)
= Q_t(x)(\nabla_x\log p_t(x)+\nabla_x r(x,t)).
\]
in the last equation. Collecting terms gives the claimed expression.
\end{proof}

Consequently, choosing \(G=r\) recovers the \texttt{AFDPS} drift and weight update, while \(G=0\) yields the \texttt{FK-Corrector} dynamics.

\subsection{Continuous approximation-freeness}
\label{app:continuous_approximation-freeness}

For readers' convenience, we reiterate Theorem~\ref{thm:continuous_approximation-freeness} here.

\begin{theorem}[Continuous approximation-freeness]
    For any test function \(\varphi\in C_b^{2,1}(\mathbb R^d\times \mathbb{R})\), if the initialization at time \(s\) satisfies $\E_{\overline{\P}}[M_s^N(\varphi)] = \E_{\P}[\varphi(X_s^G,s)]$, then at terminal time \(T\), we have  
    \[
        \E_{\overline{\P}}[M_t^N(\varphi)] = \frac{\E_{\P}[e^{r(X_t^G,t)-r(X_s^G,s)} \varphi(X_t^G,t)]}{\E_{\P}[e^{r(X_t,t)-r(X_s,s)}]}.
    \]
\end{theorem}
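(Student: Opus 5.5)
The plan is to work with the continuous-time jump--diffusion limit of \texttt{URGE} (Lemma~\ref{lem:continuous_is_limit_of_discrete}), which lets us treat $\Delta t\to0$ as already taken, and then pass $N\to\infty$. The argument has three steps: reduce the weighted empirical average to a single-particle weighted expectation; identify that weight as a Girsanov density times a reward increment; and check that resampling jumps do not change the normalized weighted mean.

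\textbf{Step 1 (no jumps on the interval).} Fix $s<t$ and condition on no resampling event in $[s,t]$. By \eqref{eq:cts_guided_weight} the accumulated weight is $w^{(i)}_t/w^{(i)}_s=w^{\texttt{URGE}}_{s,t}(X^{(i),G}_{s:t})$, which factorizes as
\[
\exp\bigl(r(X^{(i),G}_t,t)-r(X^{(i),G}_s,s)\bigr)\cdot\frac{\dif\P}{\dif\P^G}\bigg|_{\mathcal F_{s,t}}.
\]
The Girsanov theorem applies under Assumption~\ref{basic_assumption_on_diffusion}, since bounded $\nabla_x G$ gives Novikov's condition. Hence, for a particle started from a law $\mu_s$,
\[
\E_{\P^G}\bigl[w^{\texttt{URGE}}_{s,t}\varphi(X^G_t,t)\bigr]=\E_{\P}\bigl[e^{r(X_t,t)-r(X_s,s)}\varphi(X_t,t)\bigr],
\]
with $X_s\sim\mu_s$.

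\textbf{Step 2 (law of large numbers).} The particles are i.i.d.\ between resampling times, so as $N\to\infty$ the numerator and denominator of $M^N_t(\varphi)$, each divided by $N$, converge to their expectations. Because the denominator is bounded below (the reward and $\nabla G$ are bounded), the ratio converges, and $\E_{\overline\P}[M^N_t(\varphi)]$ tends to the ratio of expectations. Step 1 then yields the claimed formula, provided the hypothesis $\E_{\overline\P}[M^N_s(\varphi)]=\E_\P[\varphi(X_s,s)]$ identifies $\mu_s$ with the $\P$-marginal at time $s$ for all test functions.

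\textbf{Step 3 (resampling jumps).} At a jump time $\tau$ the map $\Psi$ draws ancestors with probabilities $\widehat w^{(k)}$ and resets every weight to $\overline w_{\tau-}$. Conditional on the pre-jump state,
\[
\E\bigl[M^N_\tau(\varphi)\mid\mathbbm z_{\tau-}\bigr]=\frac1N\sum_i\sum_k\widehat w^{(k)}_{\tau-}\varphi\bigl(X^{(k)}_{\tau-}\bigr)=M^N_{\tau-}(\varphi),
\]
so jumps are mean-preserving. Since each weight is reset to the average, the multiplicative structure of the path weight is also kept in a consistent normalization. Splitting $[s,t]$ at the jump times $\tau_1<\tau_2<\dots$, which are finite a.s.\ by Assumption~\ref{assump:boundness_of_Lambda}, we then chain Step 1 over each sub-interval. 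The Markov property of $\P$ together with the multiplicativity $e^{r_t-r_s}=\prod e^{r_{\tau_{j+1}}-r_{\tau_j}}$ gives the full-interval identity. Taking $s=0$ and using $r(\cdot,T)=\mathbf r$ gives the terminal statement.

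\textbf{Main obstacle.} The delicate step is Step 3 combined with $N\to\infty$. After a jump the particles are no longer independent, and the reset to $\overline w$ discards the normalizing constant. The chaining therefore holds only up to normalization, which must be reconciled with the ratio form of the formula. My first approach would be propagation of chaos: show that the weighted empirical measure $\mu^N_t$ converges to a deterministic flow $\mu_t$. I would then verify that $\mu_t$ satisfies the weak equation $\frac{\dif}{\dif t}\mu_t(\varphi)=\mu_t\bigl((\partial_t+\mathcal L^G_t)\varphi+\lambda\varphi\bigr)-\mu_t(\lambda)\mu_t(\varphi)$, which the jump term does not affect. The normalized tilted marginals $q_t$ of $\P$ satisfy the same equation, and uniqueness for this equation closes the argument.
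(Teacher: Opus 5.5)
Your Steps 1--3 are sound, but the way you propose to close the argument does not work as written. The equation you would verify for the limit $\mu_t$ is built from $\mathcal L^G_t$ and the endpoint-conditioned intensity $\lambda$, and that is not what the particle system produces. We have $\dif w_t = w_t\bigl(a\,\dif t + V(\nabla_x r-\nabla_x G)^\top \dif W_t\bigr)$ with $a=\partial_t r+v^\top\nabla_x r+\tfrac12 V^2(\Delta_x r+\|\nabla_x r\|_2^2)$. It\^o's product rule then gives $w\varphi(X^G)$ the drift $w\bigl((\partial_t+\mathcal L^G_t)\varphi+a\varphi+V^2(\nabla_x r-\nabla_x G)^\top\nabla_x\varphi\bigr)$. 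The weight--position cross-variation cancels the guidance, leaving the linear operator $\partial_t+(v+V^2\nabla_x r)^\top\nabla_x+\tfrac12V^2\Delta_x+a$. This is exactly the equation the paper derives for $m_t(\varphi)$. The function $\gamma_t(\varphi)=\E_{\P}[e^{r(X_t,t)-r(X_s,s)}\varphi(X_t,t)]$ satisfies its unnormalized version by It\^o's formula under $\P$, and uniqueness for a linear equation finishes the proof.

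Converting the first-order cross term into a multiplicative potential requires integrating by parts against the density of $\mu_t$ itself. So, unless $G=r$ (when the cross term vanishes and $\lambda=a$), the $\lambda$ that makes your equation true involves the unknown score $\nabla_x\log\mu_t$. In that case ``uniqueness for this equation'' is just uniqueness for the linear one in disguise. If instead you fix $\lambda=w_{\texttt{AFDPS}}$, which is built from $\nabla_x\log p_t$, you have presupposed $\mu_t\propto p_te^{r(\cdot,t)}$. That is circular when $r(\cdot,s)$ is constant. For non-constant $r(\cdot,s)$ and $G\neq r$ it generally fails. The stated target has density $\propto e^{r(\cdot,t)}\tilde p_t$, with $\tilde p$ the base marginal started from $p_se^{-r(\cdot,s)}$, not $p_t$. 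Moreover, your $q_s\propto p_se^{r(\cdot,s)}$ does not match the initial law $p_s$.

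Conversely, the obstacle you flag in your main route is misdiagnosed, and that route closes without any PDE. This gives a genuinely different, Del Moral--style proof. Normalization is harmless. Let $Q_{a,b}\varphi(y)=\E_{\P}[e^{r(X_b,b)-r(X_a,a)}\varphi(X_b,b)\mid X_a=y]$. The Markov property and the multiplicativity you cite give $Q_{s,t}=Q_{s,\tau}Q_{\tau,t}$. Hence the normalized maps $\Phi_{a,b}(\mu)(\varphi)=\mu(Q_{a,b}\varphi)/\mu(Q_{a,b}1)$ satisfy $\Phi_{\tau,t}\circ\Phi_{s,\tau}=\Phi_{s,t}$ exactly, and the common reset value $\overline w_{\tau-}$ cancels in every ratio.

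What you actually need is that after multinomial resampling the particles are conditionally i.i.d.\ from $\mu^N_{\tau-}$ given the pre-jump state, and then move with independent noises. Then Steps 1--2 apply conditionally on each sub-interval. Induction over the a.s.\ finitely many jumps, with continuity of $\mu\mapsto\Phi_{a,b}(\mu)(\varphi)$, gives $\E[M^N_t(\varphi)]\to\Phi_{s,t}(p_s)(\varphi)$.

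Compared with the paper, this avoids the $O(1/N)$ bookkeeping for $\sum_i(\overline w^{(i)}_t)^2$ and the variance assumptions on $A_N,B_N,C_N$. It is cleanest, however, for the deterministic resampling schedule of Algorithm~\ref{alg:urge}. With the state-dependent intensity $\Lambda(\mu^N_{t-})$, conditioning on no jump in an interval reweights configurations by $\exp\bigl(-\int\Lambda(\mu^N_u)\,\dif u\bigr)$, and you must show this factor becomes asymptotically deterministic. The paper's generator computation never conditions on jump times, so it avoids this issue.

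Finally, in Step 2 the denominator is not bounded below pathwise, since the It\^o integral is unbounded. This does not matter: $|M^N_t(\varphi)|\le\|\varphi\|_\infty$, so bounded convergence suffices.
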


\begin{proof}[Proof of Theorem~\ref{thm:continuous_approximation-freeness}]

The proof consists of two parts. First, the empirical distribution of the weighted particles generated by \texttt{URGE} converges to that of its continuous-time limit, as established in Lemma~\ref{lem:continuous_is_limit_of_discrete}. Second, the expectation of $M_t^N(\varphi)$ under the continuous-time formulation is approximation-free. In what follows, we verify the correctness of the second part.

Fix \(s\). We study the infinitesimal evolution of $M_t^N(\varphi)$. By Appendix~\ref{app:URGE_math}, since \(\Delta t\rightarrow 0\), our analysis can be transferred to \(\sum_{i=1}^N \frac{w_{t}^{(i)}}{\sum_{j=1}^N w^{(j)}_t}\varphi(X^{(i),G}_t,t)\), following the symbols in Appendix~\ref{app:URGE_math} and without abuse of notation, we also call this term \(M^N_t(\varphi)\).
Denote \(W_t = \sum_{j=1}^N w^{(j)}_t\), and \(\overline{w}_{t}^{(i)}:=\frac{w^{(i)}_t}{W_t}\).

By It\^o's formula, 
\[
    \dif\left(w_t^{(i)}\varphi(X_t^{(i),G},t)\right) = w_t^{(i)}\dif\varphi(X_t^{(i),G},t)+ \dif w_t^{(i)}\varphi(X_t^{(i),G},t)+\dif \langle w^{(i)},\varphi(X^{(i),G},t)\rangle_t.
\]
The generator of base measure is
\[
(\mathcal L\varphi)(x)
= \partial_t \varphi(x,t) + v(x,t)^\top\nabla_x\varphi(x,t)+\frac12 V^2(t)\Delta_x\varphi(x,t).
\]

Between resampling times, particles follow the guided SDE and their log-weights evolve as
\[
\dif\log w_t^{(i)}
=\dif r(X^{(i),G}_t,t)-(\theta_t^{(i),G})^\top \dif W_t^{(i)}-\frac12\|\theta_t^{(i),G}\|^2\dif t,
\]
where $\theta_t^{(i),G}=V(t)\nabla_x G(X_t^{(i),G},t)$.
Since 
\[
    \dif r(X^{(i),G}_t,t) = \big((\mathcal Lr)(X_t^{(i),G},t)+V^2(t)\nabla_x G(X_t^{(i),G},t)^\top \nabla_x r(X_t^{(i),G},t)\big)\dif t+V(t)\nabla_x r(X_t^{(i),G},t)^\top \dif W_t^{(i)},
\] by applying It\^o's formula to $w=\exp(\log w)$ gives
\[
\dif w_t^{(i)}=w_t^{(i)}(V(t)\nabla_x r(X^{(i),G}_t,t)-\theta_t^{(i),G})^\top \dif W_t^{(i)}+w_t^{(i)}\big((\mathcal Lr)(X^{(i),G}_t,t)+\frac{1}{2}V^2(t)\|\nabla_x r(X^{(i),G}_t,t)\|^2\big)\dif t.
\]
Denote \(a_t^{(i)}:=(\mathcal Lr)(X^{(i),G}_t,t)+\frac{1}{2}V^2(t)\|\nabla_x r(X^{(i),G}_t,t)\|^2\), and \(\sigma_t^{(i)}:=V(t)\nabla_x r(X^{(i),G}_t,t)-\theta_t^{(i),G}\).

The differential of the total weight \(W_t\) is given by
\[
\dif W_t
=
\sum_{j=1}^N \dif w_t^{(j)}
=
\sum_{j=1}^N w_t^{(j)}\sigma_t^{(j)\top} \dif W_t^{(j)}
+
\sum_{j=1}^N w_t^{(j)}a_t^{(j)}\,\dif t.
\]

By applying It\^o's formula to the function $f(x)=x^{-1}$ yields
\[
\dif\Big(\frac{1}{W_t}\Big)
=
-\frac{1}{W_t^2}\,\dif W_t
+
\frac{1}{W_t^3}\,\dif\langle W\rangle_t,
\]
where the quadratic variation of $W_t$ is given by \(\dif \langle W\rangle_t=\sum_{j=1}^N (w_t^{(j)})^2 \|\sigma_t^{(j)}\|^2\,\dif t\) using the independence of the Brownian motions. Substituting $\dif W_t$ and $\dif\langle W\rangle_t$ into the above expression, we obtain
\[
\begin{aligned}
\dif\Big(\frac{1}{W_t}\Big)
={}&
-\frac{1}{W_t^2}
\sum_{j=1}^N w_t^{(j)}\sigma_t^{(j)\top} \dif W_t^{(j)}
-\frac{1}{W_t^2}
\sum_{j=1}^N w_t^{(j)}a_t^{(j)}\,\dif t 
+\frac{1}{W_t^3}
\sum_{j=1}^N (w_t^{(j)})^2\|\sigma_t^{(j)}\|^2\,\dif t.
\end{aligned}
\]

We now apply the product rule to $\overline w_t^{(i)} = w_t^{(i)}\,W_t^{-1}$:
\[
\dif\overline w_t^{(i)}
=
\frac{1}{W_t}\,\dif w_t^{(i)}
+
w_t^{(i)}\,\dif\Big(\frac{1}{W_t}\Big)
+
\dif\Big\langle w^{(i)},\frac{1}{W}\Big\rangle_t.
\]

The first term reads
\[
\frac{1}{W_t}\,\dif w_t^{(i)}
=
\overline w_t^{(i)}\,\sigma_t^{(i)\top} \dif W_t^{(i)}
+
\overline w_t^{(i)}\,a_t^{(i)}\,\dif t.
\]

For the second term, using $w_t^{(i)}/W_t=\overline w_t^{(i)}$, we obtain
\[
\begin{aligned}
w_t^{(i)}\,\dif\Big(\frac{1}{W_t}\Big)
={}&
-\overline w_t^{(i)}\sum_{j=1}^N \overline w_t^{(j)}\sigma_t^{(j)\top} \dif W_t^{(j)}
-\overline w_t^{(i)}\sum_{j=1}^N \overline w_t^{(j)}a_t^{(j)}\,\dif t
+\overline w_t^{(i)}\sum_{j=1}^N (\overline w_t^{(j)})^2\|\sigma_t^{(j)}\|^2\,\dif t.
\end{aligned}
\]

Finally, the cross-variation term only receives contribution from the common Brownian motion $W^{(i)}$:
\[
\dif\Big\langle w^{(i)},\frac{1}{W}\Big\rangle_t
=
-\frac{(w_t^{(i)})^2}{W_t^2}\|\sigma_t^{(i)}\|^2\,\dif t
=
-(\overline w_t^{(i)})^2\|\sigma_t^{(i)}\|^2\,\dif t.
\]

Collecting all terms, we arrive at
\[
\begin{aligned}
\dif\overline w_t^{(i)}
=
\overline w_t^{(i)}
\Big(
\sigma_t^{(i)\top} \dif W_t^{(i)}
-
\sum_{j=1}^N \overline w_t^{(j)}\sigma_t^{(j)\top} \dif W_t^{(j)}
\Big) 
+
\overline w_t^{(i)}
\Big(
a_t^{(i)}-\sum_{j=1}^N \overline w_t^{(j)}a_t^{(j)}
\Big)\,\dif t 
+
\overline w_t^{(i)}\sum_{j=1}^N (\overline w_t^{(j)})^2\|\sigma_t^{(j)}\|^2\,\dif t
-
(\overline w_t^{(i)})^2\|\sigma_t^{(i)}\|^2\,\dif t.
\end{aligned}
\]

Then we come back to the analysis of \(M_t^N\). The guided particles satisfy
\[
\dif \varphi\big(X_t^{(i),G},t\big)
=
\big(\mathcal L_t^G\varphi\big)\big(X_t^{(i),G},t\big)\,\dif t
+
V(t)\nabla\varphi\big(X_t^{(i),G},t\big)^\top \dif W_t^{(i)},
\]
with
\(
\mathcal L_t^G=\mathcal L_t+V^2(t)\nabla G^\top\nabla
\).

Applying It\^o's product rule to
\(
\overline w_t^{(i)}\varphi\big(X_t^{(i),G},t\big)
\)
and summing over \(i=1,\dots,N\), we obtain
\[
\begin{aligned}
\dif M_t^N(\varphi)
={}&
\sum_{i=1}^N \varphi\big(X_t^{(i),G},t\big)\,\dif \overline w_t^{(i)}
+
\sum_{i=1}^N \overline w_t^{(i)}\,\dif \varphi\big(X_t^{(i),G},t\big)
+
\sum_{i=1}^N \dif\!\left\langle
\overline w^{(i)},\varphi\big(X^{(i),G},t\big)
\right\rangle_t .
\end{aligned}
\]

The drift terms can be grouped as follows. First,
\[
\sum_{i=1}^N \overline w_t^{(i)}\,\dif \varphi\big(X_t^{(i),G},t\big)
=
M_t^N(\mathcal L\varphi)\,\dif t
+
M_t^N\!\big(V^2\nabla G^\top\nabla\varphi\big)\,\dif t
+
\dif M_{1,t}.
\]
where \(M_{1,t}\) is a local martingale. Second, 
\begin{align}
\sum_{i=1}^N \varphi\big(X_t^{(i),G},t\big)\,\dif \overline w_t^{(i)}&=
\sum_{i=1}^N
\varphi\big(X_t^{(i),G},t\big)\,
\overline w_t^{(i)}
\Big(
a_t^{(i)}-\sum_{j=1}^N \overline w_t^{(j)}a_t^{(j)}
\Big)\dif t + \sum_{i=1}^N \varphi\big(X_t^{(i),G},t\big)\,\overline w_t^{(i)} \sum_{j=1}^N \big(\overline w_t^{(j)}\big)^2 \|\sigma_t^{(i)}\|^2\dif t\nonumber\\
&\quad+ \sum_{i=1}^N \varphi\big(X_t^{(i),G},t\big)\,\big(\overline w_t^{(i)}\big)^2\|\sigma_t^{(i)}\|^2\dif t.
\label{expression_of_second_term_of_M}
\end{align}
The last two terms can be bounded by
\[
\begin{aligned}
\bigg\|\sum_{i=1}^N \varphi\big(X_t^{(i),G},t\big)\,\overline w_t^{(i)} \sum_{j=1}^N \big(\overline w_t^{(j)}\big)^2 \|\sigma_t^{(i)}\|^2 + \sum_{i=1}^N \varphi\big(X_t^{(i),G},t\big)\,\big(\overline w_t^{(i)}\big)^2\|\sigma_t^{(i)}\|^2\bigg\|_\infty
\leq C\sum_{i=1}^N \big(\overline w_t^{(i)}\big)^2,
\end{aligned}
\]
where \(C\) is a constant depending on uniform bounds of \(r,G,V\), as well as the parabolic \(W^{2,1}_{\infty}\) norm \(\|\varphi\|_{W^{2,1}_{\infty}}\), defined by
\(\|f\|_{W^{2,1}_{\infty}}
:= \|f\|_{\infty}
+ \|\partial_t f\|_{\infty}
+ \|\nabla_x f\|_{\infty}
+ \|\nabla_x^2 f\|_{\infty}\). 

For each propagation interval \([t_1,t_2)\), where \(t_1\) and \(t_2\) are two consecutive jump times, we have \(|w_t^{(i)} - w_{t_1}^{(i)}|\le C (t_2 - t_1)\), for some constant \(C\) depending only on uniform bounds of \(v,r,G,V\), independently of time and particle index. Moreover, \(w_{t_1}^{(i)} \equiv w_{t_1}\) for all \(i\). Let \(\Delta t = t_2 - t_1\). As \(\Delta t \to 0\),
\[
\begin{aligned}
\sum_{i=1}^N \bigl(\overline w_t^{(i)}\bigr)^2
= \sum_{i=1}^N \left(\frac{w_t^{(i)}}{\sum_{j=1}^N w_t^{(j)}}\right)^2 \le \sum_{i=1}^N
\left(
\frac{w_{t_1} + C(t-t_1)}{N w_{t_1} - N C(t-t_1)}
\right)^2 \le \sum_{i=1}^N
\left(
\frac{\frac{3}{2} w_{t_1}}{N \frac{1}{2} w_{t_1}}
\right)^2
= \frac{9}{N}.
\end{aligned}
\]
Thus, \eqref{expression_of_second_term_of_M} can be further simplified as
\begin{align*}
\sum_{i=1}^N \varphi\big(X_t^{(i),G},t\big)\,\dif \overline w_t^{(i)}=
M_t^N(\varphi a_t)\,\dif t-M_t^N(\varphi)\,M_t^N(a_t)\,\dif t + O(\frac{1}{N}).
\end{align*}
where big-\(O\) only depends on the coefficients of the jump-diffusion system.

Third, the quadratic covariation between
\(\overline w_t^{(i)}\) and \(\varphi(X_t^{(i),G})\) gives
\[
\begin{aligned}
\sum_{i=1}^N
\dif\left\langle
\overline w^{(i)},\varphi\big(X^{(i),G}\big)
\right\rangle_t
=& \sum_{i=1}^N V(t)\nabla \varphi^{\top} \sigma_{t}^{(i)} (\overline w_t^{(i)} - (\overline w_t^{(i)})^2) \dif t\\
=& M_t^N\!\big(V^2\nabla r^\top\nabla\varphi\big)\,\dif t-M_t^N\!\big(V^2\nabla G^\top\nabla\varphi\big)\,\dif t + O(\frac{1}{N}),
\end{aligned}
\]
The \(O(\frac{1}{N})\) term is bounded similarly as previous discussion on \(\big(\overline w_t^{(i)}\big)^2\).

Collecting all contributions, we arrive at
\begin{align}
\dif M_t^N(\varphi)
=
M_t^N(\mathcal L\varphi)\,\dif t
+
M_t^N\!\big(V^2\nabla r^\top\nabla\varphi\big)\,\dif t +
M_t^N(\varphi a_t)\,\dif t
-
M_t^N(\varphi)\,M_t^N(a_t)\,\dif t + \dif \mathcal M_t^N + O(\frac{1}{N}),
\label{eq:diffusive_mean}
\end{align}
where \(\mathcal M_t^N\) is a local martingale collecting all terms proportional to \(\dif W_t^{(i)}\).

Let $\tau$ be a jump time of the Poissonization resampling process. Denote pre- and post-jump states by $(X_{\tau-}^G,w_{\tau-})$ and $(X_\tau^G,w_\tau)$. Weights become uniform as \(w_\tau^{(i)}=\frac1N\sum_{k=1}^N w_{\tau-}^{(k)}\). Each $X_\tau^{(i),G}$ is drawn independently from \[\sum_{j=1}^N \frac{w_{\tau-}^{(j)}}{\sum_{k=1}^N w_{\tau-}^{(k)}}\delta_{X_{\tau-}^{(j),G}}.\] 
Hence,
\begin{align*}
\E_{\overline \P}[M_\tau^N(\varphi)\mid\mathcal F_{\tau-}]
&=\sum_{i=1}^N 
\E_{\overline \P}\big[\overline w_\tau^{(i)}\varphi(X_\tau^{(i),G})\mid\mathcal F_{\tau-}\big]= \E_{\overline \P}[\varphi(X_\tau^{(1),G})\mid\mathcal F_{\tau-}]= \sum_{j=1}^N \frac{w_{\tau-}^{(j)}}{\sum_{k=1}^N w_{\tau-}^{(k)}}\varphi({X_{\tau-}^{(j),G}})=M_{\tau-}^N(\varphi).
\end{align*}
where \(\mathcal{F}_{\tau-}\) denotes information known at time \(\tau-\). Thus jumps add no drift to the expectation.

Define \(m_t(\varphi):=\mathbb{E}_{\overline \P}[M_t^N(\varphi)]\). Taking expectations in \eqref{eq:diffusive_mean} and applying the jump calculation gives
\begin{equation}
\frac{\dif}{\dif t} m_t(\varphi)
    = m_t\left(\mathcal L\varphi + V^2(t)\nabla r^\top\nabla\varphi + \varphi a_t \right) - \mathbb{E}_{\overline \P}[M_t^N(\varphi)M_t^N(a_t)] + O(\frac{1}{N}).
\label{eq:diffusion_of_m_t}
\end{equation}

Further define
\[
A_N := \frac{1}{N}\sum_{i=1}^N w_t^{(i)}\varphi(X_t^{(i),G},t), \quad
B_N := \frac{1}{N}\sum_{i=1}^N w_t^{(i)} a_t^{(i)}, \quad
C_N := \frac{1}{N}\sum_{i=1}^N w_t^{(i)} .
\]
Then \(M_t^N(\varphi)=\frac{A_N}{C_N}, \ M_t^N(a_t)=\frac{B_N}{C_N}\). Assume that
\[
\operatorname{Var}(A_N)=O\!\left(\frac{1}{N}\right),\quad
\operatorname{Var}(B_N)=O\!\left(\frac{1}{N}\right),\quad
\operatorname{Var}(C_N)=O\!\left(\frac{1}{N}\right).
\]
Let \(\mu_A=\mathbb{E}[A_N]\), \(\mu_B=\mathbb{E}[B_N]\), and \(\mu_C=\mathbb{E}[C_N]\). We write
\(A_N=\mu_A+\delta_A,\quad
B_N=\mu_B+\delta_B,\quad
C_N=\mu_C+\delta_C\),
where \(\mathbb{E}[\delta_\cdot]=0\) and \(\mathbb{E}[\delta_\cdot^2]=O(1/N)\). Using a first-order Taylor expansion,
\[
\frac{1}{C_N}
=\frac{1}{\mu_C+\delta_C}
=\frac{1}{\mu_C}\left(1-\frac{\delta_C}{\mu_C}\right)
+O\!\left(\frac{1}{N}\right).
\]
Therefore,
\[
M_t^N(\varphi)
=\frac{A_N}{C_N}
=\frac{\mu_A}{\mu_C}
+\frac{\delta_A}{\mu_C}
-\frac{\mu_A\delta_C}{\mu_C^2}
+O\!\left(\frac{1}{N}\right),
\]
and similarly,
\[
M_t^N(a_t)
=\frac{\mu_B}{\mu_C}
+\frac{\delta_B}{\mu_C}
-\frac{\mu_B\delta_C}{\mu_C^2}
+O\!\left(\frac{1}{N}\right).
\]

Since \(\delta_A,\delta_B,\delta_C\) all have variance of order \(O(1/N)\), it follows that
\[
\operatorname{Cov}\bigl(M_t^N(\varphi),\,M_t^N(a_t)\bigr)
=O\!\left(\frac{1}{N}\right).
\]
\eqref{eq:diffusion_of_m_t} can be further simplified into
\[
\frac{\dif}{\dif t} m_t(\varphi)
    = m_t\left(\mathcal L\varphi + V^2(t)\nabla r^\top\nabla\varphi + \varphi a_t \right) - m_t(\varphi)m_t(a_t) + O(\frac{1}{\sqrt N}).
\]

To verify the explicit representation, define a candidate measure \(\widetilde{m}_t\) by its action on test functions: \[ \widetilde{m}_t(\varphi) := \frac{\mathbb{E}_{\P}\left[e^{r(X_t,t)-r(X_s,s)}\varphi(X_t,t)\right]}{\mathbb{E}_{\P}\left[e^{r(X_t,t)-r(X_s,s)}\right]}. \] We compute the time derivative of \(\widetilde{m}_t(\varphi)\). 

Define \(\gamma_t(\varphi):=\mathbb{E}_{\P}\left[e^{r(X_t,t)-r(X_s,s)}\varphi(X_t,t)\right]\). By direct computation, we obtain
\[
\frac{\dif}{\dif t}\gamma_t(\varphi) = \gamma_t (\mathcal L \varphi+V^2\nabla r^\top \nabla \varphi + \varphi a_t).
\]
Thus, we have
\begin{align*}
\frac{\dif}{\dif t}\widetilde{m}_t(\varphi) = \frac{\dif}{\dif t} \left(\frac{\gamma_t(\varphi)}{\gamma_t(1)}\right) = \frac{\gamma_t (\mathcal L \varphi+V^2\nabla r^\top \nabla \varphi + \varphi a_t)}{\gamma_t(1)} - \frac{\gamma_t(\varphi)\left(\frac{\dif}{\dif t}\gamma_t(1)\right)}{\left(\gamma_t(1)\right)^2} = \widetilde{m}_t(\mathcal L \varphi+V^2\nabla r^\top \nabla \varphi + \varphi a_t) - \widetilde{m}_t(\varphi)\widetilde{m}_t(a_t).
\end{align*}

By taking \(N\rightarrow \infty\), \(m_t(\varphi)\) and \(\widetilde{m}_t(\varphi)\) satisfy the same linear evolution equation with the same initial condition \(m_0(\varphi)=\widetilde{m}_0(\varphi)=\E_{\P}[\varphi(X_s)]\), they must be identical. Thus, \( \mathbb{E}_{\overline \P}[M_t^N(\varphi)] = \frac{\mathbb{E}_{\P}\left[e^{r(X_t,t)-r(X_s,s)}\varphi(X_t,t)\right]}{\mathbb{E}_{\P}\left[e^{r(X_t,t)-r(X_s,s)}\right]}\), and the proof is complete.
\end{proof}

\subsection{Effective Marginalized Generator}
\label{app:effective_margin}

For readers' convenience, we reiterate Theorem~\ref{thm:reweighted_generator} here.

\begin{theorem}[Effective Marginalized Generator]
    Define infinitesimal generator $\mathcal L^G_t$ for $\varphi\in C_b^{2,1}(\R^d\times \R)$ by 
    \[
    \mathcal L^G_t\varphi(x,t) = \partial_t \varphi(x,t)+ \frac{1}{2} \mathrm{tr}\big(V^2(t) \nabla^2 \varphi(x,t)\big) + \big(v(x,t)+V^2(t)\nabla_x G(x,t)\big)^\top \nabla\varphi(x,t).
    \]
Assume that for each $(x,t)$, the instantaneous intensity of the weight exists and is given by \[\lambda(x,t) := \lim_{h \downarrow 0} \dfrac{1}{h}\Bigg(\mathbb{E}_{\mathbb{P}^G}\Big[w^{\texttt{URGE}}_{t-h,t}(X^{(i),G}_{t-h:t})\Big| X_t^{(i),G}=x \Big] - 1\Bigg).\]
The value function $\psi(x,t)$ satisfies the generalized Kolmogorov backward equation:
\[
\partial_t \psi(x,t) + \mathcal L^G_t \psi(x,t) + (\lambda(x,t)-\bar\lambda_t)\psi(x,t) = 0.
\]
\end{theorem}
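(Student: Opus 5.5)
The plan is to derive the backward equation from a one-step (Markov/tower-property) decomposition of $\psi$ over a short interval $[t,t+h]$, in the $N\to\infty$ regime where the normalized weighted empirical measure concentrates. Then we identify the drift, reweighting and normalization contributions separately. Concretely, we first use the continuous-time representation from Lemma~\ref{lem:continuous_is_limit_of_discrete}, so that between resampling events particle $i$ follows the guided SDE~\eqref{eq:cts_guided_sde} and carries the multiplicative weight $w^{\texttt{URGE}}$. We also use the fact, shown in the proof of Theorem~\ref{thm:continuous_approximation-freeness}, that resampling jumps leave $\E[M^N(\varphi)\mid\mathcal F_{\tau-}]$ unchanged. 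Jumps therefore contribute nothing to the drift of $\psi$, and it suffices to analyze the diffusive-plus-weighting evolution.

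\textbf{Step 1 (one-step recursion).} Write $M_t^N(\varphi)=\sum_i \overline w_t^{(i)}\varphi(X_t^{(i),G},t)$. As $N\to\infty$, the denominator $W_{t+h}/W_t=\frac1N\sum_j w^{\texttt{URGE}}_{t,t+h}(X^{(j),G}_{t:t+h})$ (for equal post-resampling weights) converges by the law of large numbers to a deterministic quantity. The contribution of particle $i$ to the conditional expectation given $X_t^{(i),G}=x$ then factorizes as
\[
\psi(x,t)\approx \frac{\E_{\P^G}\!\big[w^{\texttt{URGE}}_{t,t+h}\,\psi(X^{G}_{t+h},t+h)\mid X_t^G=x\big]}{\E_{\overline\P}\big[w^{\texttt{URGE}}_{t,t+h}\big]}.
\]
This is the Feynman--Kac semigroup property for a single tagged particle, with the mean-field normalization. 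The $O(1/N)$ corrections are controlled exactly as the $\sum_i(\overline w^{(i)})^2\le 9/N$ and covariance bounds in the proof of Theorem~\ref{thm:continuous_approximation-freeness}.

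\textbf{Step 2 (expansion in $h$).} Decompose $w^{\texttt{URGE}}_{t,t+h}\psi(X_{t+h})-\psi(x,t)=(w-1)\psi(X_{t+h})+(\psi(X_{t+h},t+h)-\psi(x,t))$. The second piece gives $h\,(\partial_t+\mathcal L^G_t)\psi(x,t)+o(h)$ by It\^o/Dynkin for the guided diffusion. For the first piece, we replace $\psi(X_{t+h})$ by $\psi(x,t)$ plus an increment of order $\sqrt h$. The leading term is $\psi(x,t)\,\E[w-1\mid\cdot]$, which by the assumed intensity equals $h\lambda(x,t)\psi(x,t)+o(h)$. The denominator expands similarly to $1+h\bar\lambda_t+o(h)$, with $\bar\lambda_t$ the population average of $\lambda$ under the current weighted law. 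Dividing by $h$, subtracting, and letting $h\to0$ gives $\partial_t\psi+\mathcal L^G_t\psi+(\lambda-\bar\lambda_t)\psi=0$.

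\textbf{Main obstacle.} The delicate step is the cross term $\E[(w-1)(\psi(X_{t+h})-\psi(x,t))]$. Both factors are $O(\sqrt h)$, because $\log w$ contains the It\^o integrals $V(\nabla r-\nabla G)^\top\dif W$. Their covariation is therefore of order $h$ and does not vanish a priori: it produces a term $hV^2(\nabla r-\nabla G)^\top\nabla\psi$. To handle this, I would exploit the stated definition of $\lambda$ through conditioning on the endpoint, interpreted as the pathwise (time-reversed) conditional expectation. This is the conditioning used in Theorem~\ref{thm:Girsanov_SMC_and_AFDPS}: conditioning on the position where the weight is evaluated absorbs the martingale correlation into $\lambda$. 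If that identification proves too loose, the fallback is to absorb the covariation term into a Doob-type modification of the drift. This would make the generator $\mathcal L^G_t$ the relevant one only after this reinterpretation, and that issue is where I expect the argument to require the most care. The remaining ingredients are routine under Assumption~\ref{basic_assumption_on_diffusion}: uniform integrability of $w$ via Novikov, boundedness of $\psi$ since $\varphi\in C_b$, and dominated convergence for passing limits.
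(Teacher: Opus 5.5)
Your skeleton matches the paper's proof: a tower-property recursion over $[t,t+h]$, resampling jumps contributing no drift, reduction to a single tagged particle, the expansion $1+h\lambda$, and a mean-field normalizer producing $-\bar\lambda_t\psi$. The paper normalizes once via $Z(t,T)=\exp\int_t^T\bar\lambda_s\,\dif s$ rather than per step, which is immaterial. The problem is the obstacle you flagged. Neither remedy closes it, and no argument of this type can, because the pointwise equation fails for a value function conditioned at the left endpoint.

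Step 2 contains two mismatches. First, $\E_{\P^G}[w^{\texttt{URGE}}_{t,t+h}\mid X^G_t=x]=1+h\,a(x,t)+o(h)$, where
\[
a=\partial_t r+\nabla_x r^\top v+\tfrac12V^2\Delta_x r+\tfrac12V^2\|\nabla_x r\|_2^2
\]
is the coefficient $a_t$ in $\dif w_t^{(i)}=w_t^{(i)}(\sigma_t^{(i)\top}\dif W_t^{(i)}+a_t^{(i)}\dif t)$ from the proof of Theorem~\ref{thm:continuous_approximation-freeness}. This is not $\lambda$, which conditions on the \emph{right} endpoint of $[t-h,t]$ and equals $w_{\texttt{AFDPS}}$. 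Second, the covariation contributes $hV^2\nabla_x(r-G)^\top\nabla_x\psi$, as you say. The one-step expansion therefore proves
\[
\partial_t\psi+\mathcal L^G_t\psi+V^2\nabla_x(r-G)^\top\nabla_x\psi+(a-\bar a_t)\psi=0 .
\]
Here $G$ cancels completely, as it must, since $\E_{\P^G}[w^{\texttt{URGE}}_{t,T}\varphi(X_T^G)\mid X_t^G=x]=\E_{\P}[e^{r(X_T,T)-r(x,t)}\varphi(X_T)\mid X_t=x]$.

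Absorbing the cross term into $\lambda$ via endpoint conditioning cannot work. With $Q_t=p_te^{r(\cdot,t)}$ one has $\lambda-a=V^2Q_t^{-1}\nabla_x\cdot\bigl(Q_t\nabla_x(G-r)\bigr)$, a zeroth-order potential, while the cross term is first order in $\psi$. The stated equation and the one above (whose normalizers coincide, $\bar\lambda_t=\bar a_t$) differ by $V^2Q_t^{-1}\nabla_x\cdot\bigl(Q_t\psi\,\nabla_x(G-r)\bigr)$. This integrates to zero against $Q_t$ but does not vanish pointwise unless $\nabla_x G=\nabla_x r$.

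For a concrete counterexample, take $d=1$, $v=0$, $V=1$, $r=0$, $G(x)=cx$ with $c\neq0$, $X_0\sim\mathcal N(0,1)$, and $\varphi(x)=x$. The weight is exactly $\dif\P/\dif\P^G$, so $\psi(x,t)=\E_\P[X_T\mid X_t=x]=x$. Meanwhile $\lambda(x,t)=c\,\partial_x\log Q_t(x)=-cx/(1+t)$ and $\bar\lambda_t=0$. The stated equation then reads $c-cx^2/(1+t)=0$; using the guided marginal $\mathcal N(ct,1+t)$ instead changes nothing essential.

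Your Doob-type fallback yields the correct equation displayed above, but that is a different statement: drift $v+V^2\nabla_x r$ with potential $a$, not $\mathcal L^G_t$ with $\lambda$. The form with $\mathcal L^G_t$ and the endpoint-conditioned $\lambda$ holds only in the dual, weak sense, $\partial_tQ_t=(\mathcal L^G_t)^*Q_t+\lambda Q_t$, i.e.\ $\frac{\dif}{\dif t}\int Q_t\psi_t\,\dif x=0$. That weak form is what the marginal equivalence with \texttt{AFDPS} actually needs.

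The paper's own proof follows your route. It simply asserts $w_{t,t+h}=1+h\lambda(x,t)+o(h)$ and the It\^o expansion without the covariation term. Your diagnosis is sharper than the published argument, but the gap cannot be filled for the statement as written.
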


\begin{proof}[Proof of Theorem~\ref{thm:reweighted_generator}]

We consider the whole system as the continuous-time system mentioned in Appendix~\ref{app:URGE_math}, with \(\tilde{\P}\) to be the law of the evolution. We first analyze the unnormalized value function.

Define $\Psi(x,t)$ as the expected unnormalized weighted contribution:
\begin{equation} \label{eq:def_psi}
\Psi(x,t) := \E_{\tilde{\P}}\left[ \frac{1}{N} \sum_{i=1}^N w_{t,T}(X_{t:T}^{(i),G}) \, \varphi(X_T^{(i),G},T) \ \Big|\ X_t^{(i),G}=x \right].
\end{equation}

Fix $h>0$ and decompose the time interval into $(t,t+h]$ and $(t+h,T]$.
By the multiplicative property of the weights,
$
w_{t,T}=w_{t,t+h}\,w_{t+h,T},
$
and the tower property, conditioning on the filtration $\mathcal F_{t+h}$ yields
\[
\Psi(x,t)
=
\E_{\tilde{\P}}\!\left[
\E_{\tilde{\P}}\!\left[
\frac{1}{N}\sum_{j=1}^N
w_{t,t+h}(X^{(j),G}_{t:t+h})\,w_{t+h,T}(X^{(j),G}_{t+h:T})\,
\varphi(X_T^{(j),G},T)
\;\Big|\;\mathcal F_{t+h}
\right]
\Big|\,
X_t^{(i),G}=x
\right].
\]

At time $t+h$, index the particles by $k=1,\dots,N$.
Each particle $j$ at time $T$ is a descendant of a unique ancestor $k$ at time $t+h$.
By the Markov property and exchangeability of the particle system, the inner conditional
expectation can be grouped by ancestors, giving
\[
\E_{\tilde{\P}}\!\left[
\frac{1}{N}\sum_{j=1}^N
w_{t,t+h}(X^{(j),G}_{t:t+h})\,w_{t+h,T}(X^{(j),G}_{t+h:T})\,
\varphi(X_T^{(j),G},T)
\;\Big|\;\mathcal F_{t+h}
\right]
=
\frac{1}{N}\sum_{k=1}^N
w_{t,t+h}^{(k)}\,
\Psi(X_{t+h}^{(k),G},t+h).
\]
Therefore,
\begin{equation}
\label{eq:dp_recursion}
\Psi(x,t)
=
\E_{\tilde{\P}}\!\left[
\frac{1}{N}\sum_{k=1}^N
w_{t,t+h}^{(k)}(X^{(k),G}_{t:t+h})\,
\Psi(X_{t+h}^{(k),G},t+h)
\;\Big|\;
X_t^{(i),G}=x
\right].
\end{equation}

By approximation-freeness of the resampling mechanism (see discussion in Proof of Theorem \ref{thm:continuous_approximation-freeness}), the jump generator annihilates linear test
functions in expectation, and thus does not contribute to the first-order expansion.
Consequently, we only need to consider the continuous evolution.

Conditioning on $X_t^{(i),G}=x$, contributions from particles $k\neq i$ are independent of $x$
and vanish when deriving the generator.
Hence, it suffices to analyze a single particle trajectory.
The weight increment admits the expansion
\[
w_{t,t+h}
=
1+h\,\lambda(x,t)+o(h),
\]
while the state evolves according to the generator $\mathcal L$.
Applying It\^o's formula yields
\[
\begin{aligned}
\E_{\tilde{\P}}\!\left[
w_{t,t+h}(X^{G}_{t:t+h})\,\Psi(X_{t+h}^G,t+h)
\mid X_t^G=x
\right]
=
\Psi(x,t)
+h\big(
\partial_t\Psi(x,t)
+
\mathcal L\Psi(x,t)
+
\lambda(x,t)\Psi(x,t)
\big)
+o(h).
\end{aligned}
\]

Substituting this expansion into \eqref{eq:dp_recursion}, dividing by $h$, and letting
$h\to0$ yields the backward equation
\[
\partial_t\Psi(x,t)
+
\mathcal L\Psi(x,t)
+
\lambda(x,t)\Psi(x,t)
=
0.
\]

We now derive the backward equation satisfied by the normalized estimator in the
mean-field limit.
Define
\[
\psi(x,t)
:=
\lim_{N\to\infty}
\E\!\left[
\frac{\sum_{i=1}^N w_{t,T}(X^{(i),G}_{t:t+h})\,\varphi(X_T^{(i),G},T)}
{\sum_{j=1}^N w_{t,T}(X^{(j),G}_{t:t+h})}
\;\Big|\;
X_t^{(i),G}=x
\right],
\]
where the limit is understood under standard propagation-of-chaos assumptions
for the interacting particle system.
In particular, the empirical measure converges to a deterministic marginal law
\(\mu_t\), and both numerator and denominator satisfy a law of large numbers.

As a consequence, the denominator converges in probability to a deterministic
normalizing factor, which represents the expected mass growth of the
unnormalized system from time \(t\) to \(T\).
We define
\[
Z(t,T)
:=
\exp\!\left(
\int_t^T \bar\lambda_s\,ds
\right),
\qquad
\bar\lambda_s
:=
\E_{\tilde{\P}}\!\left[\lambda(X_s^G,s)\right],
\]
where the expectation is taken with respect to the limiting marginal law
\(\mu_s\).
The normalized value function is therefore related to the unnormalized one by
\begin{equation}
\label{eq:psi_ratio}
\psi(x,t)=\frac{\Psi(x,t)}{Z(t,T)}.
\end{equation}

We first compute the time derivative of the normalizing factor.
By the fundamental theorem of calculus,
\begin{equation}
\label{eq:dt_Z}
\partial_t Z(t,T)
=
-\bar\lambda_t\,Z(t,T).
\end{equation}
Differentiating \eqref{eq:psi_ratio} with respect to \(t\) yields
\[
\partial_t\psi(x,t)
=
\frac{
(\partial_t\Psi(x,t))\,Z(t,T)
-
\Psi(x,t)\,(\partial_t Z(t,T))
}{Z(t,T)^2}.
\]
Substituting the linear backward equation
\(
\partial_t\Psi
=
-(\mathcal L\Psi+\lambda\Psi)
\)
and \eqref{eq:dt_Z} gives
\[
\begin{aligned}
\partial_t\psi(x,t)
&=
\frac{
-\big(\mathcal L\Psi(x,t)+\lambda(x,t)\Psi(x,t)\big)Z(t,T)
+
\bar\lambda_t\,\Psi(x,t)\,Z(t,T)
}{Z(t,T)^2}
\\
&=
-\Big(
\mathcal L\frac{\Psi(x,t)}{Z(t,T)}
+
(\lambda(x,t)-\bar\lambda_t)\frac{\Psi(x,t)}{Z(t,T)}
\Big).
\end{aligned}
\]
Using \eqref{eq:psi_ratio}, we obtain the nonlinear backward equation
\(\partial_t\psi(x,t)
+
\mathcal L\psi(x,t)
+
\big(\lambda(x,t)-\bar\lambda_t\big)\psi(x,t)
=
0\).
\end{proof}

\subsection{Marginalized Equivalence between \texttt{URGE} and \texttt{FK-Corrector/AFDPS}}
\label{app:equivalence}

For readers' convenience, we reiterate Theorem~\ref{thm:Girsanov_SMC_and_AFDPS} here.

\begin{theorem}[Marginalized Equivalence between \texttt{URGE} and \texttt{FK-Corrector/AFDPS}]
    The convergence rate of \texttt{URGE} to \texttt{FK-Corrector/AFDPS} is given by
    \[
    \lim_{s\rightarrow t}\frac{\mathbb{E}_{\P^G}\left[w_{\texttt{URGE}} \big| X_t^G=x\right]-1}{t-s} = w_{\texttt{AFDPS}}(x),
    \]
    where the pathwise weight of \texttt{URGE} is defined as
    \[
    w_{\texttt{URGE}} = \exp\left(\int_s^t -V(\tau)\nabla_x G(X_\tau^G,\tau)^\top \dif W_\tau - \frac{1}{2} V^2(\tau)\|\nabla_x G(X_\tau^G,\tau)\|_2^2\dif \tau + \int_s^t \dif r(X_\tau^G,\tau)\right).
    \]
    and the pathwise weight of \texttt{FK-Corrector/AFDPS} is given by
    \[
    \begin{aligned}
        w_{\texttt{AFDPS}}(x) &= \partial_t r(x,t) -\tfrac12 V^2(t)\big(\Delta_x r(x,t)+\|\nabla_x r(x,t)\|^2_2\big) \\
        &+ \nabla_x r(x,t)^{\top}\big(v(x,t)+V^2(t)\nabla_x G(x,t)\big) \\
        &+ V^2(t)\nabla_x\log p_t(x)^{\top}\nabla_x(G-r)(x,t)+ V^2(t)\Delta_x G(x,t).
    \end{aligned}
    \]
\end{theorem}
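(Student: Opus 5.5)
The plan is a small-interval expansion of the conditional expectation in which everything is conditioned on the \emph{terminal} point $X_t^G=x$. The conditioning is handled by passing to the time-reversed guided diffusion. Set $h=t-s$ and write the guided drift as $b(x,\tau):=v(x,\tau)+V^2(\tau)\nabla_x G(x,\tau)$. The log-weight is
\[
\ell_{s,t}:=\int_s^t \dif r(X_\tau^G,\tau)-\int_s^t V(\tau)\nabla_x G(X_\tau^G,\tau)^\top \dif W_\tau-\tfrac12\int_s^t V^2(\tau)\|\nabla_x G(X_\tau^G,\tau)\|_2^2\dif\tau .
\]
I will use $e^{\ell}=1+\ell+\tfrac12\ell^2+O(|\ell|^3)$ and show that $\E_{\P^G}[\ell_{s,t}\mid X_t^G=x]$ and $\tfrac12\E_{\P^G}[\ell_{s,t}^2\mid X_t^G=x]$ are each $O(h)$. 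Their leading coefficients sum to $w_{\texttt{AFDPS}}(x,t)$, and the cubic remainder is $o(h)$. The remainder bound follows from the boundedness of $\nabla_x r$, $\nabla_x G$ and $V$ in Assumption~\ref{basic_assumption_on_diffusion}, which gives conditional moments $\E[|\ell|^k\mid X_t^G=x]=O(h^{k/2})$.

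The first step removes the Brownian motion in favour of the path. Along the guided SDE we have $V(\tau)\dif W_\tau=\dif X_\tau^G-b\,\dif\tau$, so the Itô integral becomes $-\int_s^t\nabla_x G^\top\dif X^G_\tau+\int_s^t\nabla_x G^\top b\,\dif\tau$. Conditioning a forward Itô integral on the endpoint is awkward, so I convert it to a backward Itô integral:
\[
\int_s^t\nabla_x G^\top\dif X^G_\tau=\int_s^t\nabla_x G^\top \,\hat{\dif} X^G_\tau-\int_s^t V^2\Delta_x G\,\dif\tau .
\]
I treat $\dif r(X^G_\tau,\tau)$ the same way. For this its value $r(x,t)-r(X_s^G,s)$ is enough, since only the starting point is random once we condition.

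The second step invokes Haussmann--Pardoux time reversal. Because $p_\tau$ is the marginal of \eqref{eq:guided_sde} and is $C^2$, the reversed process started from $x$ has drift $-b+V^2\nabla_x\log p_\tau$. Hence $X_s^G=x-(b-V^2\nabla_x\log p_t)(x,t)\,h+V(t)\sqrt h\,\xi+o(h)$ in distribution, with $\xi$ standard Gaussian. This gives $\E[\nabla_x G^\top\hat{\dif}X\mid X_t=x]\approx \nabla_x G^\top(b-V^2\nabla_x\log p_t)h$ and $\E[r(x,t)-r(X_s^G,s)\mid X_t=x]\approx(\partial_t r+\nabla_x r^\top(b-V^2\nabla_x\log p_t)-\tfrac12V^2\Delta_x r)h$. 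Together these produce the $\partial_t r$, $\nabla_x r^\top b$, $-\tfrac12V^2\Delta_x r$, $V^2\Delta_x G$ and $V^2\nabla_x\log p_t^\top\nabla_x(G-r)$ terms of $w_{\texttt{AFDPS}}$.

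The third step computes the quadratic contribution $\tfrac12\E[\ell^2\mid X_t=x]$. Its only $O(h)$ part comes from the martingale parts, namely $V\nabla_x r^\top\dif W$ from $\dif r$ and $-V\nabla_x G^\top\dif W$. This yields $\tfrac12V^2\|\nabla_x r-\nabla_x G\|^2h$. The $\tfrac12V^2\|\nabla_x G\|^2$ piece cancels the explicit Girsanov compensator, leaving only quadratic and cross terms in $\nabla_x r$ and $\nabla_x G$.

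I expect the main obstacle to be this final bookkeeping: checking that these residual quadratic and cross terms combine exactly into the $-\tfrac12V^2\|\nabla_x r\|^2$ and the $V^2\nabla_x r^\top\nabla_x G$ hidden in $\nabla_x r^\top b$. That requires tracking consistently whether $\dif r$ is expanded forward or backward and which marginal $p_t$ appears in the reversal. If the signs do not close, I would first recheck the forward/backward conversion of $\dif r$, which contributes its own $\pm V^2\Delta_x r$ and $\|\nabla_x r\|^2$ corrections. Failing that, I would verify the identity independently through the duality $\E[e^{\ell}\mid X_t=x]\,p_t(x)=(\text{forward-propagated }p_s e^{\ell})(x)$, differentiating in $s$ with the weighted Fokker--Planck equation for $Q_t$ from Appendix~\ref{sec:AFDPS reweighting function}. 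A secondary technical point is justifying the exchange of the limit $s\to t$ with conditioning. For this I would use uniform boundedness of the coefficients and the $C^2$ regularity of $p_s$ to control the time-reversal remainder uniformly.
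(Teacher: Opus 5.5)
Your plan fails at the step you flag as the main obstacle, and rechecking signs will not fix it. Use the reversal score $\nabla_x\log p_t$ of the guided marginal and carry out your three steps correctly. They give $\partial_t r+\nabla_x r^\top v+\tfrac12V^2\bigl(\|\nabla_x r\|_2^2-\Delta_x r\bigr)+V^2\Delta_x G+V^2\nabla_x\log p_t^\top\nabla_x(G-r)$. Once $\tfrac12V^2\|\nabla_x G\|_2^2$ cancels, the quadratic residual $\tfrac12V^2\|\nabla_x r\|_2^2-V^2\nabla_x r^\top\nabla_x G$ absorbs the cross term inside $\nabla_x r^\top b$ and leaves $+\tfrac12V^2\|\nabla_x r\|_2^2$. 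The result is $w_{\texttt{AFDPS}}-V^2\nabla_x r^\top\nabla_x(G-r)$, and the discrepancy is real. Take $d=1$, $G=0$, $v=0$, $V\equiv 1$, $r(x,t)=ax$, $X_0\sim\mathcal N(0,1)$. Then $x-X_s\mid X_t=x\sim\mathcal N\bigl(\tfrac{hx}{1+t},\tfrac{(1+s)h}{1+t}\bigr)$, so the limit is $\tfrac{ax}{1+t}+\tfrac{a^2}{2}$, whereas $w_{\texttt{AFDPS}}(x,t)=\tfrac{ax}{1+t}-\tfrac{a^2}{2}$. Your duality fallback, started from the guided marginal, would only confirm this.

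The missing ingredient is the choice of law you reverse time against. The paper's proof takes the law of $X_t^G$ to be the reward-tilted $Q_t\propto p_te^{r(\cdot,t)}$, with $p_t$ the unguided marginal; this is the law the resampled \texttt{URGE} population tracks. It writes $\dif W_\tau=\dif\overline W_\tau-V(\tau)\nabla_x\log Q_\tau(X_\tau^G)\,\dif\tau$ and at the end substitutes $\nabla_x\log Q_t=\nabla_x\log p_t+\nabla_x r$. The extra $\nabla_x r$ in the score produces exactly the missing $V^2\nabla_x r^\top\nabla_x(G-r)$. Otherwise your forward-to-backward conversion and ``conditional mean plus half the conditional quadratic variation'' are the paper's argument. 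You must state this hypothesis explicitly, because the literal reading ($\E_{\P^G}$ started from $\mathcal N(0,I_d)$) does not give the identity. Under it, your duality idea becomes the cleanest proof. If $X_s$ has density proportional to $Q_s=p_se^{r(\cdot,s)}$, Girsanov and the base Fokker--Planck equation give exactly $\E_{\P^G}[w^{\texttt{URGE}}_{s,t}\mid X_t^G=x]=Q_t(x)/\rho_t(x)$, where $\rho_t$ is the guided Fokker--Planck propagation of $Q_s$ over $[s,t]$. The $h$-derivative is then $\bigl(\partial_tQ_t+\nabla_x\cdot((v+V^2\nabla_xG)Q_t)-\tfrac12V^2\Delta_xQ_t\bigr)/Q_t$, which equals $w_{\texttt{AFDPS}}$ by the lemma in Appendix~\ref{sec:AFDPS reweighting function}.
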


\begin{proof}[Proof of Theorem~\ref{thm:Girsanov_SMC_and_AFDPS}]
We first simplify the left-hand side of \eqref{main_equation_of_same_marginal}. 
By the assumption on \(r\) and It\^o's formula,
\begin{align*}
\dif r(X_t^G,t)
&= \partial_t r(X_t^G,t)\dif t+\nabla_x r(X_t^G,t)^{\top}\big(v(X_t^G,t)+V^2(t)\nabla_x G(X_t^G,t)\big)\dif t\\
&\quad + \frac12 V^2(t)\Delta_x r(X_t^G,t)\dif t
+ V(t)\nabla_x r(X_t^G,t)^{\top} \dif W_t.
\end{align*}
Hence,
\begin{align}
&\lim_{s\to t}\frac{\mathbb{E}_{\P^G}\Bigg[\exp\bigg(\displaystyle\int_s^t\bigg(-V(\tau)\nabla_x G(X_\tau^G,\tau)^{\top} \dif W_\tau 
-\frac12 V^2(\tau)\|\nabla_x G(X_\tau^G,\tau)\|_2^2 \dif \tau
+ \dif r(X_\tau^G,\tau)\bigg)\bigg)\bigg|X_t^G=x\Bigg]-1}{t-s}\nonumber\\
&=\lim_{s\to t}\frac{1}{t-s}\Bigg(
\mathbb{E}_{\P^G}\Bigg[\exp\bigg(\int_s^t V(\tau)\nabla_x (r-G)(X_\tau^G,\tau)^{\top} \dif W_\tau \nonumber\\
&\qquad\qquad+\bigg(\partial_t r(X_\tau^G,\tau)-\frac12 V^2(\tau)\|\nabla_x G(X_\tau^G,\tau)\|^2_2
+\nabla_x r(X_\tau^G,\tau)^{\top}\big(v(X_\tau^G,\tau)+V^2(\tau)\nabla_x G(X_\tau^G,\tau)\big)\nonumber\\
&\qquad\qquad+\frac12 V^2(\tau)\Delta_x r(X_\tau^G,\tau)\bigg)\dif s\bigg)\bigg|X_T^G=x\Bigg]-1\Bigg).
\label{simplification_of_LHS}
\end{align}

Because \(p_t\) is smooth and satisfies the Fokker-Planck equation, and since the terminal law of \(X_t\) is \(Q_t\propto p_t e^{r(\cdot, T)}\), the time-reversal formula yields a backward Brownian motion \(\overline W_\tau\) such that
\[
\dif W_\tau = \dif \overline W_\tau - V(\tau)\nabla_x\log Q_\tau(X_\tau^G) \dif \tau.
\]
Thus,
\begin{align}
\int_s^t V(\tau)\nabla_x (r-G)(X_\tau^G,\tau)^{\top} \dif W_\tau
&= \int_t^T V(\tau)\nabla_x (r-G)(X_\tau^G,\tau)^{\top} \dif \overline W_\tau\\
 &\quad- \int_t^T V^2(\tau)\nabla_x (r-G)(X_\tau^G,\tau)^{\top}\nabla_x\log Q_\tau(X_\tau^G)\dif \tau,
\end{align}
where \(Q_s\) denotes the law of \(X_s^G\) under the terminal condition \(Q_T\propto p_T e^{r(\cdot, T)}\).

We next rewrite the forward stochastic integral with respect to \(\dif \overline W_\tau\) as a backward It\^o integral. Let \(H_\tau := V(\tau)\nabla_x(r-G)(X_\tau^G,\tau)\). Under the associated measure transformation,
\[
\dif H_\tau = A_\tau d\tau + B_\tau\dif \overline W_\tau, \quad 
B_\tau = V^2(\tau)\nabla_x^2(r-G)(X_\tau^G,\tau).
\]
For processes of this form, Haussmann-Pardoux~\cite{haussmann1986time} derives the identity stated below; for completeness we reproduce the argument at the end of the proof:
\begin{equation}
\int_s^t H_\tau \bullet \dif \overline W_\tau
= \int_s^t H_\tau \dif \overline W_\tau
  + \int_s^t \mathrm{tr}(B_\tau) \dif \tau.
\label{eq:transformation of integral direction}
\end{equation}
Here, \(\bullet \dif\overline{W}_\tau\) denotes the backward integral. Consequently,
\[
\int_s^t V(\tau)\nabla_x(r-G)(X_\tau^G,\tau)^{\top}\bullet \dif \overline W_\tau
= \int_s^t V(\tau)\nabla_x(r-G)(X_\tau^G,\tau)^{\top} \dif \overline W_\tau
  + \int_s^t V^2(\tau)\Delta_x(r-G)(X_\tau^G,\tau)\dif \tau.
\]

Thus,
\begin{align}
&\int_s^t V(\tau)\nabla_x (r-G)(X_\tau^G,\tau)^{\top} \dif W_\tau \nonumber\\
=& \int_s^t V(\tau)\nabla_x (r-G)(X_\tau^G,\tau)^{\top} \dif \overline W_\tau
    - \int_s^t V^2(\tau)\nabla_x (r-G)(X_\tau^G,\tau)^{\top}\nabla \log Q_\tau(X_\tau^G)\dif \tau \nonumber\\
=& \int_s^t V(\tau)\nabla_x (r-G)(X_\tau^G,\tau)^{\top} \bullet \dif \overline W_\tau
    - \int_s^t V^2(\tau)\Delta_x(r-G)(X_\tau^G,\tau)\dif \tau \nonumber\\
&\quad - \int_s^t V^2(\tau)\nabla_x (r-G)(X_\tau^G,\tau)^{\top}\nabla\log Q_\tau(X_\tau^G)\dif \tau.
\label{simplification_of_first_term}
\end{align}

Substituting \eqref{simplification_of_first_term} into \eqref{simplification_of_LHS} gives
\begin{align}
&\lim_{s\to t}\frac{1}{t-s}\Bigg(
\mathbb{E}_{\P^G}\Bigg[\exp\Bigg(
\int_s^t V(\tau)\nabla_x(r-G)(X_\tau^G,\tau)^{\top} \dif W_\tau \nonumber\\[-0.4em]
&\qquad
+\int_s^t \Bigg(\partial_\tau r(X_\tau^G,\tau)-\frac12 V^2(\tau)\|\nabla_x G(X_\tau^G,\tau)\|^2
+ \nabla_x r(X_\tau^G,\tau)^{\top}(v(X_\tau^G,\tau)+V^2(\tau)\nabla_x G(X_\tau^G,\tau))\nonumber\\[-0.4em]
&\qquad
+ \frac12 V^2(\tau)\Delta_x r(X_\tau^G,\tau)\Bigg)\dif \tau
\Bigg)\Bigg|X_t^G=x\Bigg]-1\Bigg) \nonumber\\
&=\lim_{s\to t}\frac{1}{s-t}\Bigg(
\mathbb{E}_{\P^G}\Bigg[\exp\Bigg(
\int_t^T V(\tau)\nabla_x(r-G)(X_\tau^G,\tau)^{\top}\bullet \dif \overline W_\tau \nonumber\\[-0.2em]
&\qquad 
+\int_s^t\Bigg(
\partial_\tau r(X_\tau^G,\tau)
-\frac12 V^2(\tau)\|\nabla_x G(X_\tau^G,\tau)\|^2
+ \nabla_x r(X_\tau^G,\tau)^{\top}(v(X_\tau^G,s)+V^2(\tau)\nabla_x G(X_\tau^G,\tau))\nonumber\\
&\qquad \qquad
+ \frac12 V^2(\tau)\Delta_x r(X_\tau^G,\tau)
+ V^2(\tau)\Delta_x(G-r)(X_\tau^G,\tau)
+ V^2(\tau)\nabla_x(G-r)(X_\tau^G,\tau)^{\top}\nabla\log Q_\tau(X_\tau^G)
\Bigg)\dif \tau
\Bigg) \nonumber\\
&\qquad \qquad \qquad \qquad\Bigg|X_t^G=x\Bigg]-1\Bigg) \nonumber\\
&=: \lim_{s\to t}\frac{1}{t-s}\Bigg(
\mathbb{E}_{\P^G}\Bigg[\exp\Bigg(
\int_t^T g(X_\tau^G,\tau)\bullet \dif \overline W_\tau
+ \int_t^T f(X_\tau^G,\tau)\dif \tau
\Bigg)\Bigg|X_t^G=x\Bigg]-1\Bigg),
\label{further_simplification_of_LHS}
\end{align}
where \(g(X_t^G,t)=V(t)\nabla_x(r-G)(X_t^G,t)^{\top}\) and
\[
\begin{aligned}
f(X_t^G,t)
&= \partial_t r(X_t^G,t)-\frac12 V^2(t)\|\nabla_x G(X_t,t)\|^2
   + \nabla_x r(X_t^G,t)^{\top}\Bigg(v(X_t^G,t)+V^2(t)\nabla_x G(X_t^G,t)\Bigg) \\
&\quad + \frac12 V^2(t)\Delta_x r(X_t^G,t)
   + V^2(t)\Delta_x(G-r)(X_t^G,t)
   + V^2(t)\nabla_x(G-r)(X_t^G,t)^{\top}\nabla\log Q_t(X_t^G).
\end{aligned}
\]
By a Taylor expansion and the orthogonality between \(\dif t\) and \(d\overline W_t\),
\begin{align*}
&\mathbb{E}_{\P^G}\Bigg[\exp\Bigg(\int_s^t g(X_\tau^G,\tau)\bullet \dif \overline W_\tau 
    + \int_s^t f(X_\tau^G,\tau) \dif \tau\Bigg)\Bigg|X_t^G=x\Bigg]-1 \\
&= \mathbb{E}_{\P^G}\Bigg[\int_s^t g(X_\tau^G,\tau)\bullet \dif \overline W_\tau
    + \int_s^t f(X_\tau^G,\tau)\dif \tau
    + \frac12\Bigg(\int_s^t g(X_\tau^G,\tau)\bullet \dif \overline W_\tau\Bigg)^{2}\Bigg|X_t^G=x\Bigg]
    + o(t-s) \\
&= \mathbb{E}_{\P^G}\Bigg[f(X_t^G,t)(t-s) + \frac12\|g(X_t^G,t)\|^2 (t-s)\Bigg|X_t^G=x\Bigg]
    + o(t-s),
\end{align*}
where we used \(\int_s^t g(X_\tau^G,\tau)\bullet \dif \overline W_\tau = 0\) and \((\dif\overline W_\tau)^2 = \dif \tau\).
Substituting this expansion into \eqref{further_simplification_of_LHS} yields
\begin{align*}
&\lim_{s\to t}\frac{\mathbb{E}_{\P^G}\Bigg[\exp\Bigg(
-\int_s^t V(\tau)\nabla_x G(X_\tau^G,\tau)^{\top} \dif W_\tau
+ \frac12\int_s^t V^2(\tau)\|\nabla_x G(X_\tau^G,\tau)\|^2 \dif \tau
+ \int_s^t \dif r(X_\tau^G,\tau)
\Bigg)\Bigg|X_t^G=x\Bigg]-1}{T-t} \\
&= f(x,t) + \frac12\|g(x,t)\|^2 \\
&= \partial_t r(x,t) -\frac12 V^2(t)\|\nabla_x G(x,t)\|^2_2
 + \nabla_x r(x,t)^{\top}\Bigg(v(x,t)+V^2(t)\nabla_x G(x,t)\Bigg)
 + \frac12 V^2(t)\Delta_x r(x,t) \\
&\quad + V^2(t)\Delta_x(G-r)(x,t)
 + V^2(t)\nabla_x(G-r)(x,t)^{\top}\nabla\log Q_t(x)
 + \frac12\|V(t)\nabla_x(r-G)(x,t)\|^2_2.
\end{align*}

Using \(\nabla_x\log Q_t = \nabla_x\log(p_t e^{r(x,t)})=\nabla_x\log p_t + \nabla_x r(\cdot, t)\), this simplifies to
\[
\partial_t r-\frac{V^2(t)}{2}\Bigg(\Delta r + \|\nabla r\|^2\Bigg)
+ \nabla r^{\top}\Bigg(v + V^2(t)\nabla G\Bigg)
+ V^2(t)\nabla\log p_t^{\top}\nabla(G-r)
+ V^2(t)\Delta G,
\]
which establishes the desired identity.
\end{proof}

\begin{proof}[Proof of Eq.~\eqref{eq:transformation of integral direction}]
By the definitions of the forward and backward Itô integrals, let \(\Pi=(t_0,t_1,\ldots,t_n)\) be a partition of \([t,T]\) with mesh \(|\Pi|=\max_{0\le i\le n-1}|t_{i+1}-t_i|\). Then
\[
\int_t^T H_s\bullet \dif \overline W_s-\int_t^T H_s^\top \dif \overline W_s
=\lim_{|\Pi|\to 0}\sum_{i=0}^{n-1}(H_{t_{i+1}}-H_{t_i})(\overline W_{t_{i+1}}-\overline W_{t_i})
=:\langle H,\overline W\rangle_T-\langle H,\overline W\rangle_t,
\]
where \(\langle H,\overline W\rangle_s\) denotes their quadratic covariation at time \(s\).
Hence,
\[
\int_t^T H_s\bullet \dif \overline W_s - \int_t^T H_s^\top \dif \overline W_s
= \int_t^T \dif \langle H, \overline W\rangle_s.
\]

To evaluate right hand side, consider each coordinate, denoted by $H_s^i$ and $\overline W_s^i$.  
From the SDE for $H_s$, we have
\[
\dif H_s^i = (A_s)_i \dif s + \sum_j (B_s)_{ij} \dif \overline W_s^j,
\]
where $(A_s)_i$ is the $i$-th component of $A_s$ and $(B_s)_{ij}$ the $(i,j)$-th entry of $B_s$. By Itô calculus,
\begin{align*}
\dif \langle H^i, \overline W^i\rangle_s
&= \Big\langle (A_s)_i ds + \sum_j (B_s)_{ij} \dif \overline W_s^j, \dif \overline W_s^i \Big\rangle
= \Big\langle \sum_j (B_s)_{ij} \dif \overline W_s^j, \dif \overline W_s^i \Big\rangle \\
&= \sum_j (B_s)_{ij} \langle \dif \overline W_s^j, \dif \overline W_s^i \rangle
= \sum_j (B_s)_{ij} \delta_{ij} \dif s
= (B_s)_{ii} \dif s,
\end{align*}
where $\delta_{ij}$ is the Kronecker delta. Summing over $i$ gives
\[
\dif \langle H, \overline W\rangle_s = \sum_i (B_s)_{ii} \dif s = \operatorname{tr}(B_s) \dif s.
\]
Substituting back, we obtain
\[
\int_t^T H_s\bullet \dif \overline W_s
= \int_t^T H_s^\top \dif \overline W_s + \int_t^T \operatorname{tr}(B_s) \dif s,
\]
which proves Eq.~\eqref{eq:transformation of integral direction}.
\end{proof}

\section{Additional Implementation Details}
\label{additional_set}

\paragraph{Gaussian Mixture Model.} For the primary GMM evaluation, we employ $N=8,192$ particles and 500 discretization steps. Resampling is governed by an Effective Sample Size (ESS) criterion, triggered whenever $\mathrm{ESS} < cN$ for a threshold parameter $c \in (0, 1)$. We set $c=0.8$, following \cite{ren2025driftlite}.

In terms of the reward function, we impose a quadratic reward \(r(x)=-\tfrac12(x-\mu_r)^{\top}\Sigma_r^{-1}(x-\mu_r)\), which induces a posterior mixture with covariance \(\widetilde{\Sigma}=(\Sigma_r^{-1}+(40I)^{-1})^{-1}\), component means \(\widetilde{\mu}_i=\widetilde{\Sigma}((40I)^{-1}\mu_i+\Sigma_r^{-1}\mu_r)\), and weights \(\widetilde{w}_i\propto \exp\left[-\tfrac12(\mu_i-\mu_r)^{\top}(\Sigma_r+40I)^{-1}(\mu_i-\mu_r)\right]\). This quadratic specification ensures that the posterior distribution remains analytically tractable, providing a rigorous ground truth for evaluation.

\paragraph{Inverse Problems.} In inverse problems, we mainly test our methods and the baseline methods on the FFHQ-256~\cite{tero2019ffhq} dataset and ImageNet-256~\cite{jia2009imagenet} datasets. All images used for the tests in this paper are in RGB. For FFHQ-256, the 100 testing images are selected to be the first 100 images in the dataset, whose indexes range from 00000 to 00099. For ImageNet-256, the 100 testing images are selected to be the first 100 images in the ImageNet-1k validation set. Testing on this representative dataset can fully demonstrate the performance of the strategies.

\paragraph{Text-to-Image Generation.}In text-to-image Generation, sampling follows classifier-free guidance~\cite{ho2022classifierfree} with scale 7.5, alongside the DDIM sampler~\cite{SongMengErmon2021_DDIM} using \(\eta=1\) and \(T=100\) steps. The reward is defined as \(r(X_t)=r(X_0=\widehat X_t)\), where \(\widehat X_t\) denotes the model's approximation of \(\mathbb{E}_{p_{\mathrm{data}}}[X_0\mid X_t]\), and the guidance signal is set to \(G(X_t)=\exp\left(10\max_{0\le s\le t} r(X_s)\right)\). The inference step we use is 100, and fixed resampling is employed with a resampling interval of 20, \emph{i.e.} resampling is performed at $[0,20,40,60,80]$, following~\cite{singhal2025general}.

As for Stable Diffusion model, we consider publicly available models fine-tuned for prompt alignment and aesthetic quality. Specifically, we consider DPO fine-tuned models for SD v1.5 and SDXL~\cite{wallace2024dpo, rafailov2023dpo}. Approximately, SD v1.5 has 860M parameters and SDXL has 2.6B parameters. The substantial parameter scale of these backbones underscores the applicability of \texttt{URGE} to high-dimensional, large-scale generative tasks of practical significance.

\section{Additional Experimental Results and Discussions}
\label{additional_exp}

In this section, we provide additional experimental results  and more qualitative comparisons between \texttt{URGE} and existing baselines.

\paragraph{Gaussian Mixture Model.}We first provide a visual comparison of the two-dimensional slices of all $N=8,192$ particles in two specific dimensional for the main experiment. As shown in Fig.~\ref{fig:gmm_comparison}, \texttt{PG} produces visibly biased samples, while \texttt{AFDPS} still exhibits certain deviations from the analytical reference. In contrast, \texttt{AFDPS+VCG}, \texttt{FK-Steering}, and \texttt{URGE} accurately capture the correct modes and yield samples that align closely with the target distribution. 

\begin{figure}[h]
    \centering
\includegraphics[width=0.85\textwidth]{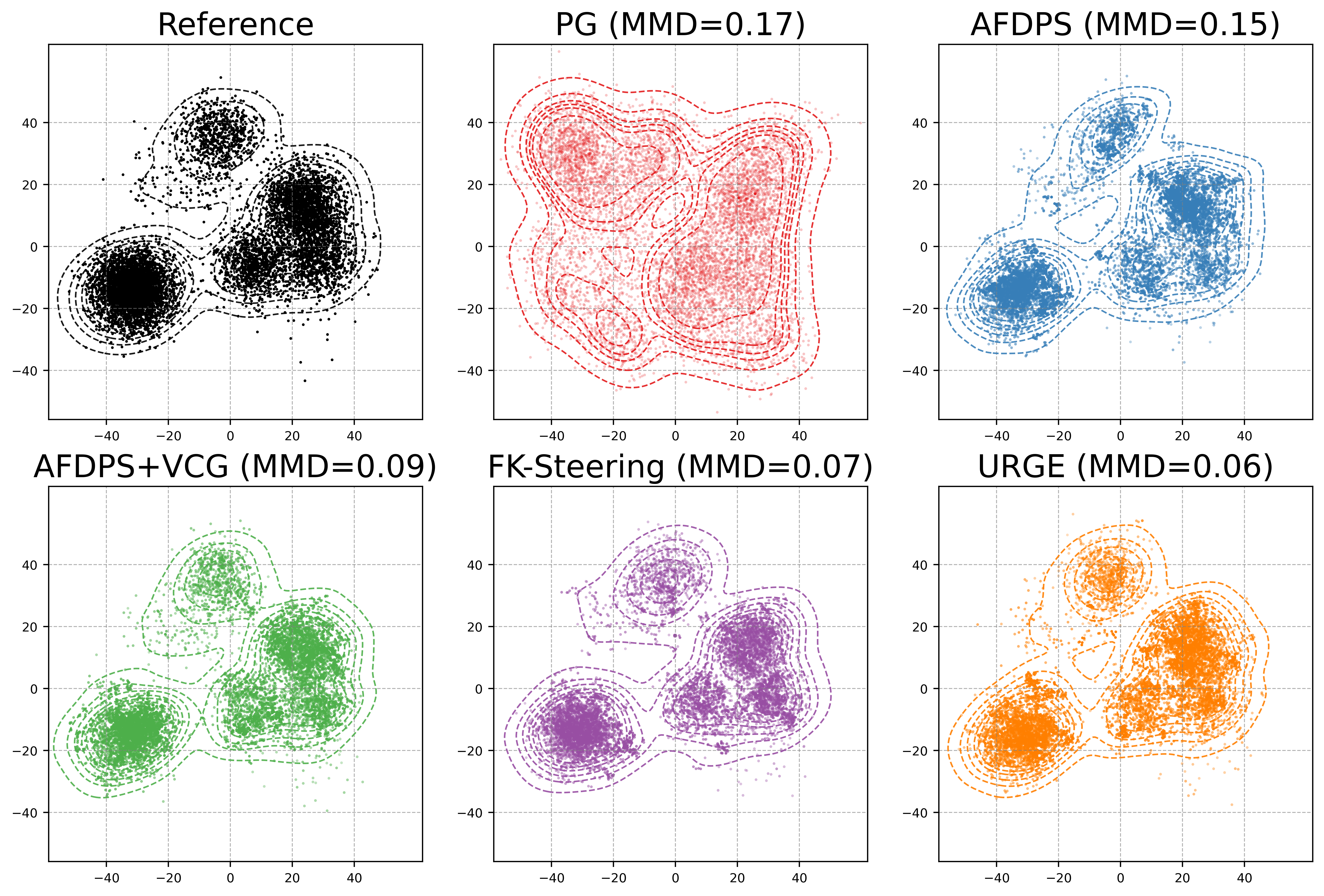}
    \caption{Qualitative comparison of inference-time scaling strategy on the Gaussian Mixture example.} 
    \label{fig:gmm_comparison}
\end{figure}

In our main experiments, we use $K=40$ components and $\text{Dim}=30$ dimensions as the default setting. To further validate the robustness of \texttt{URGE}, we additionally conduct experiments with $K=80$, $\text{Dim}=30$ and $K=40$, $\text{Dim}=60$, respectively. We run experiments under three random seeds and report the averaged results, as table~\ref{tab:gmm_results_k80} and table~\ref{tab:gmm_results_dim60} show. In the above setting, \texttt{URGE} continues to demonstrate stable and competitive performance,  maintaining strong alignment with the analytical reference. It is worth noting that as the number of components and dimensionality increase, the gap between the performance of \texttt{URGE} and other strategies also increases, indicating that its advantage becomes more pronounced. This highlights that \texttt{URGE} has better scaling performance and may have stronger ability to handle increasingly complex problems.

\begin{table}[h]
\centering
\begin{tabular}{l c c c c}
\toprule
\textbf{Method}              & \textbf{MMD} & \textbf{SWD}  & \textbf{Mean}  & \textbf{Cov Frob} \\
\midrule
\texttt{PG}                  & 0.29 & 3.21 & 13.11 & 1029.85 \\
\texttt{AFDPS}               & 0.12 & 1.55 & 6.94 & 679.62 \\
\texttt{AFDPS+VCG}             & 0.10 & 1.46 & 6.78 & 658.96 \\
\texttt{FK-Steering}         & 0.07 & 1.32 & 6.94 & 605.31 \\
\textbf{\texttt{URGE}}                & \textbf{0.07} & \textbf{1.19} & \textbf{6.12} & \textbf{485.59} \\
\bottomrule
\end{tabular}
\caption{Performance of different strategies on the Gaussian mixture toy example with $K=80$, $\text{Dim}=30$. Lower values indicate closer alignment with the analytical reference.}
\label{tab:gmm_results_k80}
\end{table}

\begin{table}[h]
\centering
\begin{tabular}{l c c c c}
\toprule
\textbf{Method}              & \textbf{MMD} & \textbf{SWD}  & \textbf{Mean}  & \textbf{Cov Frob} \\
\midrule
\texttt{PG}                  & 0.324 & 3.127 & 20.815 & 1808.37 \\
\texttt{AFDPS}               & 0.223 & 2.472 & 16.228 & 1369.89 \\
\texttt{AFDPS+VCG}             & 0.133 & 1.873 & 13.716 & 1341.90 \\
\texttt{FK-Steering}         & \textbf{0.092} & 1.561 & 12.843 & 1567.44 \\
\textbf{\texttt{URGE}}       & 0.109 & \textbf{1.384} & \textbf{9.242} & \textbf{901.05} \\
\bottomrule
\end{tabular}
\caption{Performance of different strategies on the Gaussian mixture toy example with $K=40$, $\text{Dim}=60$. Lower values indicate closer alignment with the analytical reference.}
\label{tab:gmm_results_dim60}
\end{table}

Finally, we investigate the sensitivity of the methods to the number of discretization steps. While our main experiments use 500 steps, we fix the random seed and evaluate all methods at  steps of $[125, 250, 500, 1000, 2000]$. The results are plotted in Figure~\ref{metrics_steps}.  Across these discretizations, \texttt{URGE} remains the most consistent performer, achieving the best in the majority of cases and exhibiting modest variation. In contrast, competing strategies show unpredictable fluctuations across step sizes.  This empirical stability indicates that \texttt{URGE} is robust to the discretization and generalizes more reliably.

\begin{figure}[t]
    \centering
    \includegraphics[width=0.81\textwidth]{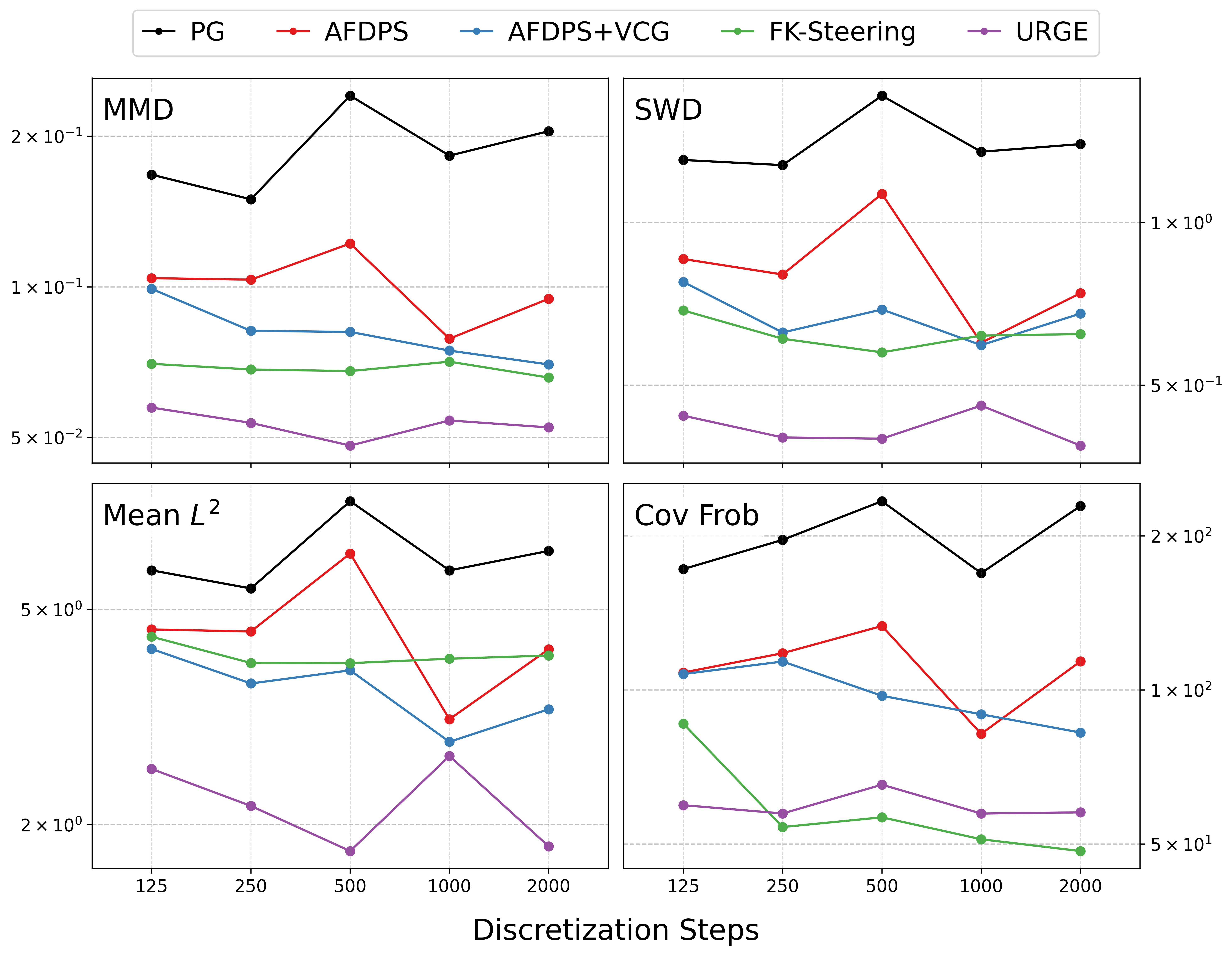}
    \caption{Performance metrics versus discretization steps on Gaussian Mixture Model.} 
    \label{metrics_steps}
\end{figure}

\paragraph{Inverse Problems.} To further assess the generality of \texttt{URGE}, we conducted experiments on FFHQ-256, aiming to verify its effectiveness on more models. Table~\ref{tab:ffhq256} summarizes PSNR and LPIPS for representative tasks. \texttt{URGE} consistently tracks the behavior of \texttt{AFDPS} and achieves results of comparable quality. This alignment is significant, as it demonstrates that \texttt{URGE} preserves the high performance of derivative-based methods while maintaining robust stability.

\begingroup
\begin{table*}[t]
\centering
\small
\renewcommand{\arraystretch}{1.2}
\setlength{\tabcolsep}{6pt}
\begin{tabular}{c cc cc cc cc}
\toprule
\multirow{2}{*}{\makecell[c]{\textbf{Method}}} &
\multicolumn{2}{c}{\textbf{Gaussian Deblurring}} &
\multicolumn{2}{c}{\textbf{Motion Deblurring}} &
\multicolumn{2}{c}{\textbf{Super Resolution}} &
\multicolumn{2}{c}{\textbf{Box Inpainting}} \\
 & \textbf{PSNR$\uparrow$} & \textbf{LPIPS$\downarrow$} & \textbf{PSNR$\uparrow$} & \textbf{LPIPS$\downarrow$} & \textbf{PSNR$\uparrow$} & \textbf{LPIPS$\downarrow$} & \textbf{PSNR$\uparrow$} & \textbf{LPIPS$\downarrow$} \\
 \midrule
\texttt{SGS-EDM} & 24.37 & 0.2833 & 22.18 & 0.3593 & 15.81 & 0.4208 & 22.18 & 0.2698 \\
\texttt{FK-Corrector} & 21.22 & 0.4039 & 20.49 & 0.4284 & 20.64 & 0.4133 & 16.96 & 0.5503 \\
\texttt{AFDPS-SDE} & 24.76 & 0.2590 & 23.56 & 0.2866 & \textbf{22.99} & \textbf{0.3033} & 25.37 & 0.2089 \\
\texttt{AFDPS-ODE} & \textbf{24.96} & \textbf{0.2571} & 23.56 & 0.2901 & 21.45 & 0.3347 & \textbf{25.59} & \textbf{0.1968} \\
\texttt{\bfseries URGE} & 24.79 & 0.2593 & \textbf{23.58} & \textbf{0.2865} & 22.96 & 0.3056 & 25.31 & 0.2092 \\
\bottomrule
\end{tabular}
\caption{Results on 4 inverse problems for 100 validation images from FFHQ-256.}
\label{tab:ffhq256}
\end{table*}
\endgroup

We also measure inference time on the Gaussian Deblurring task. In Figure~\ref{inverse problem:time_scaling}, \texttt{AFDPS-ODE} exhibits substantially larger cost across all numbers of particles, whereas \texttt{URGE} exhibits the most favorable computational scaling and consistently achieves the shortest inference times. These timing results support that \texttt{URGE} attains a more practical balance between particle budget and runtime, making it particularly attractive when computational efficiency is a concern.

For a more intuitive visual comparison, we select several portrait examples from the inverse problems experiments and present representative reconstructions on FFHQ in Figures~\ref{inverse_problem1} and~\ref{inverse_problem2}. The visual results confirm that \texttt{AFDPS-SDE}, \texttt{AFDPS-ODE} and \texttt{URGE} all produce strong restorations. Even more, in several cases \texttt{URGE} yields visibly improved reconstructions compared to \texttt{AFDPS}, further supporting the effectiveness of \texttt{URGE}.

\begin{figure}[t]
  \centering
  \begin{minipage}[t]{0.465\linewidth}
    \centering
    \includegraphics[width=\linewidth]{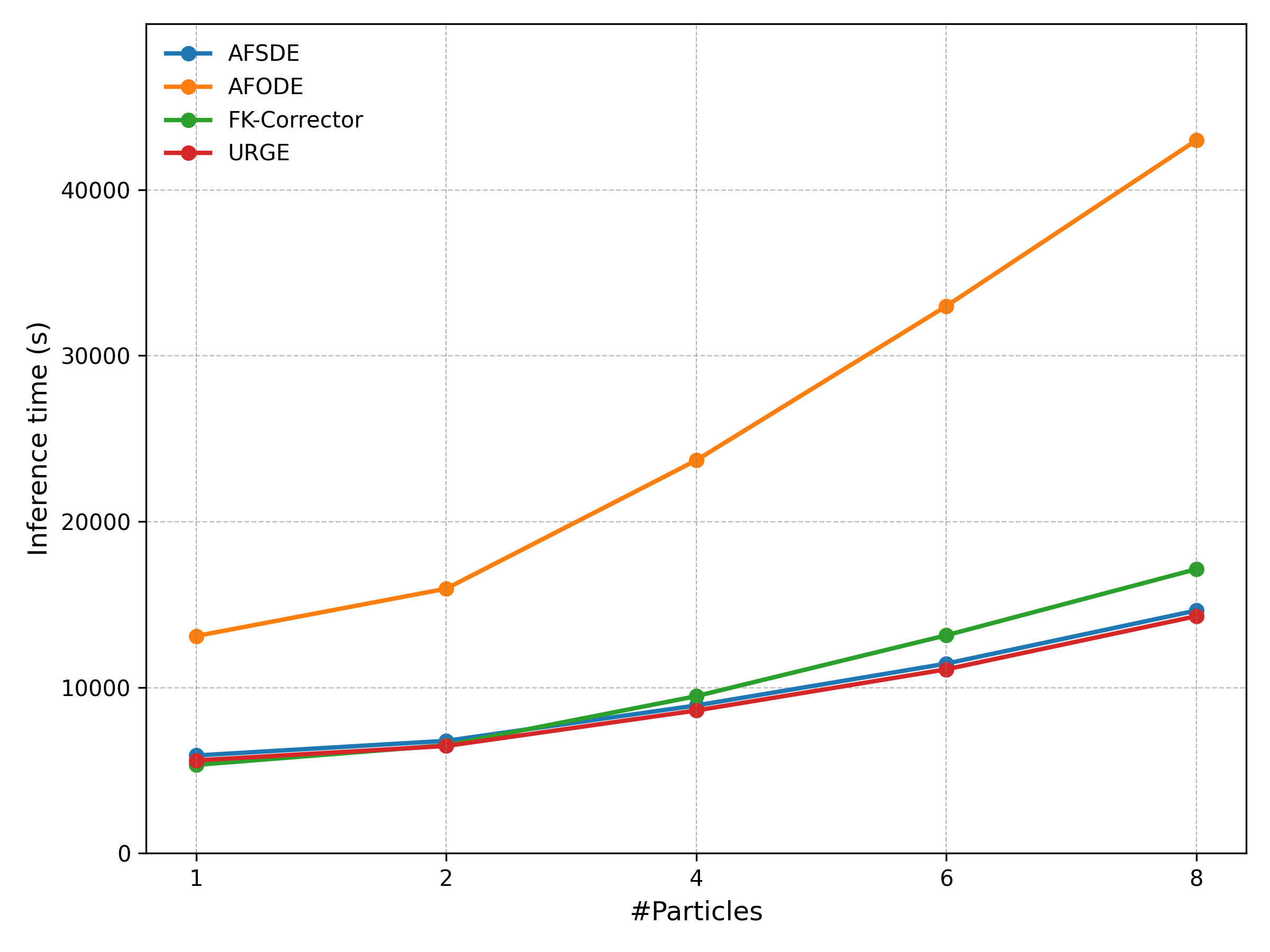}
    \captionof{figure}{Inference time versus number of particles for the Gaussian Deblurring task.}
    \label{inverse problem:time_scaling}
  \end{minipage}\hfill
  \begin{minipage}[t]{0.52\linewidth}
    \centering
    \includegraphics[width=\linewidth]{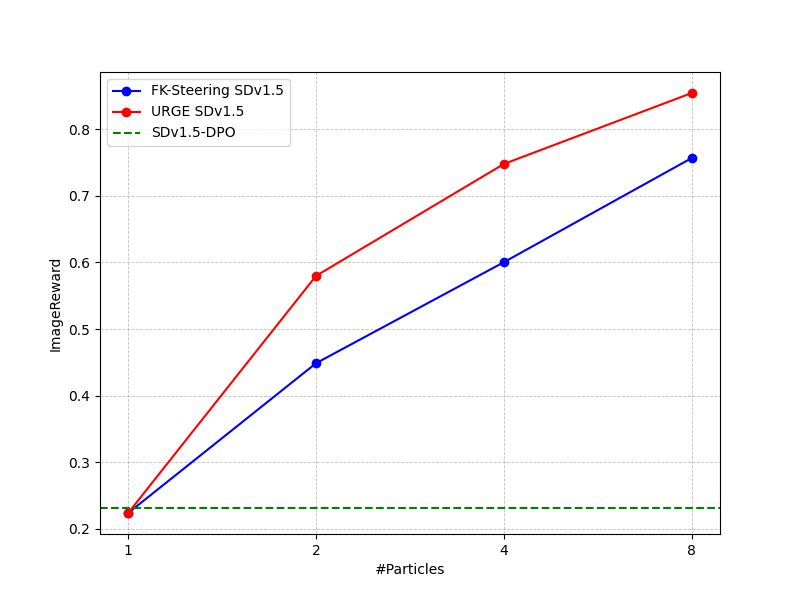}
    \captionof{figure}{ImageReward versus number of particles for \texttt{URGE} and \texttt{FK‑Steering} (SDv1.5).}
    \label{scaling_imagereward}
  \end{minipage}
\end{figure}

Ultimately, we focus on the runtime of \texttt{URGE}. Following~\cite{chen2025solving}, we measure the runtime with \texttt{AFDPS-ODE} run at $N=5$ particles to offset the additional computational cost introduced by the corrector step, while all other methods are evaluated at $N=10$ particles. As shown in Figure~\ref{time_tasks}, across different inverse tasks and models, \texttt{URGE} exhibits excellent runtime performance and is generally faster than the baselines, demonstrating that its practical computational cost is competitive in addition to its reconstruction quality.

\begin{figure}[t]
    \centering
    \includegraphics[width=0.9\textwidth]{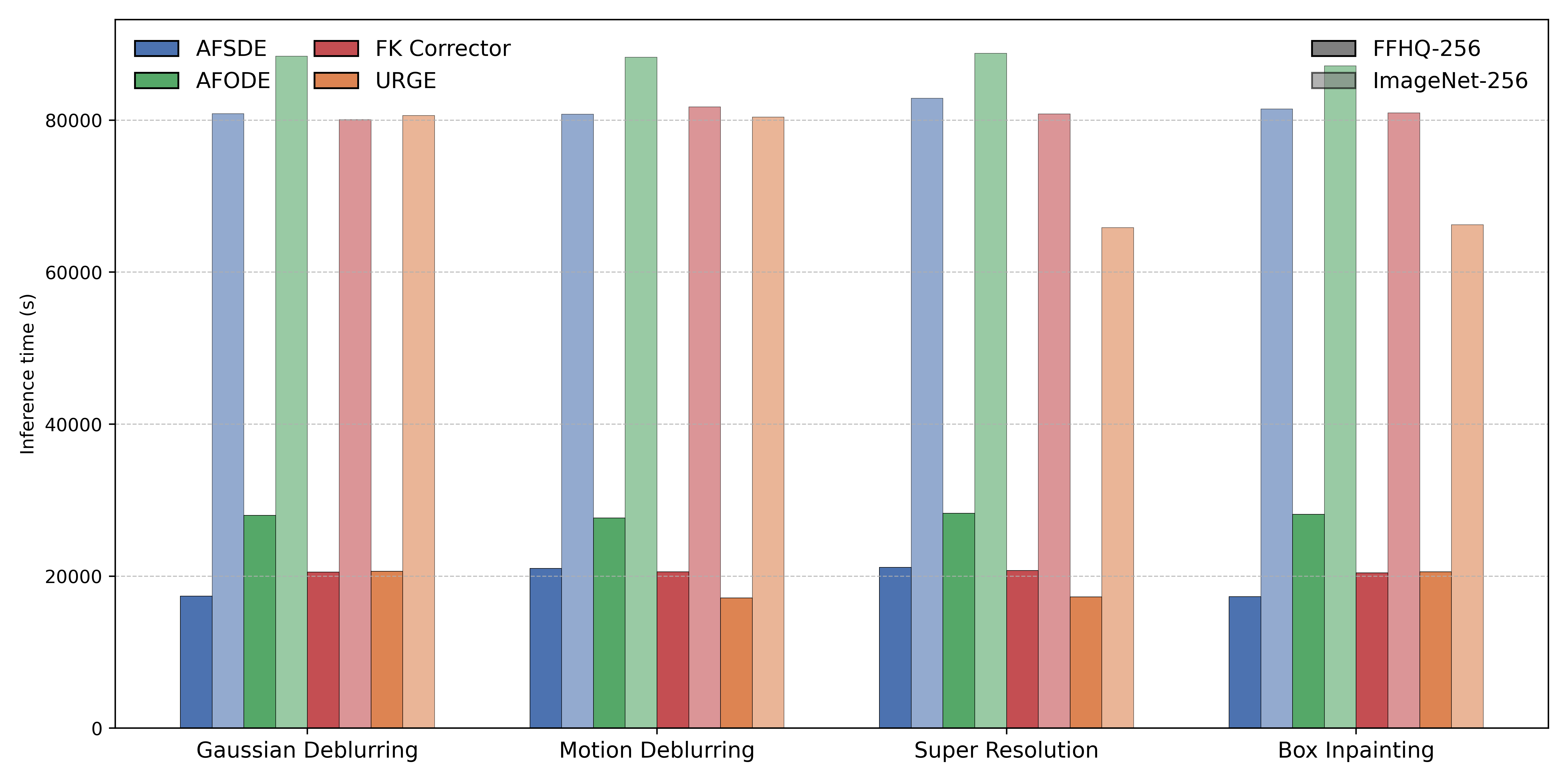}
    \caption{Inference time for the inverse problems tasks on FFHQ-256 and ImageNet-256. Results report runtimes where \texttt{AFDPS-ODE} is run with $k=5$ particles while all other methods use $k=10$ particles.} 
    \label{time_tasks}
\end{figure}

\paragraph{Text-to-Image Generation.} The scaling of the diffusion model as the number of particles increases is a notable issue. We measure ImageReward under the same experimental settings as the main experiment while varying the number of particles. Figure~\ref{scaling_imagereward} shows the ImageReward curves for \texttt{URGE} and \texttt{FK‑Steering}, both evaluated with SDv1.5. As particle counts increase, \texttt{URGE}'s ImageReward grows faster and consistently outperforms \texttt{FK‑Steering} across all tested numbers of particles.

\begin{figure}[t]
    \centering
    \includegraphics[width=0.75\textwidth]{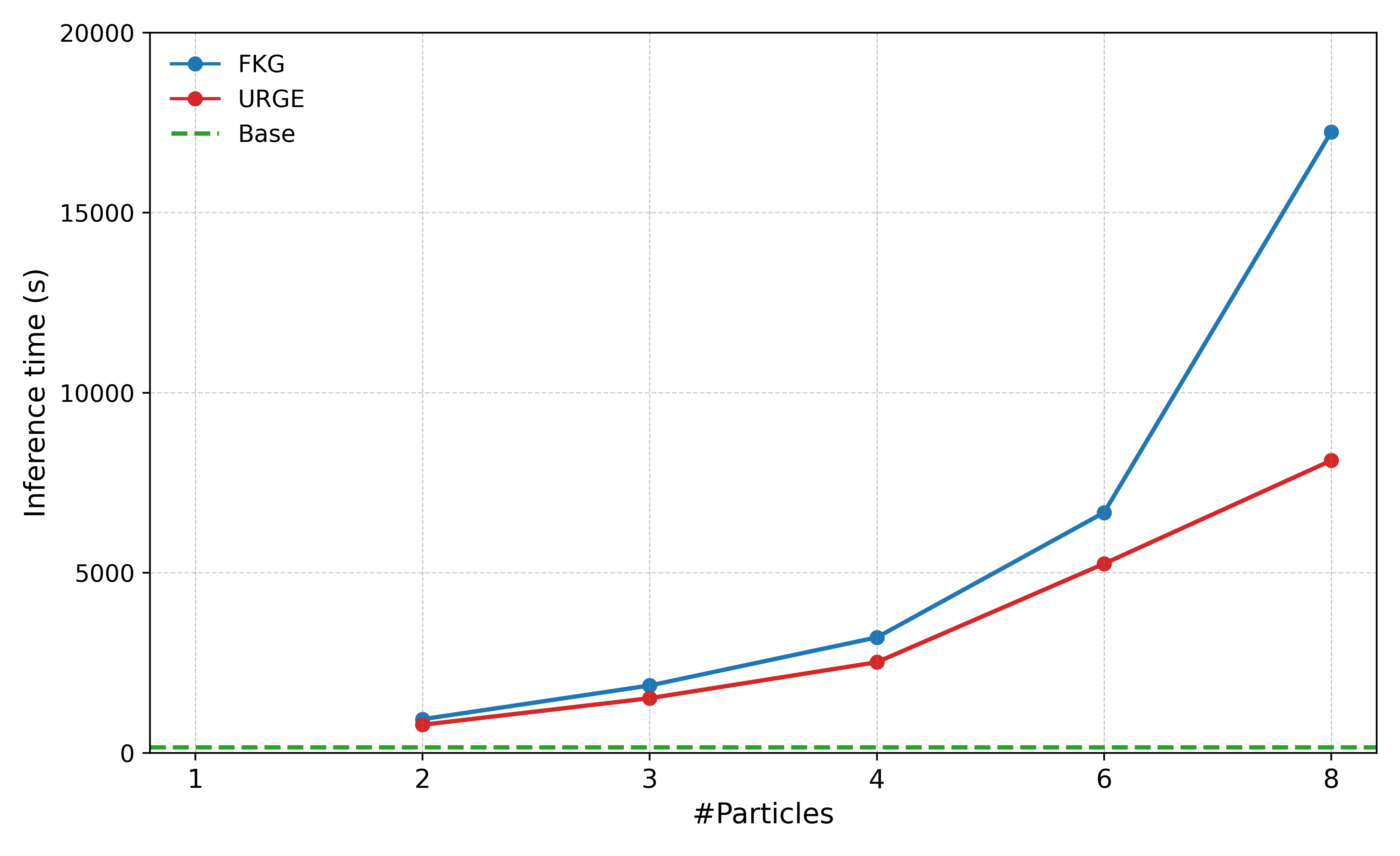}
    \caption{Inference time for the text-to-image tasks on 20 prompts. \texttt{FKG} refers to \texttt{FK-Steering} with gradient guidance. Base is ran with only 1 particle.} 
    \label{t2i_time_scaling}
\end{figure}

We further present a visual comparison on prompts that describe two colored objects, since color requires correct interpretation of object ordering. As shown in Table.~\ref{add_visualization_test}, all other experimental settings follow the main experiment except for the prompts used. The selected prompts contain exactly two colored entities. On colored two-object prompts, \texttt{URGE} also performs strongly. The generated samples adhere more strictly to the specified color attributes and spatial ordering, demonstrating enhanced compositional fidelity.

\begin{table}[t]
    \centering
    \setlength{\tabcolsep}{4pt} 
    \begin{tabular}{C{0.14\textwidth} C{0.135\textwidth} C{0.135\textwidth} C{0.135\textwidth} C{0.135\textwidth}}
    \toprule
    \textbf{Prompt} \quad\quad\quad\quad \textit{a photo of ...} &
    \textit{... a blue laptop and a brown bear.} &
    \textit{... a purple elephant and a brown sports ball.} &
    \textit{... a white dining table and a red car.}&
    \textit{... a blue cell phone and a green apple.}\\
    \midrule
    
    \makecell{\emph{Base Model}\\SDv1.5} &
    \includegraphics[width=\linewidth,keepaspectratio]{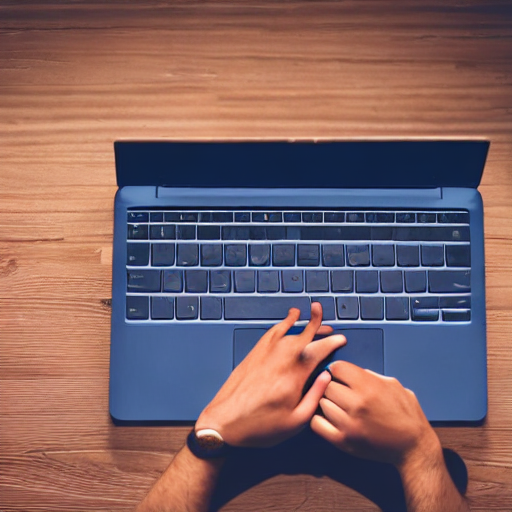} &
    \includegraphics[width=\linewidth,keepaspectratio]{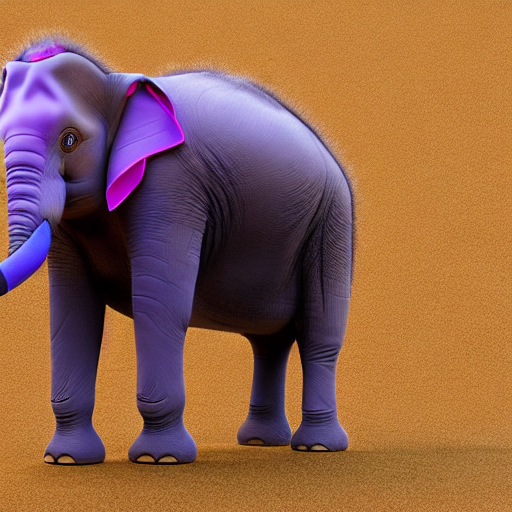} &
    \includegraphics[width=\linewidth,keepaspectratio]{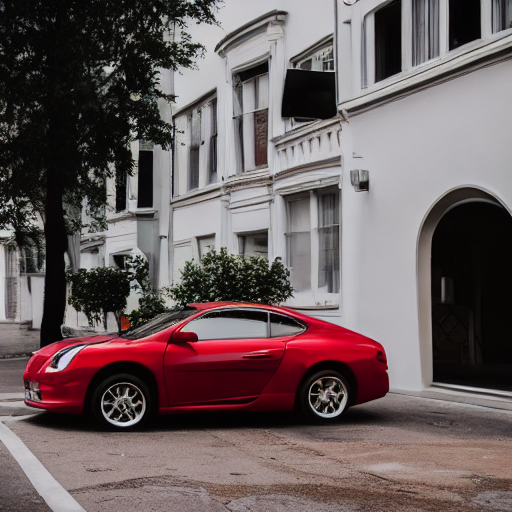} & \includegraphics[width=\linewidth,keepaspectratio]{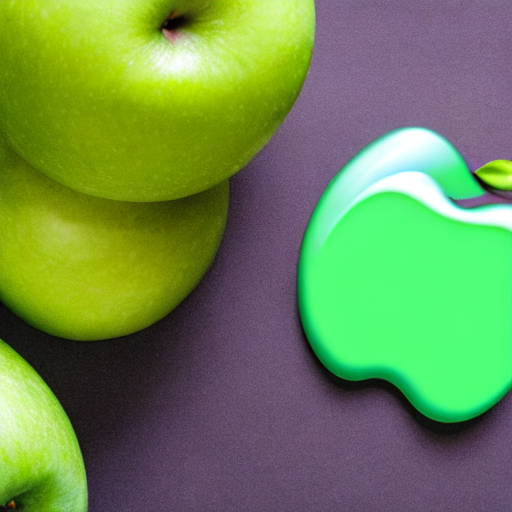} \\
    \makecell{\emph{Base Model}\\SDXL} &
    \includegraphics[width=\linewidth,keepaspectratio]{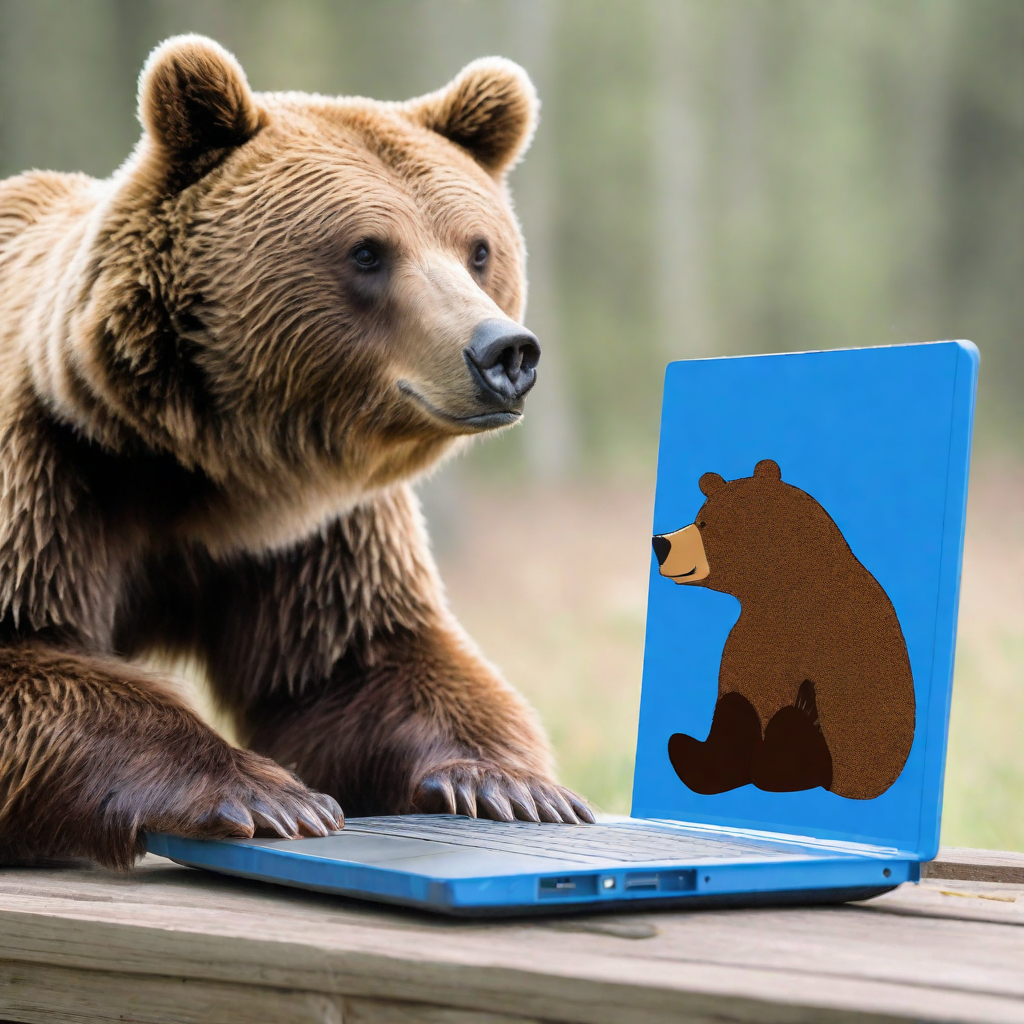} &
    \includegraphics[width=\linewidth,keepaspectratio]{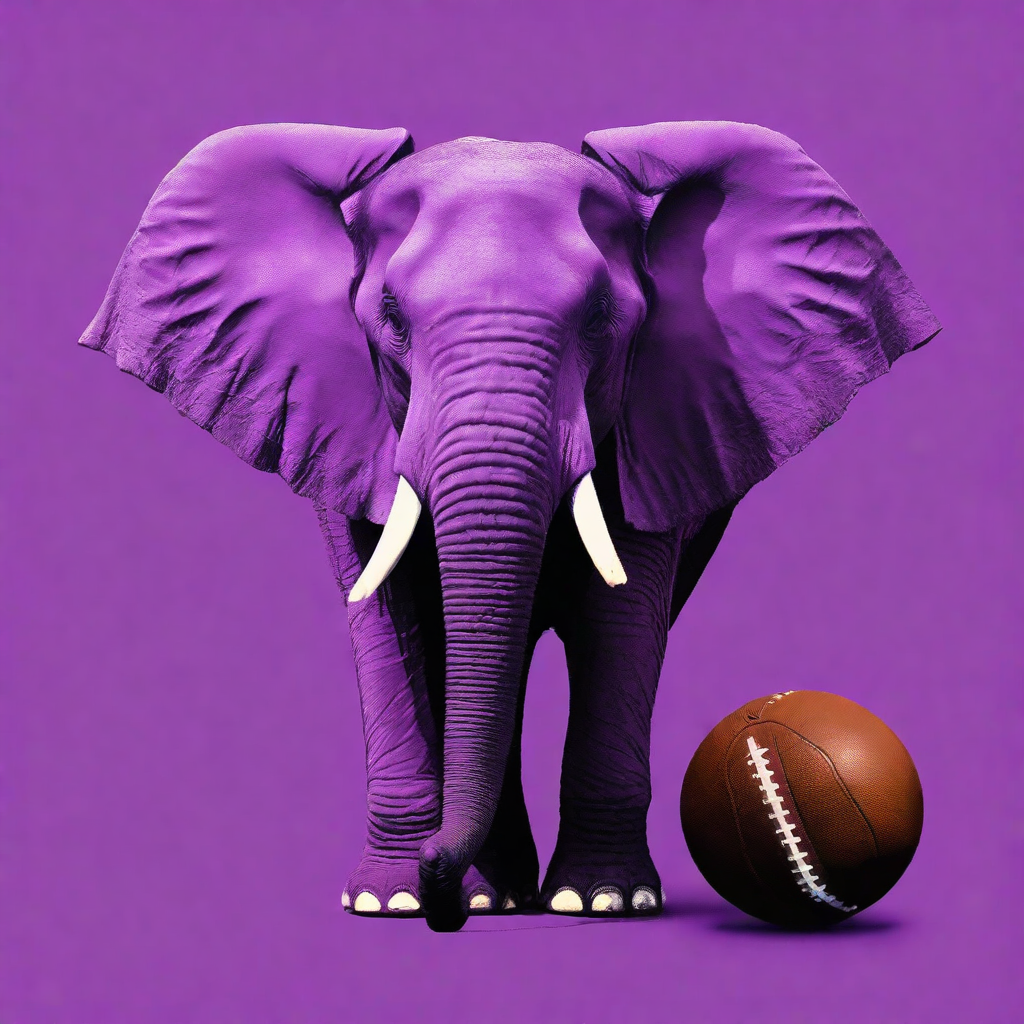} &
    \includegraphics[width=\linewidth,keepaspectratio]{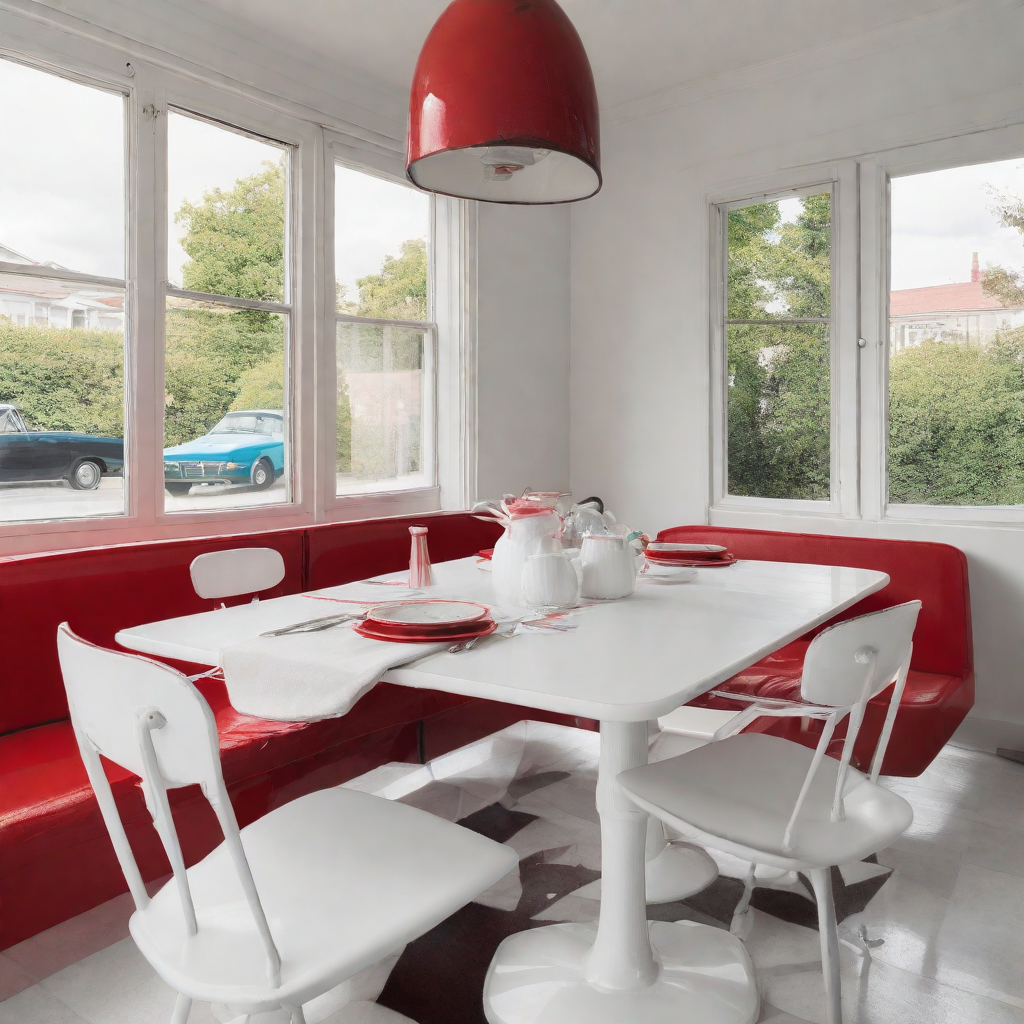} & \includegraphics[width=\linewidth,keepaspectratio]{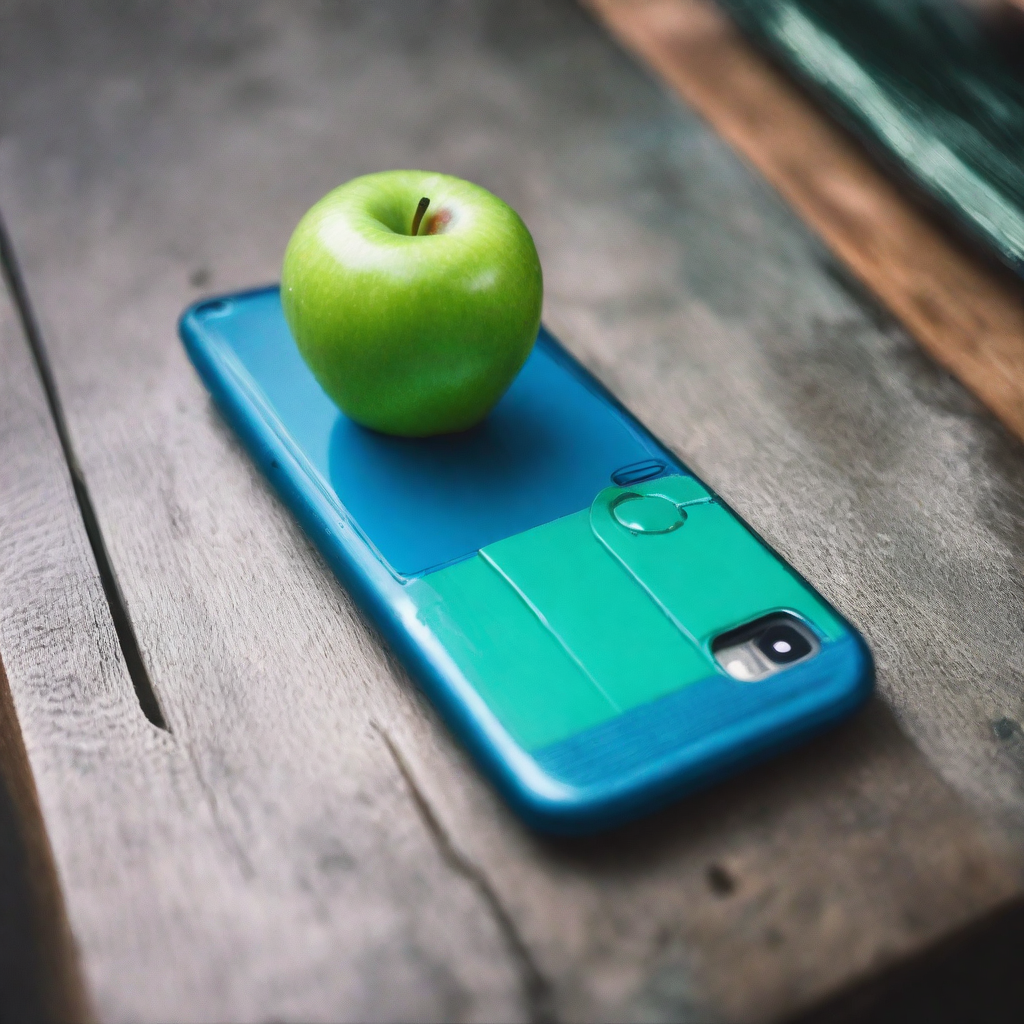} \\
    \hline
    \hline
    
    \makecell{\texttt{FK-}\\\texttt{Steering}\\SDv1.5,\\$N=4$} &
    \includegraphics[width=\linewidth,keepaspectratio]{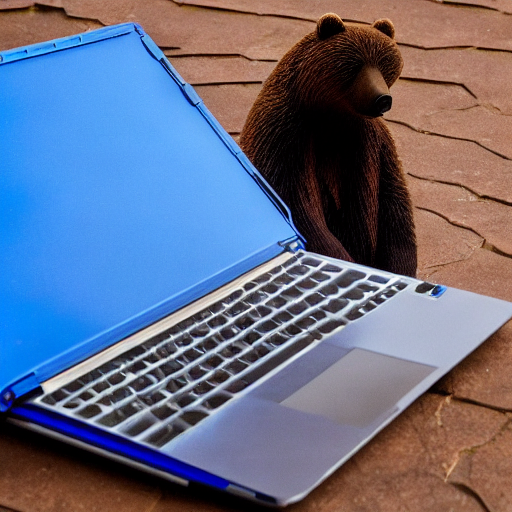} &
    \includegraphics[width=\linewidth,keepaspectratio]{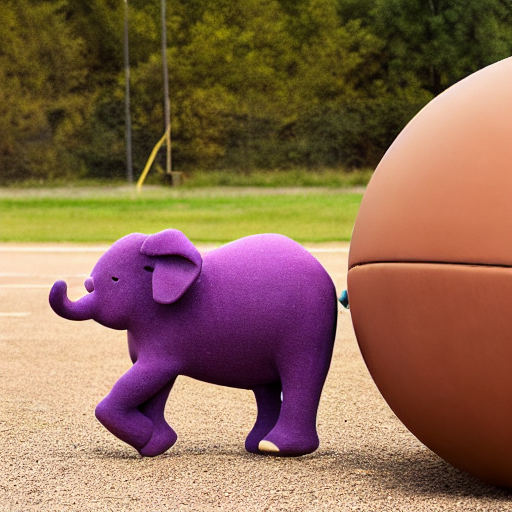} &
    \includegraphics[width=\linewidth,keepaspectratio]{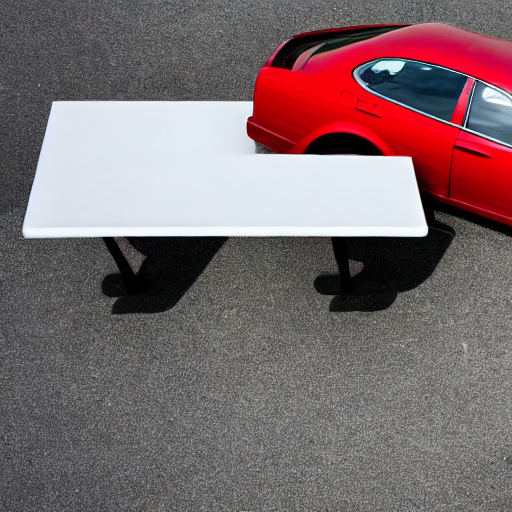} &
    \includegraphics[width=\linewidth,keepaspectratio]{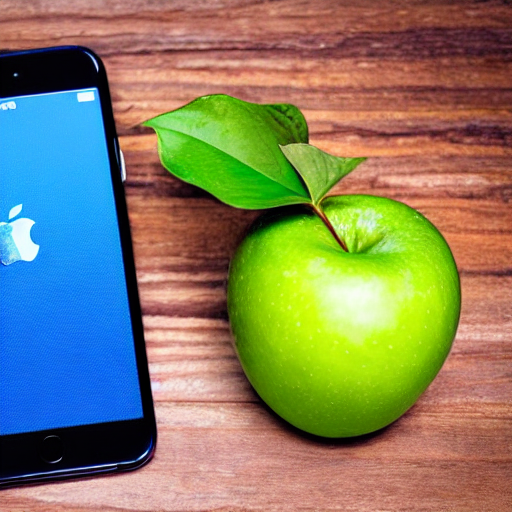} \\
    
    \makecell{\texttt{\bfseries URGE}\\SDv1.5\\$N=4$} &
    \includegraphics[width=\linewidth,keepaspectratio]{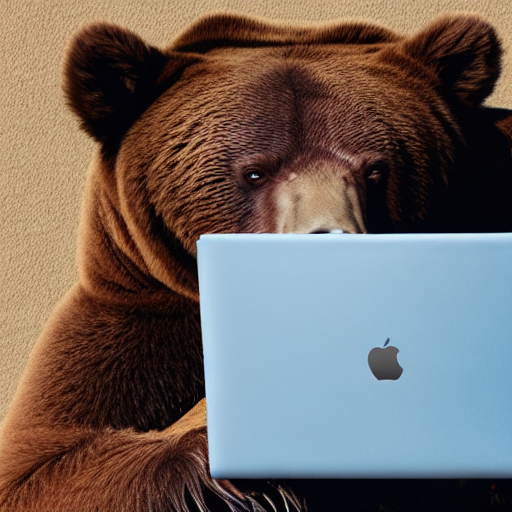} &
    \includegraphics[width=\linewidth,keepaspectratio]{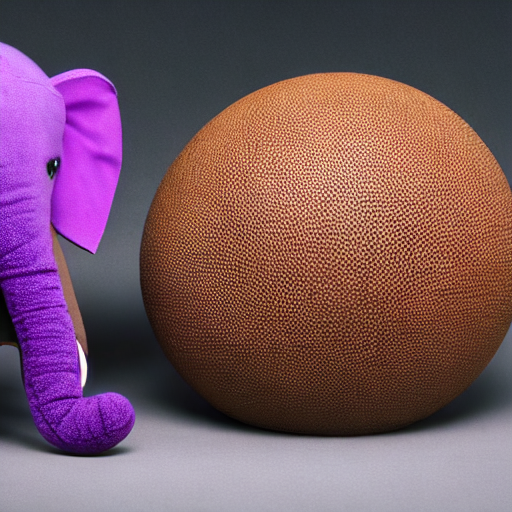} &
    \includegraphics[width=\linewidth,keepaspectratio]{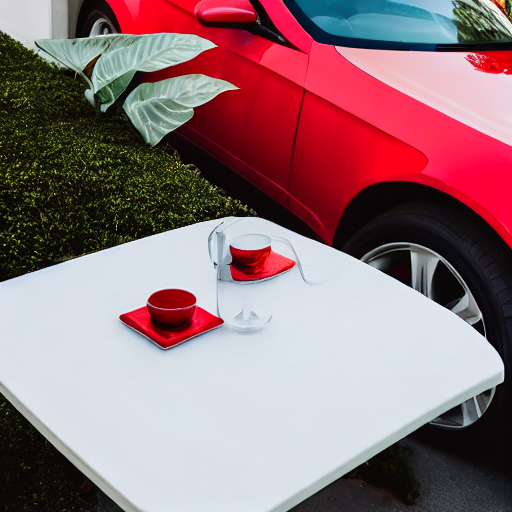} &\includegraphics[width=\linewidth,keepaspectratio]{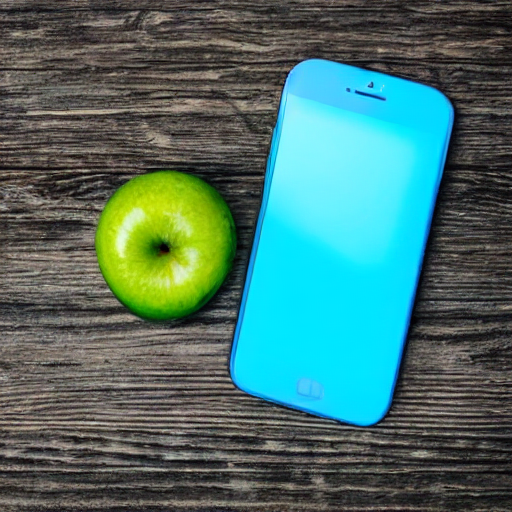} \\
    \bottomrule
    \end{tabular}
    \caption{Comparison of three methods under four colored prompts. \texttt{URGE}, even using SDv1.5, often matches or approaches the effect of the baseline that uses SDXL, and clearly outperforms FK, with results that better match the text and common sense.}
    \label{add_visualization_test}
\end{table}

\begin{figure*}[!t]
    \centering
    \begin{minipage}{0.49\textwidth}
        \centering
        \includegraphics[width=\linewidth,keepaspectratio]{\detokenize{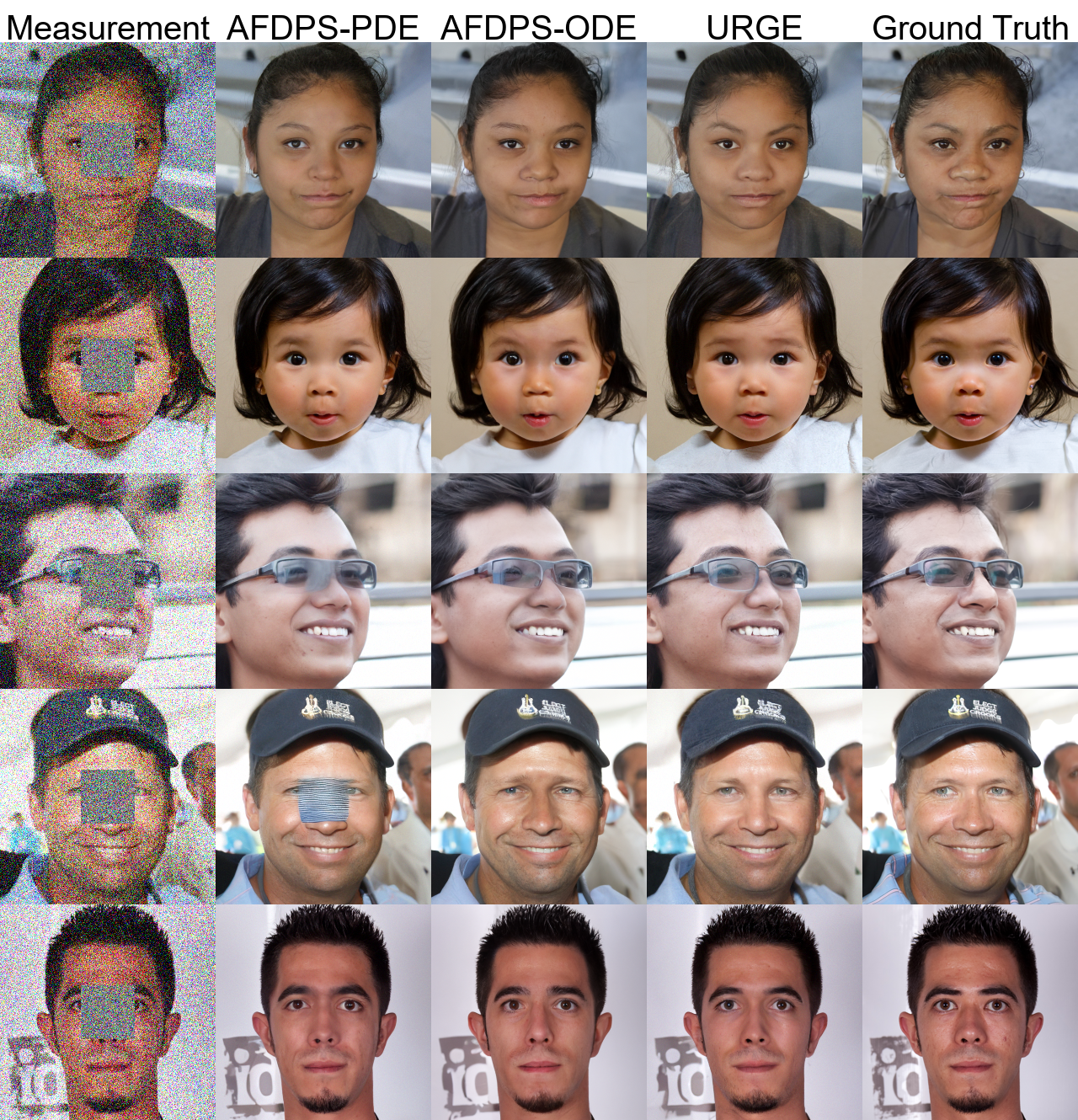}}
    \end{minipage}\hfill
    \begin{minipage}{0.49\textwidth}
        \centering
        \includegraphics[width=\linewidth,keepaspectratio]{\detokenize{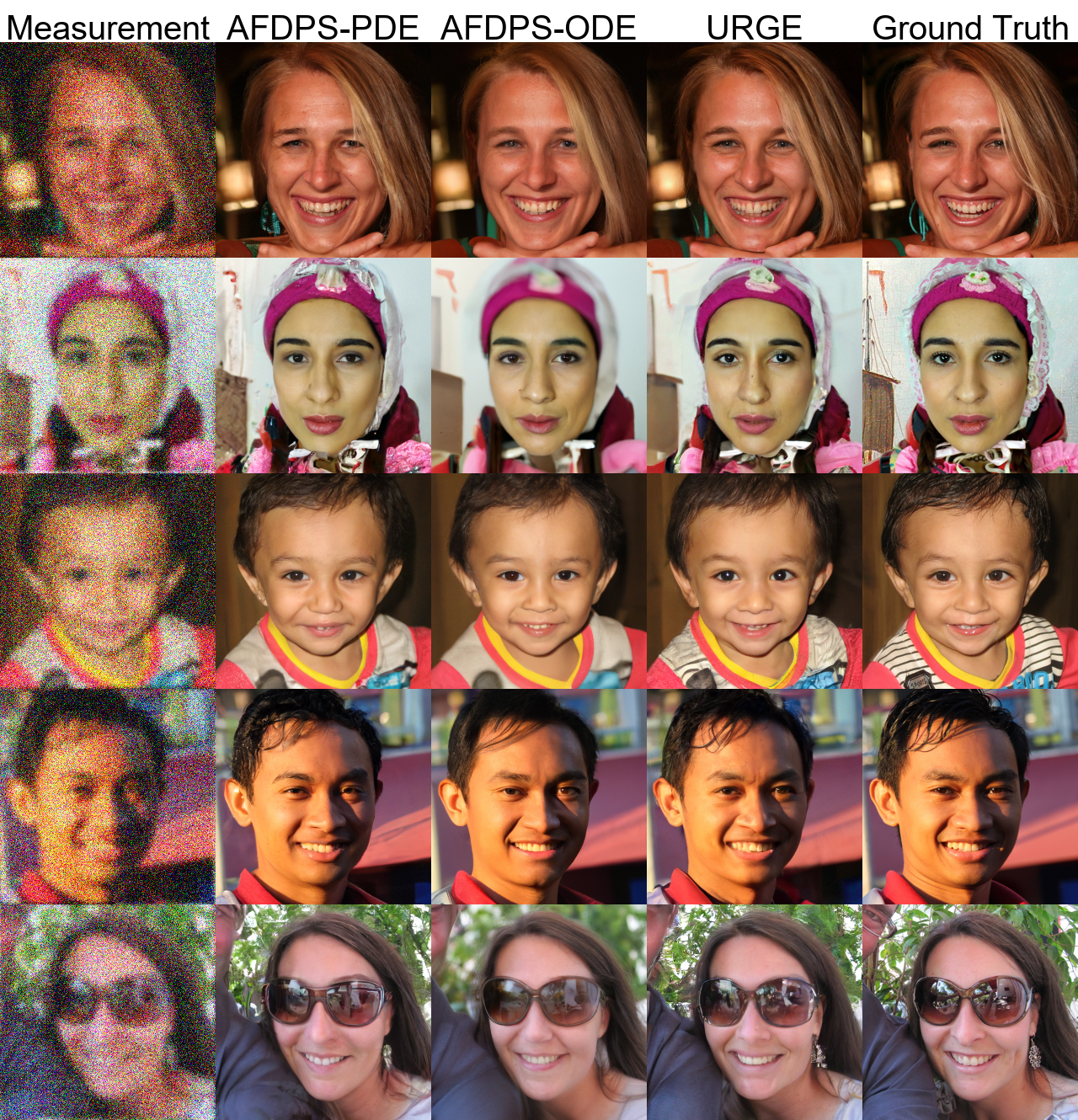}}
    \end{minipage}
    \caption{Additional visual examples for the Box inpainting problem and the Gaussian deblurring problem on FFHQ.}
    \label{inverse_problem1}
\end{figure*}

\begin{figure*}[t]
    \centering
    \begin{minipage}{0.49\textwidth}
        \centering
        \includegraphics[width=\linewidth,keepaspectratio]{\detokenize{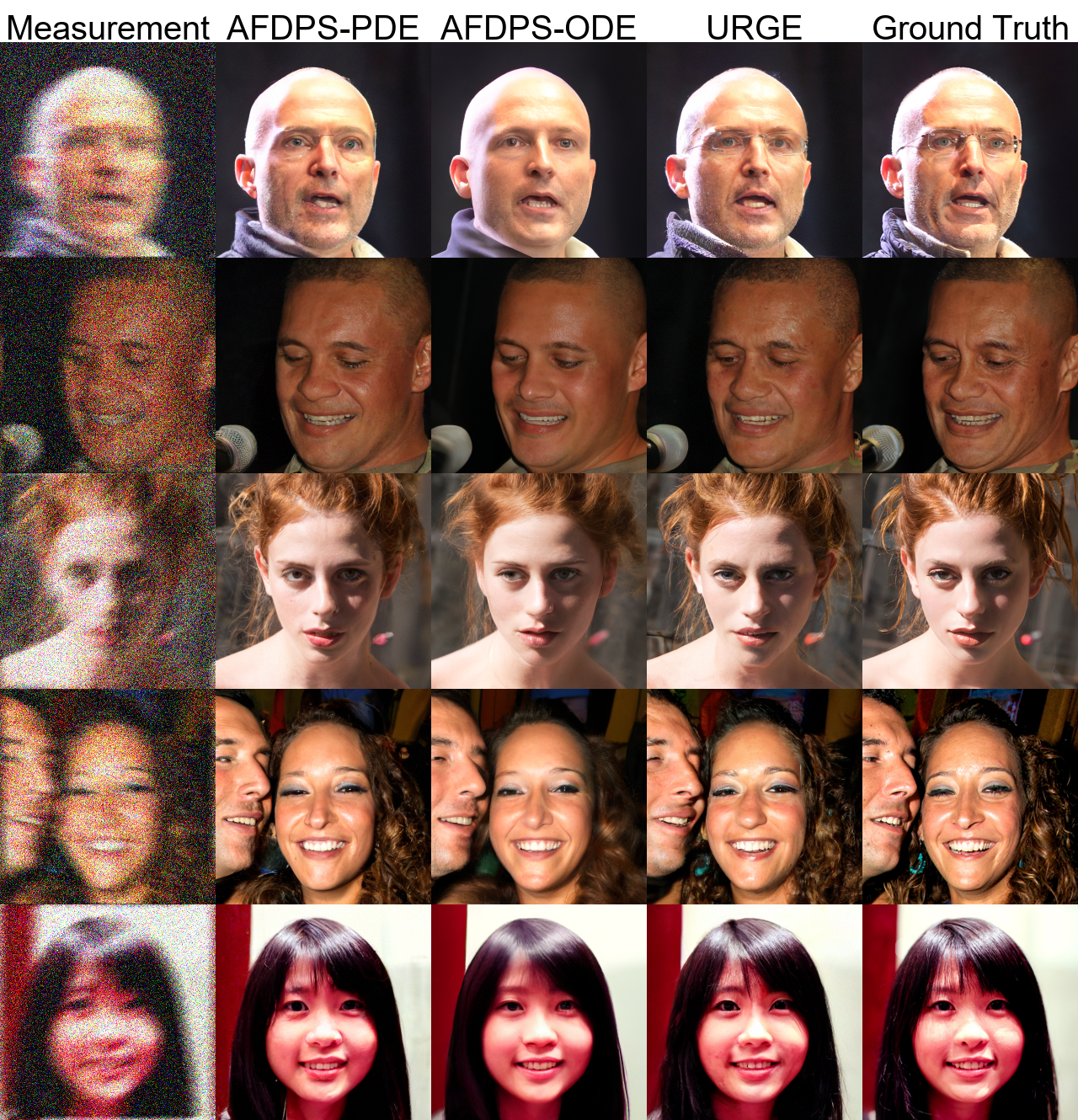}}
    \end{minipage}\hfill
    \begin{minipage}{0.49\textwidth}
        \centering
        \includegraphics[width=\linewidth,keepaspectratio]{\detokenize{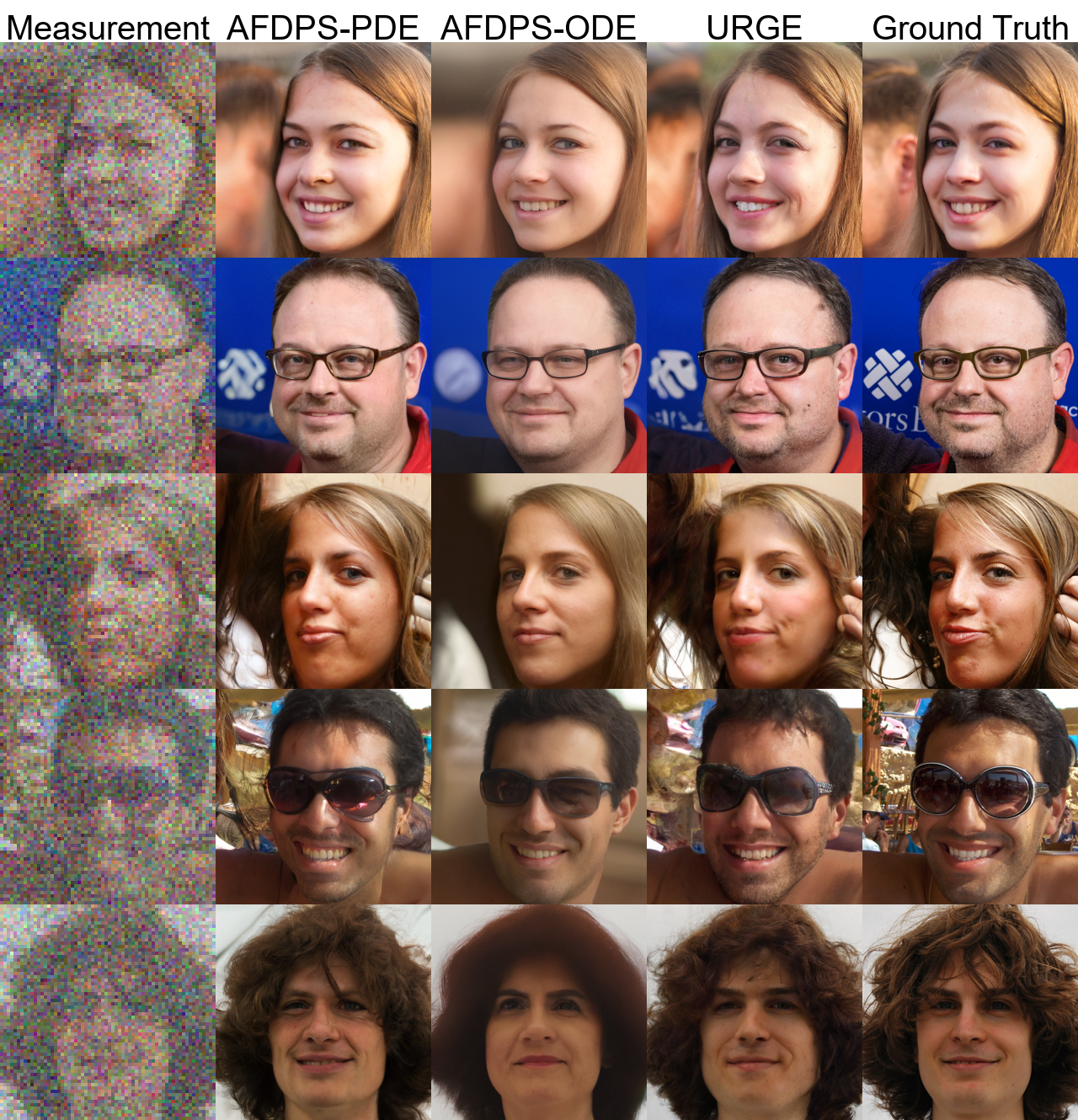}}
    \end{minipage}
    \caption{Additional visual examples for the Motion deblurring problem and the Super resolution problem on FFHQ.}
    \label{inverse_problem2}
\end{figure*}

Similar to additional experiments on other tasks, we examine the scaling of inference time with the number of particles. Figure~\ref{t2i_time_scaling} shows the comparison of the inference time between \texttt{URGE} and \texttt{FK-Steering} using gradient guidance. \texttt{URGE} not only has shorter inference time at all particle counts, but also has slower scaling with increasing number of particles, showing its practical inference time performance. In summary, \texttt{URGE} outperforms \texttt{FK-Steering} in both inference-time efficiency and quantitative alignment metrics.

\end{document}